\newcommand{\scrE}{\mathscr{E}}
\newcommand{\scrD}{\mathscr{D}}
\newcommand{\scrG}{\mathscr{G}}
\newcommand{\supp}{\mathrm{supp}}
\providecommand{\norm}[1]{\|#1\|}
\newcommand{\ReLU}{\mathrm{ReLU}}
\newtheorem*{theorem*}{Theorem}
\newtheorem{setting}{Setting}
\title{Deep Nonparametric Estimation of Intrinsic Data Structures  by Chart Autoencoders: Generalization Error and Robustness}
\author{Hao Liu, Alex Havrilla, Rongjie Lai and Wenjing Liao \thanks{Hao Liu is affiliated with the Math department of Hong Kong Baptist University; Alex Havrilla and Wenjing Liao are affiliated with the School of Math at Georgia Tech; Rongjie Lai is affiliated with the Department of Mathematics at Purdue University. Rongjie Lai and Wenjing Liao are co-corresponding authors. Email: \text{haoliu@hkbu.edu.hk, lairj@purdue.edu, $\{$ahavrilla3, wliao60$\}$@gatech.edu}. This research is partially supported by HKBU 179356, NSFC 12201530, HKRGC ECS 22302123, NSF DMS--2012652, NSF DMS-2145167 and NSF DMS--2134168.}}\date{}
\newcommand{\commentout}[1]{}
\begin{document}

\maketitle

\begin{abstract}

Autoencoders have demonstrated remarkable success in learning  low-dimensional latent features of high-dimensional data across various applications. Assuming that data are sampled near a low-dimensional manifold, we employ chart autoencoders, which encode data into low-dimensional latent features on a collection of charts, preserving the topology and geometry of the data manifold.
Our paper establishes statistical guarantees on the generalization error of chart autoencoders, and we demonstrate their denoising capabilities by considering $n$ noisy training samples, along with their noise-free counterparts, on a $d$-dimensional manifold. By training autoencoders, we show that chart autoencoders can effectively denoise the input data with normal noise. 
We prove that, under proper network architectures, chart autoencoders achieve a squared generalization error in the order of $\displaystyle n^{-\frac{2}{d+2}}\log^4 n$, which depends on the intrinsic dimension of the manifold and only weakly depends on the ambient dimension and noise level. We further extend our theory on data with noise containing both normal and tangential components, where chart autoencoders still exhibit a denoising effect for the normal component.
As a special case, our theory also applies to classical autoencoders, as long as the data manifold has a global parametrization. Our results provide a solid theoretical foundation for the effectiveness of autoencoders, which is further validated through several numerical experiments.
\end{abstract}
\begin{keywords}chart autoencoder, 
    deep learning theory, generalization error, dimension reduction, manifold model
\end{keywords}

\section{Introduction}
High-dimensional data arise in many real-world machine learning problems, presenting new difficulties for both researchers and practitioners. For example, the ImageNet classification task \citep{deng2009imagenet} involves data points with 150,528 dimensions, derived from images of size $224\times224\times3$. Similarly, the MS-COCO object detection task \citep{lin2014microsoft} tackles data points with 921600 dimensions, stemming from images of size $480\times640\times3$. The well-known phenomenon of the curse of dimensionality states that, in many statistical learning and inference tasks, the required sample size for training must grow exponentially with respect to the dimensionality of the data, unless further assumptions are made. Due to this curse, directly working with high-dimensional datasets can result in subpar performance for many machine learning methods.

Fortunately, many real-world data are embedded in a high-dimensional space while exhibiting low-dimensional structures
due to local regularities, global symmetries, or repetitive patterns. 
It has been shown in \citet{pope2020intrinsic} that many benchmark datasets such as MNIST, CIFAR-10, MS-COCO and ImageNet have low intrinsic dimensions. 
In literature, a well-known mathematical model to capture such low-dimensional geometric structures in datasets is the manifold model, where data is assumed to be sampled on or near a low-dimensional manifold \citep{tenenbaum290global,roweis2000nonlinear,fefferman2016testing}.  
A series of works on manifold learning have been effective on nonlinear dimension reduction of data, including IsoMap \citep{tenenbaum290global}, Locally Linear Embedding \citep{roweis2000nonlinear,zhang2006mlle}, Laplacian Eigenmap \citep{belkin2003laplacian}, Diffusion map \citep{coifman2005geometric}, t-SNE \citep{van2008visualizing}, Geometric Multi-Resolution Analysis \citep{allard2012multi, liao2019adaptive} and many others \citep{aamari2019nonasymptotic}. 
As extensions, the noisy manifold setting has been studied in \citep{maggioni2016multiscale,genovese2012minimax,genovese2012manifold,puchkin2022structure}.

In recent years, deep learning has made significant successes on various machine learning tasks with high-dimensional data sets. 
Unlike traditional manifold learning methods which estimate the data manifold first and then perform statistical inference on the manifold, it is a common belief that deep neural networks can automatically capture the low-dimensional structures of the data manifold and utilize them for statistical inference. 
In order to justify the performance of deep neural networks, many mathematical theories have been established on function approximation \citep{hornik1989multilayer,yarotsky2017error,shaham2018provable,schmidt2019deep,shen2019deep,chen2019efficient,cloninger2021deep,montanelli2020error,liu2022benefits,liu2022adaptive}, regression \citep{chui2018deep,chen2019nonparametric,nakada2020adaptive,he2023side}, classification \citep{liu2021besov}, operator learning \citep{liu2022deep} and causal inference on a low-dimensional manifold \citep{chen2020doubly}. 
 In many of these works, a proper network architecture is constructed to approximate certain class of functions supported on a manifold. Regression, classification, operator learning and causal inference are further achieved with the constructed network architecture.  The sample complexity critically depends on the intrinsic dimension of the manifold and only weakly depends on the ambient dimension.

Autoencoder is a special designed deep learning method to effectively learn low-dimensional features of data \citep{bourlard1988auto,kramer1991nonlinear, hinton1993autoencoders, liou2014autoencoder}. 
The conventional autoencoder consists of two subnetworks, an encoder and a decoder. The encoder transforms the high-dimensional input data into a lower-dimensional latent representation, capturing the intrinsic parameters of the data in a compact form. The decoder then maps these latent features to reconstruct the original input in the high-dimensional space.
Inspired by the traditional autoencoder \citep{bengio2006greedy,ranzato2006efficient}, many variants of autoencoder have been proposed.
The most well-known variant of autoencoders is Variational Auto-Encoder (VAE) \citep{kingma2013auto,kingma2019introduction,rezende2014stochastic}, which introduces a prior distribution in the latent space as a regularizer. This regularization ensures a better control of the distribution of latent features and helps to avoid overfitting. Recently, the excess risk of VAE via empirical Bayes estimation was analyzed in \citet{tang2021empirical}.
The Denoising Auto-Encoder (DAE)\citep{vincent2008extracting,bengio2009learning} was proposed to denoise the input data in the process of feature extraction. By intentionally corrupting the input training data by noise, DAE has a denoising effect on the noisy test data and therefore has improved the robustness over the traditional autoencoders.

Although autoencoders have demonstrated great success in feature extraction and dimension reduction, its mathematical and statistical theories are still very limited. More importantly, the aforementioned autoencoders aim to globally map the data manifold to a subset in $\RR^{d}$ where $d$ is the intrinsic dimension of the manifold. However, a global mapping may not always exist for manifolds with nontrivial geometry and topology. To address this issue, \citet{schonsheck2019chart} showed that conventional auto encoders using a flat Eucildean space can not represent manifolds with nontrivial topology, thus introduced a Chart Auto-Encoder (CAE) to capture local latent features. Instead of using a global mapping, CAE uses a collection of open sets to cover the manifold where each set is associated with a local mapping. Their numerical experiments have demonstrated that CAE can preserve the  geometry and topology of data manifolds.
They also obtained an approximation error of CAE in the noise-free setting. Specifically,  \citet{schonsheck2019chart} constructed an encoder $\scrE$ and a decoder $\scrD$ that can optimize the empirical loss with the approximation error satisfying $\sup_{\bv \in \cM}\|\vb-\scrD\circ\scrE(\vb)\|_2 \le \varepsilon$, where $\cM$ represents the low-dimensional manifold. In a recent work \citep{schonsheck2022semi}, CAE has been extended to semi-supervised manifold learning and has demonstrated great performances in differentiating data on nearby but disjoint manifolds.

In this paper, our focus is on CAE and we aim to extend previous results in two ways. Firstly, we establish statistical guarantees on the generalization error for the trained encoders and decoders, which are given by the global minimizer of the empirical loss. Secondly, our analysis considers data sampled on a manifold corrupted by noise, which is a more practical scenario. To the best of our knowledge, this type of analysis has not been conducted previously. 
The generalization error analysis is crucial in understanding the sample complexity of autoencoders. Additionally, the inclusion of noise in the error analysis is significant as it allows us to examine the impact of noise on CAE.

We briefly summerize our results as follows. To demonstrate the robustness of CAE, we allow for data sampled on a manifold corrupted by noise.  
Namely, we assume $n$ pairs of clean and noisy data for training, where the clean data are sampled on a $d$-dimensional manifold, and the noisy data are perturbed from the clean data by noise.  This setting is practically meaningful and has been considered in DAE \citep{vincent2008extracting,bengio2009learning} and multi-fidelity simulations \citep{koutsourelakis2009accurate,biehler2015towards,parussini2017multi}. 
We show that CAE results in an encoder $\widehat{\scrE}$ and a decoder $\widehat{\scrD}$ that have a denoising effect for the normal noise. That is, for any noisy test data $\xb$, the output of $\widehat{\scrD}\circ\widehat{\scrE}(\xb)$ is close to its clean counterpart $\pi(\xb)$, which is the orthogonal projection of $\xb$ onto the manifold $\cM$. Our results, as summarized in Theorem \ref{thm.multi}, can be stated informally as follows:
\begin{theorem*}[Informal]
	Let $\cM$ be a $d$-dimensional compact smooth Riemannian manifold isometrically embedded in $\RR^D$ with reach $\tau>0$. Given a fixed noise level $q \in [0,\tau)$, we consider a training data set $\cS=\{(\bx_i,\vb_i)\}_{i=1}^n$ where the $\vb_i$'s are i.i.d. samples from a probability measure on $\cM$, and  $\xb_i=\vb_i+\wb_i$'s are perturbed from  the $\vb_i$'s with independent random noise $\wb_i \in T_{\vb_i}^{\perp}{\cM}$ (the normal space of $\cM$ at $\vb_i$) whose distribution satisfies $\|\wb\|_2\leq q$. 
 We denote the distribution of all $\xb_i$ by $\gamma$.  
Using proper network architectures,  the encoder ${\scrE}:\RR^D\rightarrow \RR^{O(d)}$ and the decoder ${\scrD}:\RR^{O(d)}\rightarrow \RR^D$, we solve the empirical risk minimization problem in (\ref{eq.loss}) to obtain the global minimizer
  $\widehat{\scrE}$ and $\widehat{\scrD}$. Then the expected generalization error of CAE satisfies
	\begin{align}
		\EE_{\cS}\EE_{\xb\sim \gamma} \| \widehat{\scrD}\circ\widehat{\scrE}(\xb) -\pi(\xb)\|_2^2\leq CD^2\log^3Dn^{-\frac{2}{d+2}}\log^4n
	\end{align}
where $C$ is a constant independent of $n$ and $D$. 
 \end{theorem*}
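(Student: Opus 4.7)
The plan is to establish the generalization bound via an oracle-type decomposition of the excess risk into an approximation error and a stochastic error adapted to the chart-autoencoder architecture. Because the noise $\wb_i$ is normal to $\cM$ at $\vb_i$ with magnitude bounded by $q<\tau$, the reach condition guarantees $\pi(\xb_i)=\vb_i$, so the empirical risk $\frac{1}{n}\sum_i \|\scrD\circ\scrE(\xb_i)-\vb_i\|_2^2$ and the population risk $\EE_{\xb\sim\gamma}\|\scrD\circ\scrE(\xb)-\pi(\xb)\|_2^2$ share the common target $\pi$. A standard decomposition then gives
\begin{align*}
\EE_{\cS}\EE_{\xb\sim\gamma}\|\widehat{\scrD}\circ\widehat{\scrE}(\xb)-\pi(\xb)\|_2^2 \;\lesssim\; \underbrace{\inf_{(\scrE,\scrD)\in\mathcal{F}}\EE_{\xb}\|\scrD\circ\scrE(\xb)-\pi(\xb)\|_2^2}_{\text{approximation}} \;+\; \underbrace{\mathrm{Stoch}(\mathcal{F},n)}_{\text{stochastic}},
\end{align*}
which reduces the theorem to bounding the two terms separately over the admissible network class $\mathcal{F}$.

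For the approximation term I would explicitly construct an $(\scrE^\ast,\scrD^\ast)\in\mathcal{F}$ that mimics an atlas on $\cM$. Using compactness and the reach $\tau$, cover the tubular neighborhood $\cM_q=\{\xb:\mathrm{dist}(\xb,\cM)\leq q\}$ by finitely many preimages $\{U_k\}$ under $\pi$ of geodesic balls on which $\cM$ admits a smooth local parametrization $\phi_k:\Omega_k\subset\RR^d\to\cM$. The encoder then consists of a chart-selector subnetwork approximating the indicators $\mathbf{1}_{U_k}$ (implementable in ReLU at cost polynomial in $D$, since each $U_k$ is defined by $O(1)$ affine projections plus a normal-distance constraint controlled by $\tau$) together with coordinate subnetworks approximating $\phi_k^{-1}\circ\pi$ on $U_k$; the decoder is the corresponding collection of subnetworks approximating each $\phi_k$ on $\Omega_k$, combined with the selector to route to the right branch. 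Since $\phi_k,\phi_k^{-1}$ and $\pi$ are smooth with derivatives controlled by $\tau$, Yarotsky-style ReLU approximation bounds on Euclidean domains of dimension $d$ give a uniform error of order $N^{-1/d}$ per branch, where $N$ is the subnetwork parameter count; the ambient $D$ enters only through the input layer and the affine representation of $\pi$, producing the $D^2\log^3 D$ prefactor.

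For the stochastic term I would follow standard empirical-process machinery: bound the pseudo-dimension (or $L^\infty$ covering number) of $\mathcal{F}$ by its total parameter count, use a uniform deviation inequality for squared loss that is bounded thanks to a truncation layer enforcing $\|\scrD\circ\scrE(\xb)\|_2 = O(\mathrm{diam}(\cM))$, and obtain $\mathrm{Stoch}(\mathcal{F},n)\lesssim N\log N/n$ up to polylog factors. Balancing the two contributions $N^{-2/d}$ and $N\log N/n$ via $N\asymp n^{d/(d+2)}$ then yields the claimed rate $n^{-2/(d+2)}\log^4 n$.

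The main technical obstacle, I expect, is the joint control of the chart selector and the local coordinate branches. Unlike a single-chart autoencoder, the composition $\scrD\circ\scrE$ is piecewise with pieces chosen by a learned classifier, so approximation errors can concentrate at chart boundaries where the selector may misroute. I plan to handle this by (i) thickening each $U_k$ to a slightly larger $\widetilde U_k$ so the selector need only distinguish sets separated by a positive margin, (ii) subordinating a partition of unity to $\{\widetilde U_k\}$ so that even a soft misroute contributes only $O(\varepsilon)$ additional error, and (iii) using the reach to make the nearest-chart assignment Lipschitz in $\xb$. A secondary delicate point is noise absorption: the local encoder essentially projects to tangent coordinates, so normal noise vanishes after the encoder up to curvature terms of size $O(q^2/\tau)$, and the decoder returns a point on $\cM$ close to $\pi(\xb)$ regardless of $\wb$; carefully tracking these second-order terms is what produces the $q$-independence of the final rate.
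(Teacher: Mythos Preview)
Your proposal is correct and follows essentially the same route as the paper: an oracle/bias--variance decomposition, an atlas-based construction with a partition of unity for the approximation term (the paper encodes the partition-of-unity values $\rho_j\circ\pi(\xb)$ directly into the latent vector rather than using a separate hard chart-selector, but this is exactly your soft-routing fix (ii)), and covering-number bounds for the stochastic term, balanced via $\varepsilon\asymp n^{-1/(d+2)}$. One minor simplification relative to what you anticipate: because the noise is \emph{exactly} normal and $q<\tau$, the identity $\pi(\xb)=\vb$ holds exactly, so no second-order curvature corrections $O(q^2/\tau)$ appear in the noise-absorption step.
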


This theorem highlights the robustness and effectiveness of CAE in learning the underlying data manifold. More specifically, with increasing sample size $n$, the generalization error converges to zero at a fast rate and its exponent depends only on the intrinsic dimension $d$, not the ambient dimension $D$. The theorem also shows that autoencoders have a strong denoising ability when dealing with noise on the normal directions, as the error approaches zero as $n$ increases. The latent feature in every chart has a dimension of $d$, with the number of charts being dependent on the complexity of the manifold $\cM$. In special cases where the manifold is globally homeomorphic to a subset of $\RR^d$, this result can be applied to conventional autoencoders as described in Section \ref{sec.single}.

Besides the case of normal noise in the aforementioned theorem, we also consider a general setting where the noise contains both normal and tangential components. In Theorem \ref{thm.gaussian}, we prove that CAE can denoise the normal component of the noise.  Specifically, the squared generalization error is upper bounded by $$C(D^2\log^3D) n^{-\frac{2}{2+d}}\log^4 n +C_1\sigma^2,$$ where $\sigma^2$ is the second moment of the tangential component of the noise. Our result is consistent with the existing works in manifold learning \citep{genovese2012minimax,puchkin2022structure} which demonstrates that denoising is possible for normal noise but impossible for tangential noise. A detailed explanation is given at the end of Section \ref{sec.multi}.

The rest of the paper is organized as follows: In Section \ref{sec.preliminary}, we introduce background related to manifolds and neural networks to be used in this paper. In Section \ref{sec.mainresults}, we present our problem setting and main results including single chart case, multi-chart case and extension to general noise. We defer theoretical proof in Section \ref{sec.proofofmain}. We validate our network architectures and theories by several experiments in Section \ref{sec.experiments}. We conclude the paper in Section \ref{sec.conclusion}

\noindent\textbf{Notation}: We use lower-case letters to denote scalars, lower-case bold letters to denote vectors, upper-case letters to denote matrices and constants, calligraphic letters to denote manifolds, sets and function classes.
For a vector valued function $\fb=[f_1,...,f_d]^{\top}$ defined on $\Omega$, we let $\|\fb\|_{L^{\infty,\infty}} := \sup_{\xb\in \Omega}\max_{k} |f_k(\xb)|$.

\section{Preliminary}
\label{sec.preliminary}

In this section, we briefly introduce the preliminaries on manifolds and neural networks to be used in this paper.

\subsection{Manifolds}
We first introduce some definitions and notations about manifolds. More details can be found in \cite{tu2011manifolds,lee2006riemannian}. Let $\cM$ be a $d$-dimensional Riemannian manifold isometrically embedded in $\RR^D$. A chart of $\cM$ defines a local  neighborhood and coordinates on $\cM$.
\begin{definition}
	A chart of $\cM$ is a pair $(U,\phi)$ where $U\subset \cM$ is an open set, and $\phi:U\rightarrow \RR^d$ is a homeomorphism, i.e., $\phi$ is bijective and both $\phi$ and $\phi^{-1}$ are continuous.
	A $C^s$ atlas of $\cM$ is a collection of charts $\{(U_k,\phi_k)\}_{k\in \cK}$ which satisfies $\cup_{\alpha\in\cK} U_k=\cM$, and are pairwise $C^s$ compatible:
	$$
	\phi_{k_1}\circ \phi_{k_2}^{-1}: \phi_{k_2}(U_{k_1}\cap U_{k_2}) \rightarrow \phi_{k_1}(U_{k_1}\cap U_{k_2}) \quad \mbox{ and } \quad \phi_{k_2}\circ\phi_{k_1}^{-1}: \phi_{k_1}(U_{k_1}\cap U_{k_2}) \rightarrow \phi_{k_2}(U_{k_1}\cap U_{k_2})
	$$
	are both $C^s$ for any $k_1,k_2\in \cK$. An atlas is called finite if it contains finite many charts. Here $C^s$ denotes the space of functions with continuous derivatives up to $s$ order.
	
\end{definition}
A smooth manifold is a manifold with a $C^{\infty}$ atlas. Commonly used smooth manifolds include the Euclidean space, torus, sphere and Grassmannian. $C^s$ functions on a smooth manifold $\cM$ can be defined as follows:
\begin{definition}[$C^s$ functions on a smooth manifold]
	Let $\cM$ be a smooth manifold and $f:\cM\rightarrow \RR$ be a function on $\cM$. The function $f$ is a $C^s$ function on $\cM$ if for every chart $(U,\phi)$ of $\cM$, the function $f\circ\phi^{-1}: \phi(U)\rightarrow \RR$ is a $C^s$ function.
\end{definition}
We next define the $C^{\infty}$ partition of unity of $\cM$. 
\begin{definition}[Partition of unity]
	A $C^{\infty}$ partition of unity of a manifold $\cM$ is a collection of $C^{\infty}$ functions $\{\rho_k\}_{k\in \cK} $ with $\rho_k:\cM\rightarrow [0,1]$ such that for any $\xb\in \cM$,
	\begin{enumerate}
		\item there is a neighbourhood of $\xb$ where only a finite number of the functions in $\{\rho_k\}_{k\in \cK} $  are nonzero, and
		\item $\sum_{k\in \cK} \rho_k(\xb)=1$.
	\end{enumerate}
\end{definition}

We say an open cover is locally finite if every $\vb\in\cM$ has a neighborhood that intersects with a finite number of sets in the cover. It is well-known that for any locally finite cover of $\cM$, a $C^{\infty}$ partition of unity that subordinates to this cover always exists \cite[Chapter 2, Theorem 15]{spivak1975comprehensive}.

Reach is an important quantity of a manifold that  is related to curvature. For any $\xb\in\RR^D$, we write $d(\xb,\cM):=\inf_{\vb\in\cM}\|\xb-\vb\|_2$ the distance from $\xb$ to $\cM$. Reach is defined as follows:
\begin{definition}[Reach \citep{federer1959curvature,niyogi2008finding}]
	\label{def.reach}The reach of $\cM$ is defined as
\begin{align}
	\tau=\inf_{\vb\in\cM} \inf_{\xb\in G} \|\xb-\vb\|_2.
\end{align}
where $G= \left\{\xb\in \RR^D: \exists \mbox{ distinct } \pb,\qb\in \cM \mbox{ such that } d(\xb,\cM)=\|\xb-\pb\|_2=\|\xb-\qb\|_2\right\}$ is the medial axis of $\cM$.
\end{definition}
Roughly speaking, a manifold with a small reach can \lq\lq bend\rq\rq\ faster than that with a large reach.  For example, a plane has a reach equal to infinity. A hyper-sphere with radius $r$ has a reach $r$.  We illustrate manifolds with a large reach and small reach in Figure \ref{fig.reach}.
\begin{figure}
	\centering
	\includegraphics[width=0.6\textwidth]{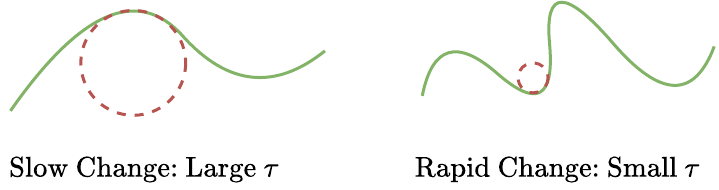}
	\caption{Illustration of manifolds with large and small reach.}
	\label{fig.reach}
\end{figure}

We denote the reach of $\cM$ by $\tau$, the tangent plane of $\cM$ at $\vb\in \cM$ by $T_{\vb}\cM$ and its orthogonal space by $T_\vb^{\perp}{\cM}$. Define the $q$--neighborhood of $\cM$ by
\begin{align}
	\cM(q)=\{\xb\in \RR^{D}: \inf_{\vb\in\cM} \|\xb-\vb\|_2\leq q\}.
\end{align} 
When $q<\tau$, by the property of reach, every $\xb\in \cM(q)$ has a unique decomposition 
\begin{align}
	\xb= \pi(\xb) +\wb
\end{align}
where $\pi(\xb)= \argmin_{\vb\in\cM } \|\vb-\xb\|_2$  and $\wb\in T_\vb^{\perp}{\cM}$ \citep{niyogi2008finding,cloninger2021deep}. 

\subsection{Neural networks}
In this paper, we consider feedforward neural networks (FNN) with the rectified linear unit $\ReLU(a)=\max\{a,0\}$. 
An FNN with $L$ layers is defined as
\begin{align}
	f(\xb)=W_L\cdot\ReLU\left( W_{L-1}\cdots \ReLU(W_1\xb+\bbb_1)+ \cdots +\bbb_{L-1}\right)+\bbb_L,
	\label{eq.ReLU}
\end{align}
where the $W_i$'s are weight matrices, the $\bbb_i$'s are bias vectors, and $\ReLU$ is applied element-wisely. We define a class of neural networks with inputs in $\RR^D$ and outputs in $\RR^d$ as
\begin{align*}
	\cF(D,d;L,p,K,\kappa,R) = &\{f:\RR^D\rightarrow\RR^d ~|~ f\mbox{ has the form of (\ref{eq.ReLU}) with } L \mbox{ layers and width bounded by } p, \\
	&\qquad \|f\|_{\infty}\leq R, \sum_{i=1}^L \|W_i\|_0+\|\bbb_i\|_0\leq K,\\
	&\qquad \|W_i\|_{\infty, \infty}\leq \kappa, \|\bbb_i\|_{\infty}\leq \kappa \mbox{ for } i=1,...,L\},
\end{align*}
where $\norm{H}_{\infty, \infty} = \max_{i, j} |H_{ij}|$ for a matrix $H$ and $\|\cdot\|_0$ denotes the number of non-zero elements of its argument. Above, the width of a network is the largest output dimension among all layers.

\section{Main results}
\label{sec.mainresults}

\subsection{Problem setup for bounded normal noise}
We consider the noisy setting where training data contain $n$ pairs of clean and noisy data: 

\begin{setting}\label{setting}
Let  $\cM$ be a $d$-dimensional compact smooth Riemannian manifold isometrically embedded in $\RR^D$ with reach $\tau$. Given a fixed noise level $q \in [0,\tau)$, we consider a training data set $\cS=\{(\bx_i,\vb_i)\}_{i=1}^n$ where the $\vb_i$'s are i.i.d. samples from a probability measure on $\cM$, and  the $\xb_i$'s are perturbed from  the $\vb_i$'s according to the model such that
	\begin{align}
		\xb=\vb+\wb
	\end{align}
	where $\wb \in T_\vb^{\perp}{\cM}$ (the normal space of $\cM$ at $\vb$ ) is a random vector satisfying $\|\wb\|_2\leq q$. 
 We denote the distribution of $\xb$ by $\gamma$. In particular, we have $\xb_i=\vb_i+\wb_i$, where the $\wb_i$'s are  independent.

\end{setting}

Setting \ref{setting} has two important implications:
\begin{enumerate}[(1)]
	\item[(i)]  $\cM$ is bounded: there exists a constant $B>0$ such that for any $\xb\in\cM(q)$,
	\begin{align}
		\|\xb\|_{\infty}\leq B.
	\end{align}
	\item[(ii)]  $\cM$ has a positive reach \citep[Proposition 14]{thale200850}, denoted by $\tau>0$. 
\end{enumerate}

The $\vb_i$'s in Setting \ref{setting} represent the noise-free training data, and the $\xb_i$'s are the noisy data perturbed by the normal noise $\wb_i$'s.
This noisy setting shares some similarity to the Denoising Auto-Encoder (DAE) \citep{vincent2008extracting,bengio2009learning}  and multifidelity simulations \citep{koutsourelakis2009accurate,biehler2015towards,parussini2017multi}.  DAE is widely used in image processing to train autoencoders with a denoising effect. In the DAE setting, one has clean samples, and then manually adds noise to the clean samples to obtain noisy samples.
                During training, the noisy samples are taken as the inputs and the clean samples are the outputs, such that the autoencoder is trained to denoise the noisy samples. 
In uncertainty quantification and prediction of random fields, it is expensive to simulate high-fidelity solutions. A popular strategy is to use a cheaper low-fidelity simulation as a surrogate and then a correction step is applied to modify the surrogate towards high-fidelity data. The correction operations are determined using both low-fidelity and high-fidelity data. Such a strategy is similar to our Setting \ref{setting}: one can take the high-fidelity data as noise-free data and low-fidelity data as  noisy data.

We first consider normal noise on the manifold $\cM$ in Setting \ref{setting}.
Given a training data set  $\cS=\{(\xb_i,\vb_i)\}_{i=1}^n$, our goal is to theoretically analyze how the manifold structure of data can be learned based an encoder $\widehat{\scrE}: \cM(q)\rightarrow \RR^{O(d)}$ and the corresponding decoder $\widehat{\scrD}:\RR^{O(d)}\rightarrow \RR^D$ by minimizing the empirical mean squared loss
\begin{align}
	(\widehat{\scrD},\widehat{\scrE})=\argmin_{\scrD\in \cF_{\rm NN}^{\scrD},\scrE\in \cF_{\rm NN}^{\scrE}}\frac{1}{n}\sum_{i=1}^n \|\vb_i-\scrD\circ \scrE(\xb_i)\|_{2}^2,
	\label{eq.loss}
\end{align}
for properly designed network architectures $\cF_{\rm NN}^{\scrE}$ and $\cF_{\rm NN}^{\scrD}$.
We evaluate the performance of $(\widehat{\scrD},\widehat{\scrE})$ 
through the squared generalization error 
\begin{align}
	\EE_{\cS}\EE_{\xb\sim \gamma} \| \widehat{\scrD}\circ\widehat{\scrE}(\xb)- \pi(\xb)\|_2^2.
	\label{eq.error}	
\end{align}
at a noisy test point $\xb$ sampled from the same distribution $\gamma$ as the training data.
This paper establishes upper bounds on the squared generalization error of CAE with properly chosen network architectures. We first consider the single-chart case in Section \ref{sec.single} where $\cM$ is globally homeomorphic to a subset of $\RR^d$. The general multi-chart case is studied in Section \ref{sec.multi}. In Section \ref{sec.gaussian}, we will study a more general setting that allows high-dimensional  noise in the ambient space, under Setting \ref{setting.gaussian}.

\subsection{Single-chart case} \label{sec.single}

We start from a simple case where $\cM$ has a global low-dimensional parametrization. In other words, data on $\cM$ can be encoded to a $d$-dimensional latent feature through a global mapping.
\begin{assumption}[Single--chart case]\label{assum.single}
	Assume $\cM$ has a global $d$--dimensional parameterization:
		There exist $\Lambda>0$ and smooth maps $\fb:\cM\rightarrow [-\Lambda,\Lambda]^d$ and $\gb:[-\Lambda,\Lambda]^d\rightarrow \cM$ such that 		\begin{align}
			\vb=\gb\circ \fb(\vb).
		\end{align}
		for any $\vb\in\cM$. 

\end{assumption}

Assumption \ref{assum.single} implies that there exists an atlas of $\cM$ consisting of only one chart $(\cM, \fb)$. This single-chart case serves as the mathematical model of autoencoders where one can learn a global low-dimensional representation of data without losing much information.

Our first result gives an upper bound on the generalization error (\ref{eq.error}) with properly chosen network architectures.
\begin{theorem}\label{thm.single}
	In Setting \ref{setting}, suppose Assumption \ref{assum.single} holds. Let $\widehat{\scrE},\widehat{\scrD}$ be a global minimizer in (\ref{eq.loss}) with the network classes $\cF_{\rm NN}^{\scrE}=\cF(D,d;L_\scrE,p_\scrE,K_\scrE,\kappa_\scrE,R_\scrE)$ and $\cF_{\rm NN}^{\scrD}=\cF(d,D;L_\scrD,p_\scrD,K_\scrD,\kappa_\scrD,R_\scrD)$ where
	\begin{gather}
		L_{\scrE}=O\left(\log^2 n+\log D\right), \ p_{\scrE}=O\left(D n^{\frac{d}{d+2}}\right), \ K_{\scrE}=O\left(Dn^{\frac{d}{d+2}}\log^2 n+ D\log D\right), \nonumber\\
		\kappa_{\scrE}=O\left(n^{\frac{2}{d+2}}\right),\ R_{\scrE}=\Lambda,\\
		L_{\scrD}=O\left(\log n\right), \ p_{\scrD}=O\left(n^{\frac{d}{d+2}}\right), \ K_{\scrD}=O\left(n^{\frac{d}{d+2}}\log^2 n\right),\ \kappa_{\scrD}=O\left(n^{\frac{1}{d+2}}\right), \ R_{\scrD}=B.
	\end{gather}  
Then, we have the following upper bound of the squared generalization error
\begin{align}
	\EE_{\cS}\EE_{\xb\sim \gamma} \| \widehat{\scrD}\circ\widehat{\scrE}(\xb) -\pi(\xb)\|_2^2\leq C\left(D^2\log^2 D\right)n^{-\frac{2}{d+2}}\log^4 n
\end{align}
for some constant $C$ depending on $d,B,\tau,q$, the Lipschitz constant of $\fb$ and $\gb$, and the volume of $\cM$.
The constants hidden in the $O$ depend on $d,B,\Lambda,\tau,q$, the volume of $\cM$ and the Lipschitz constant of $\fb$ and $\gb$.
\end{theorem}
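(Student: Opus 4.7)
The plan is to decompose the squared generalization error by the standard oracle inequality for empirical risk minimization. For any fixed pair $(\scrE^{\star},\scrD^{\star})$ inside the prescribed network classes, one can write
\begin{align*}
\EE_{\cS}\EE_{\xb\sim\gamma}\|\widehat{\scrD}\!\circ\!\widehat{\scrE}(\xb)-\pi(\xb)\|_2^2 \;\leq\; 2\,\EE_{\xb\sim\gamma}\|\scrD^{\star}\!\circ\!\scrE^{\star}(\xb)-\pi(\xb)\|_2^2 \;+\; (\text{stochastic error}),
\end{align*}
which splits the proof into an approximation step and a concentration step. Assumption \ref{assum.single} singles out the ideal oracle $\scrE^{\star}=\fb\circ\pi$ and $\scrD^{\star}=\gb$, for which $\gb\circ\fb(\pi(\xb))=\pi(\xb)$ exactly, so the first term vanishes when $(\scrE^{\star},\scrD^{\star})$ are realized exactly. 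The rest of the argument is quantitative: I build network realizations of $\scrE^{\star}$ and $\scrD^{\star}$, and control the gap between empirical and population loss over the two function classes.

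For the approximation step, I would apply a Yarotsky-style ReLU approximation theorem twice. The decoder $\gb:[-\Lambda,\Lambda]^d\to\RR^D$ is smooth, so each of its $D$ coordinates admits a ReLU approximant with $\mathrm{sup}$-norm error $O(N_{\scrD}^{-2/d}\log^2 N_{\scrD})$ using a subnetwork of depth $O(\log n)$ and $O(N_{\scrD})$ nonzero weights; placing $D$ parallel copies gives a decoder in $\cF_{\rm NN}^{\scrD}$ with matching hyperparameters. The encoder must realize $\fb\circ\pi$ on the tube $\cM(q)$. Because $q<\tau$, the projection $\pi$ is well-defined and $\tau/(\tau-q)$-Lipschitz, so $\fb\circ\pi$ is a Lipschitz map on $\cM(q)$ whose intrinsic complexity is $d$-dimensional. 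My construction uses an initial sparse linear layer that identifies local tangent coordinates (contributing the additive $D\log D$ to $K_{\scrE}$ and the factor $D$ to $p_{\scrE}$), followed by a low-dimensional ReLU subnetwork approximating the smooth map $\fb$. Lipschitz continuity of $\gb$ converts the encoder error into a decoder output error, producing a combined squared approximation error of order $D^2\log^2 D\cdot N^{-2/d}$ where $N$ is the common intrinsic network size.

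For the stochastic step, I would invoke covering-number bounds of the form $\log\cN(\varepsilon,\cF_{\rm NN})\lesssim K\log(Lp\kappa/\varepsilon)$ applied to the composed class $\cF_{\rm NN}^{\scrD}\circ\cF_{\rm NN}^{\scrE}$. The loss $\|\vb-\scrD\!\circ\!\scrE(\xb)\|_2^2$ is uniformly bounded by $O(DB^2)$ on the support, so a Bernstein-type concentration inequality (or an offset Rademacher argument) yields excess empirical risk of order $D^2\log D\cdot(K_{\scrE}+K_{\scrD})\log(Lp\kappa)\log n/n$. Balancing the approximation error $D^2\log^2 D\cdot N^{-2/d}$ against the stochastic error $D^2\log^3 D\cdot N\log^3 n/n$ by setting $N\asymp n^{d/(d+2)}$ recovers exactly the prescribed hyperparameter sizes in the theorem and yields the final rate $D^2\log^2 D\cdot n^{-2/(d+2)}\log^4 n$.

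The main obstacle is the construction and analysis of the oracle encoder. The decoder reduces to vector-valued smooth function approximation on a Euclidean cube, which is classical; the encoder's target, however, is defined on the tube $\cM(q)\subset\RR^D$, and the bound must be uniform over $\xb\in\cM(q)$, not merely over $\xb\in\cM$. This forces one to carry the projection $\pi$ through the construction and to control how sparse linear filters on $\RR^D$ identify a local atlas of tangent charts while remaining within the prescribed width, sparsity, and weight magnitudes. Tracking the resulting powers of $D$ and $\log D$, and ensuring the Lipschitz constant of the whole composed estimator does not blow up when it meets the $\tau/(\tau-q)$-Lipschitz projection, is the most delicate bookkeeping in the proof.
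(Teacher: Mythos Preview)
Your proposal is correct and follows essentially the same approach as the paper: the same oracle choice $\scrE^\star=\fb\circ\pi$, $\scrD^\star=\gb$, the same bias/variance decomposition with the factor-of-two offset trick, Yarotsky-type approximation for the decoder on $[-\Lambda,\Lambda]^d$ and manifold-tube approximation (in the paper via \citet{cloninger2021deep}) for the encoder, covering-number/Bernstein control of the stochastic term, and the same balancing $\varepsilon\asymp n^{-1/(d+2)}$. The only minor discrepancy is that the paper treats $\gb$ as merely Lipschitz (using a Kirszbraun extension before invoking Yarotsky), whereas you invoke smoothness of $\gb$; this does not affect the final rate since the encoder is the bottleneck.
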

We defer the detailed proof of Theorem \ref{thm.single} in Section \ref{proof.single}. Theorem \ref{thm.single} has several implications: 
\begin{itemize}
	\item[(i)] {\bf Fast convergence of the squared generalization error and the denoising effect:} When the network architectures are properly set, we can learn an autoencoder and the corresponding decoder so that the squared generalization error converges at a fast rate in the order of $\displaystyle n^{-\frac{2}{d+2}}\log^4 n$. Such a rate crucially depends on the intrinsic dimension $d$ instead of the ambient dimension $D$, and therefore mitigates the curse of ambient space dimensionality. In addition, the error bound also suggests that the autoencoder has a denoising effect as the network output $\widehat{\scrD}\circ\widehat{\scrE}(\xb) $ converges to its clean counterpart $\pi(\xb)$ as $n$ increases.
	\item[(ii)] {\bf Geometric representation of data:}  When the manifold $\cM$ has a global $d$-dimensional parameterization, the autoencoder $\widehat{\scrE}$ outputs a $d$-dimensional latent feature, which serves as a geometric representation of the high-dimensional input $\xb$.

	\item[(iii)] {\bf Network size:} The network size critically depends on $d$, and weakly depends on $D$.
\end{itemize}

\begin{remark}
	We remark that the constant hidden in the upper bound in Theorem \ref{thm.single} (and for the constant in Theorem \ref{thm.multi}) depends on $1/(\tau-q)$: as $q$ gets closer to $\tau$, the constant factor becomes larger. This is easy to understand: if $q$ is very close to $\tau$, some data are close to the medial axis of $\cM$ (see Definition \ref{def.reach}). Assume $\xb=\vb+\wb$ for $\vb\in\cM$ and $\wb\in T_\vb^{\perp}{\cM}$ with $\|\wb\|_2$ being very close to $\tau$. Then there exists $\vb'(\neq \vb)$ on $\cM$ such that $\|\vb'-\xb\|_2$ is very close to $\|\vb-\xb\|_2$. Thus, a small perturbation in $\xb$ might lead to a big change in $\pi(\xb)$, which makes the projection unstable.
\end{remark}

We briefly introduce the proof idea of Theorem \ref{thm.single} in four steps. 

	\noindent{\bf Step 1: Decomposing the error.} We decompose the squared generalization error (\ref{eq.error}) into a squared bias term and a variance term. The bias term captures the network's approximation error and the variance term captures the stochastic error.
	
	\noindent{\bf Step 2: Bounding the bias term.} To derive an upper bound of the bias term, we first define the oracle encoder and decoder. According to Assumption \ref{assum.single}, $\fb$ is an encoder and $\gb$ is a decoder of $\cM$. However, since the input data is in $\cM(q)$, we cannot directly use $\fb$ as the encoder since $\fb$ is defined on $\cM$. Utilizing the projection operator $\pi$, we define the oracle encoder as $\scrE=\fb\circ\pi$, and simply define the oracle decoder as $\scrD=\gb$. Based on \citet{cloninger2021deep}, we design the  encoder network
	to approximate the oracle encoder $\fb\circ\pi$. The decoder network 
	is designed to approximate $\gb$. Our network architecture is illustrated in Figure \ref{fig.network.thm1}. Based on our network  construction, we derive an upper bound of the bias term showing that our encoder and decoder networks can approximate the oracles $\scrE$ and $\scrD$ respectively to an arbitrary accuracy $\varepsilon$ (see Lemma \ref{lem.single.approx}). 
	
	\noindent {\bf Step 3: Bounding the variance term.} The upper bound for the variance term is derived using metric entropy arguments \cite{vaart1996weak,gyorfi2002distribution}, which depends on the network size (see Lemma \ref{lem.single.T2}).
	
	\noindent {\bf Step 4: Putting the upper bound for both terms together.} We finally put the upper bounds of the squared bias and variance term together. After balancing the approximation error and the network size, we can prove Theorem \ref{thm.single}.

\begin{figure}
	\centering
		\includegraphics[scale=0.6]{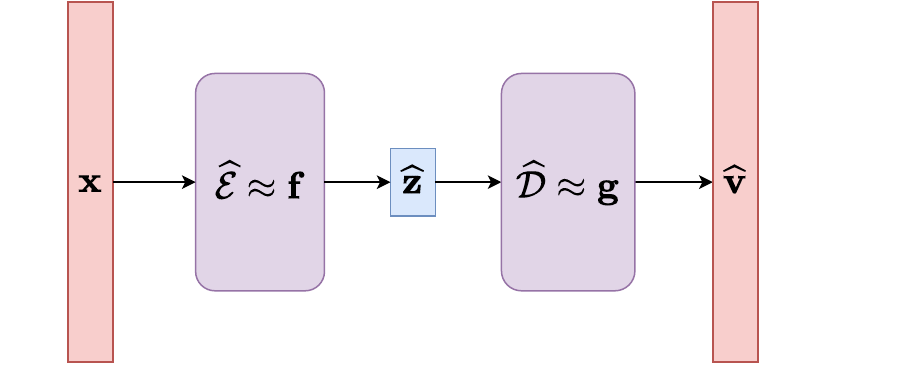}\\
	\caption{An illustration of the network architectures in Theorem \ref{thm.single}, where $\widehat{\scrE}$ is the estimated encoder, $\widehat{\scrD}$ is the estimated decoder, $\widehat{\zb}$ is the encoded latent feature, $\widehat{\vb}$ is the decoded vector. } 
	\label{fig.network.thm1}
\end{figure}
\subsection{Multi-chart case}\label{sec.multi}
We next consider a more general case where the manifold has a complicated topology requiring multiple charts in an atlas. Consider an atlas $\{(U_k,\phi_k)\}_{k=1}^{C_{\cM}}$ of $\cM$ so that 
each $U_k$ is a local neighborhood on the manifold homeomorphic to a subset of $\RR^d$. Here $C_{\cM}$ denotes the number of charts in this atlas. In this atlas, each $U_k$ has a $d$-dimensional parametrization of $\cM$ and data can be locally represented by $d$-dimensional latent features.
In this general case, we prove the following upper bound of the squared generalization error (\ref{eq.error}) for CAE.

\begin{theorem}\label{thm.multi}
	Consider Setting \ref{setting}.    Let $\widehat{\scrE},\widehat{\scrD}$ be a global minimizer of (\ref{eq.loss}) with the  network classes $\cF_{\rm NN}^{\scrE}=\cF(D,C_{\cM}(d+1);L_\scrE,p_\scrE,K_\scrE,\kappa_\scrE,R_\scrE)$ and $\cF_{\rm NN}^{\scrD}=\cF(C_{\cM}(d+1),D;L_\scrD,p_\scrD,K_\scrD,\kappa_\scrD,R_\scrD)$ where $C_{\cM}=O((d\log d)(4/\tau)^d)$,
	\begin{gather}
		L_{\scrE}=O(\log^2 n+\log D), \ p_{\scrE}=O(Dn^{\frac{d}{d+2}}), \ K_{\scrE}=O((D\log D)n^{\frac{d}{d+2}}\log^2 n), \nonumber\\
		\kappa_{\scrE}=O(n^{\frac{2}{d+2}}), \ R_{\scrE}=\max\{\tau/4,1\},
		\label{eq.multi.E.para}\\
		L_{\scrD}=O(\log^2 n+\log D), \ p_{\scrD}=O(Dn^{\frac{d}{d+2}}), \ K_{\scrD}=O(Dn^{\frac{d}{d+2}} \log^2 n +D\log D), \nonumber\\
		 \kappa_{\scrD}=O(n^{\frac{1}{d+2}}), \ R_{\scrD}=B.
		\label{eq.multi.D.para}
	\end{gather} 
Then, the we have the following upper bound of the squared generalization error
	\begin{align}
		\EE_{\cS}\EE_{\xb\sim \gamma} \| \widehat{\scrD}\circ\widehat{\scrE}(\xb) -\pi(\xb)\|_2^2\leq C(D^2\log^3D)n^{-\frac{2}{d+2}}\log^4n
	\end{align}
for some constant $C$ depending on $d,\tau,q,B,C_{\cM}$ and the volume  of $\cM$. 
\end{theorem}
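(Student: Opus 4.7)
The plan is to follow the same four-step template that proves Theorem \ref{thm.single} (error decomposition, bias via network approximation, variance via metric entropy, then balance), with the principal new ingredient being a partition-of-unity construction that lets the network stitch local chart reconstructions into a global one. First I would fix a finite atlas $\{(U_k,\phi_k)\}_{k=1}^{C_{\cM}}$ obtained by covering $\cM$ with geodesic balls of radius $\Theta(\tau)$; the covering-number bound $C_{\cM}=O((d\log d)(4/\tau)^d)$ is standard from the reach condition. I would then pick a $C^{\infty}$ partition of unity $\{\rho_k\}_{k=1}^{C_{\cM}}$ subordinate to $\{U_k\}$, with each $\phi_k:U_k\to[-\tau/4,\tau/4]^d$ a smooth parameterization with Lipschitz inverse. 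The oracle encoder $\scrE$ sends $\xb\in\cM(q)$ to the concatenation over $k$ of the pairs $\bigl(\rho_k(\pi(\xb)),\,\phi_k(\pi(\xb))\cdot\mathbf{1}_{U_k}(\pi(\xb))\bigr)$, giving latent dimension $C_{\cM}(d+1)$ in agreement with $\cF_{\rm NN}^{\scrE}$. The oracle decoder $\scrD$ takes this latent vector and outputs $\sum_k \rho_k\,\phi_k^{-1}(\phi_k(\pi(\xb)))$, which equals $\pi(\xb)$ by $\sum_k\rho_k\equiv 1$; this is the multi-chart analogue of the single-chart identity $\gb\circ\fb\circ\pi=\pi$.

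For the bias term, I would approximate the oracle pair by composing three ReLU subnetworks. The first approximates the projection $\pi:\cM(q)\to\cM$ to an accuracy $\varepsilon$, using the projection construction of \citet{cloninger2021deep} (this is where the $D^2\log^3 D$ ambient-dimension factor enters). The second, applied per chart $k$, approximates the smooth scalar-valued functions $\rho_k$ and the smooth vector-valued maps $\phi_k$ restricted to a neighborhood of $U_k$, using the same $C^\infty$-on-manifold approximation argument used for $\fb$ in Lemma \ref{lem.single.approx}; outside $U_k$ I multiply by a smooth cutoff so that the output vanishes where $\rho_k=0$. The third subnetwork, sitting inside the decoder, approximates the chartwise reconstructions $\phi_k^{-1}$ and the weighted sum $\sum_k\rho_k\,\phi_k^{-1}(\cdot)$, using the standard ReLU approximation of multiplication and a linear summation layer. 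A careful error propagation (triangle inequality plus Lipschitz bounds on $\phi_k^{-1}$ and on the product map) then shows that for any target approximation accuracy $\varepsilon>0$ the constructed networks sit inside the stated classes with parameters scaling as (\ref{eq.multi.E.para})--(\ref{eq.multi.D.para}), and the squared bias is $O(\varepsilon^2)$.

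The variance term is handled exactly as in Step 3 of Theorem \ref{thm.single}: by a uniform concentration argument based on the covering number of $\cF_{\rm NN}^{\scrD}\circ\cF_{\rm NN}^{\scrE}$ in the $L^\infty$ metric, it is bounded by a constant times $(\text{total effective \#parameters})\cdot\log^2 n / n$. Substituting the parameter scalings above, this term is of order $(D^2\log^3 D)\,n^{-\frac{2}{d+2}}\log^3 n$. Balancing the bias $\varepsilon^2$ against this variance by choosing $\varepsilon=\Theta(n^{-1/(d+2)})$ yields the claimed rate $(D^2\log^3 D)\,n^{-\frac{2}{d+2}}\log^4 n$, with the constant absorbing $d,\tau,q,B,C_{\cM}$ and $\mathrm{Vol}(\cM)$.

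The main obstacle is the simultaneous multi-chart approximation: unlike the single-chart case, I must approximate $C_{\cM}$ different pairs $(\rho_k\circ\pi,\phi_k\circ\pi)$ and then implement the nonlinear stitching $\sum_k\rho_k\phi_k^{-1}(\cdot)$ by a single ReLU network, without letting the parameter counts blow up in $D$ or in the local Lipschitz constants. The delicate points are (i) encoding only where $\rho_k>0$ so that the approximation of $\phi_k^{-1}$ is never evaluated at out-of-range inputs (resolved by carrying $\phi_k(\pi(\xb))\cdot\mathbf{1}_{U_k}$, not $\phi_k(\pi(\xb))/\rho_k$), and (ii) sharing the $\pi$-subnetwork across all $C_{\cM}$ channels so that the $D$-dependence stays at $D^2\log^3 D$ rather than multiplying by $C_{\cM}$. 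Once these two structural choices are made, the remainder is a quantitative bookkeeping exercise parallel to the single-chart proof.
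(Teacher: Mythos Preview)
Your proposal is correct and follows essentially the same four-step argument as the paper (error decomposition into bias ${\rm T_1}$ and variance ${\rm T_2}$, approximation of a partition-of-unity oracle encoder/decoder, covering-number bound, and the balance $\varepsilon=n^{-1/(d+2)}$). The paper's implementation differs from yours in two places: it does \emph{not} approximate $\pi$ as a standalone $D$-output subnetwork but instead approximates each $\phi_j\circ\pi:\cM(q)\to\RR^d$ and each $\rho_j:\cM(q)\to[0,1]$ directly via the Cloninger--Klock construction (Lemmas~\ref{lem.f} and~\ref{lem.rho}), which keeps the encoder parameter count at $O(D\varepsilon^{-d}\cdots)$ rather than risking an extra factor of $D$; and it resolves your out-of-range concern (i) not by carrying $\mathbf 1_{U_k}$ or cutoffs through the encoder but by extending each $\phi_j^{-1}$ from $\phi_j(\widetilde V_j)$ to the full cube $[-\tau/4,\tau/4]^d$ via Kirszbraun's theorem (Lemma~\ref{lem:extension}), so the decoder-side approximator $\widetilde g_j$ is globally well defined and Lipschitz.
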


\begin{figure}
	\centering
		\includegraphics[width=0.8\textwidth]{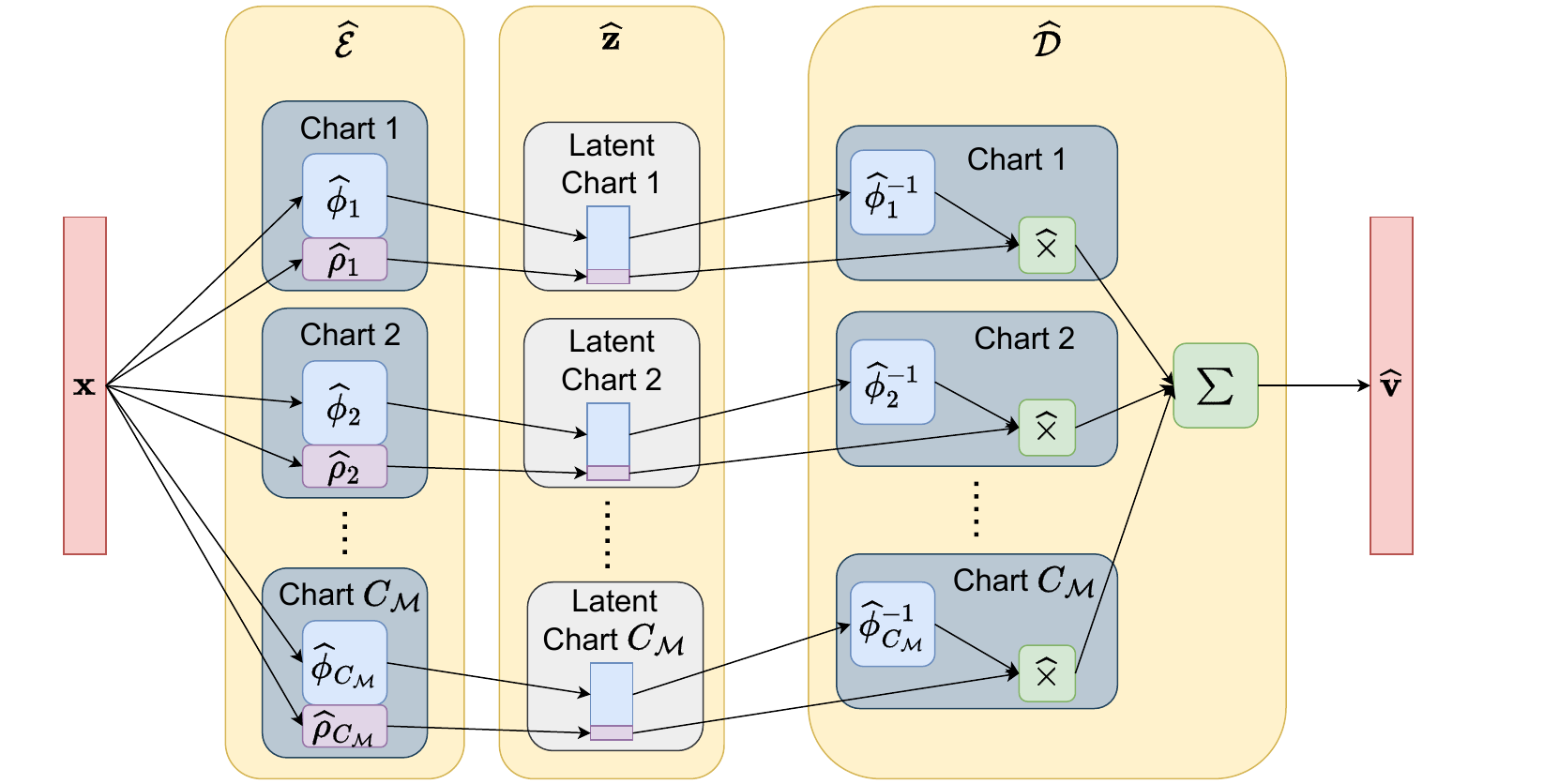}
	\caption{Illustration of the network architectures in Theorem \ref{thm.multi}. Here $\widehat{\scrE}$ is the estimated encoder, $\widehat{\scrD}$ is the estimated decoder.  The output of $\widehat{\scrE}$ gives rise to the latent feature $\widehat{\zb}$, and  the final output of the decoder $\widehat{\scrD}$ gives rise to $\widehat{\vb}$ as an approximation of the clean sample $\vb$.}
	\label{fig.network}
\end{figure}

Theorem \ref{thm.multi} indicates that for a general smooth manifold, the squared generalization error converges in the order of $n^{-\frac{2}{d+2}}\log^4n$. 
In the case of multiple charts, the manifold has a more complicated structure than the single-chart case. Compared to the network architectures in Theorem \ref{thm.single}, the network architecture specified in Theorem \ref{thm.multi} has the following changes:
\begin{itemize}
	\item The output of encoder has dimension $C_{\cM}(d+1)$ instead of $d$. Such an increment in dimension is due to that $C_{\cM}$ charts are needed to cover complicated manifolds.
	\item The encoder network uses more parameters: The number of nonzero parameters $K$ has an additional factor $\log D$.
	\item The decoder network is deeper, wider and uses more parameters: The depth has an additional factor $\log n$, the width has an additional factor $D$, and the number of nonzero parameters has an additional factor $D$.
\end{itemize}
We remark that the number of charts $C_{\cM}=O((d\log d)(4/\tau)^d)$ occurs in the worst case scenario. If $\cM$ has some good properties so that fewer charts are needed to cover $\cM$, the result in Theorem \ref{thm.multi} holds by replacing $C_{\cM}$ by the actual number of charts needed.

 We defer the detailed proof of Theorem \ref{thm.multi} to Section \ref{proof.multi}.  The proof idea of Theorem \ref{thm.multi} is similar to that of Theorem \ref{thm.single}. We also decompose the squared generalization error into a squared bias term and a variance term. The bias term is controlled by neural network approximation theories of the oracles.
We briefly discuss the oracles and our network construction in Theorem \ref{thm.multi} here.
Let $\{U_j, \phi_j\}_{j=1}^{C_{\cM}}$ be an atlas of $\cM$ and $\{\rho_j\}_{j=1}^{C_{\cM}}$ be the partition of unity that subordinates to the atlas. For any $\xb\in \cM(q)$, we have
\begin{align}
	\vb=\pi(\xb)=\sum_{j=1}^{C_{\cM}}\left[ \phi_j^{-1}\circ\phi_j\circ\pi(\xb)\right]\times \left[ \rho_j\circ\pi(\xb) \right].
	\label{eq.x.decomposition}
\end{align}
If $\pi(\xb)$ is on the $j$th chart, \eqref{eq.x.decomposition} gives rise to an encoding of $\xb \in \cM(q)$ to the local coordinate $\phi_j\circ\pi(\xb) \in \RR^d$ and the partition of unity value $\rho_j\circ\pi(x)\in \RR$. We consider $[(\phi_j\circ\pi)^{\top} \ \ \rho_j\circ\pi]^{\top}$ as the oracle encoder on the $j$th chart, and the collection of $\{[(\phi_j\circ\pi)^{\top} \ \ \rho_j\circ\pi]^{\top}\}_{j=1}^{C_{\cM}}$ as the global encoder. The latent feature on a single chart is of dimension $d+1$, and the latent feature on the whole manifold is of dimension $C_{\cM}(d+1)$ with $C_{\cM}$ charts. For any latent feature $\{\zb_j \in \RR^{d+1}\}_{j=1}^{C_{\cM}}$, we consider the oracle decoder $\sum_{j=1}^{C_{\cM}} \phi_j^{-1}((\zb_j)_{1:d}) \times (\zb_j)_{d+1}$ where $(\zb_j)_{1:d}$ and $(\zb_j)_{d+1}$ represent the first $d$ and the $(d+1)$th entries of $\zb_j$ respectively. To bound the bias in Theorem \ref{thm.multi}, we design neural networks to approximate the oracle encoder and decoder with an arbitrary accuracy $\varepsilon$ (see Lemma \ref{lem.multi.approx} and its proof). The overall architecture of the encoder and decoder is illustrated in Figure \ref{fig.network}. 

\commentout{
or realize each component in (\ref{eq.x.decomposition}) by neural networks. 
The encoder $\widehat{\scrE}$ consists of the $\phi_j\circ\pi(\xb)$'s approximation and $\rho_j\circ\pi$'s approximation by neural networks for $j = 1,\ldots,C_{\cM}$. The latent feature given by the encoder  includes the $(d+1)$-outputs from each chart, and therefore is of dimension $C_{\cM}(d+1)$ with $C_{\cM}$ charts. 
The decoder $\widehat{\scrD}$ consists of the  $\phi_j^{-1}$'s approximation for $j = 1,\ldots,C_{\cM}$. The decoder also includes the summation operation in (\ref{eq.x.decomposition}) and a neural network approximation of multiplication $\times$. 
}

\begin{table}[h]
	\centering
 \renewcommand{\arraystretch}{1.5}
	\begin{tabular}{c|c|c|c}
		\hline
		Reference & Method & Error measurements & Upper bound\\
		\hline
		 \citet{canas2012learning}& k-Means & $\EE_{\cS}\EE_{\xb\in\cM}[{\rm dist}^2(\xb,\widehat{\cM})]$ & $n^{-\frac{1}{d+2}}$\\
		 \hline
		 \citet{canas2012learning}& k-Flats & $\EE_{\cS}\EE_{\xb\in\cM}[{\rm dist}^2(\xb,\widehat{\cM})]$ & $n^{-\frac{2}{d+4}}$\\
		 \hline
		 \cite{liao2019adaptive} & Multiscale & $\EE_{\cS}\EE_{\xb\in\cM}[\|{\rm Proj}_{\widehat{\cM}}(\xb)-\xb\|^2]$ & $n^{-\frac{4}{d+2}}$\\
		 \hline
          \cite{schonsheck2019chart} & Chart Auto-Encoder & $\sup_{\xb\in\cM } \| \widetilde{\scrD}\circ\widetilde{\scrE}(\xb)- \xb\|_2^2$ & $n^{-\frac{1}{d+1}}$\\
          \hline 
		 Theorem \ref{thm.multi} & Chart Auto-Encoder &$\EE_{\cS}\EE_{\xb\sim \gamma} \| \widehat{\scrD}\circ\widehat{\scrE}(\vb)- \vb\|_2^2$ & $n^{-\frac{2}{d+2}}$\\
		 \hline
	\end{tabular}
\caption{Summary of the results on manifold learning with noise-free data. In the table, $\widehat{\cM}$ denotes the manifold learned from training data, ${\rm dist}(\xb,\cM)$ denotes the distance from $\xb$ to $\cM$, ${\rm Proj}_{\widehat{\cM}}(\xb)$ denotes the projection of $\xb$ onto $\widehat{\cM}$. }
\label{tab.noiseless}
\end{table}

We next discuss the connection between our results and some existing works.
Low-dimensional approximations of manifolds have been studied in the manifold learning literature, with classical methods, such as k-means and k-flats \citep{canas2012learning}, multiscale linear approximations \citep{maggioni2016multiscale,liao2019adaptive}. \citet{canas2012learning}, \citet{liao2019adaptive} and \cite{schonsheck2019chart} consider the noise-free setting, where training and test data are exactly located on a low-dimensional manifold. This is comparable to our noise-free setting with $q=0$. 
We summarize the upper bounds in these works and our Theorem \ref{thm.multi} in Table \ref{tab.noiseless}. While the rate from \citet{liao2019adaptive} is faster than ours, $O\left(n^{\frac{1}{d+2}}\right)$ local tangent planes are used to approximate $\cM$. In comparison, our Theorem \ref{thm.multi} requires a fixed number $C_{\cM}$ (at most $O((d\log d)(4/\tau)^d)$) local pieces (charts), which is independent of the sample size $n$. 
\cite{schonsheck2019chart} first considers a Chart Auto-Encoder where their analysis is for an approximation error. 
 Given $n$ data samples uniformly distributed on $\cM$, \citet{schonsheck2019chart} explicitly constructs the encoder $\widetilde{\scrE}$ and decoder $\widetilde{\scrD}$ and shows that with high probability, the constructed autoencoder gives rise to the approximation error in Table \ref{tab.noiseless}. 
Our analysis in this paper extends the theory to the noisy setting and establishes a statistical estimation theory with an improved rate of convergence on the mean squared generalization error, which is beyond the approximation error analysis.

Our noisy setting shares some similarities with \cite{genovese2012minimax} and \cite{puchkin2022structure}, which focuses on manifold learning from noisy data. 
\cite{genovese2012minimax} assumes that the training data are corrupted by normal noise. In their setting, the noise follow a uniform distribution along the manifold normal direction and only noisy data (without the clean counterparts) are given for training. The authors proved that the lower bound measured by Hausdorff distance is in the order of $n^{-\frac{2}{2+d}}$, while no efficient algorithm is proposed to achieve this error bound. Recently, \cite{puchkin2022structure} considers a more general distribution of noise (not restricted to normal noise) but assumes the noise magnitude decays with a certain rate as $n$ increases. In comparison, our work and \cite{genovese2012minimax} do not require the noise magnitude to decay  as $n$ increases. The great advantage of \cite{puchkin2022structure}, as well as \cite{genovese2012minimax}, is that only noisy data are required for training.
\cite{puchkin2022structure} also derived a lower bound  measured  by Hausdorff distance  in the order of $\displaystyle \frac{q^2b^2}{\tau^3}\vee \tau^{-1}\left( \frac{q^2\tau^2\log n}{n} \right)^{\frac{2}{d+4}}$, where $b$ denotes an upper bound of the magnitude of the tangential component of noise. The lower bound is different from the one in \cite{genovese2012minimax}, because the existence of tangential noise, even with very small magnitude. 
Our work requires both clean and noisy data for training, which is possible when the training data are well-controlled. Our goal is to establish a theoretical foundation for the widely used autoencoders.

\subsection{Extension to general noise with bounded normal components} \label{sec.gaussian}
We consider a more general setting in which the noise includes both normal and tangential components.
\begin{setting}\label{setting.gaussian}
	Let  $\cM$ be a $d$-dimensional compact smooth Riemannian manifold isometrically embedded in $\RR^D$ with reach $\tau$, Given a fixed noise level $q \in [0,\tau)$, we consider a training data set  $\cS=\{(\bx_i,\vb_i)\}_{i=1}^n$ where the $\vb_i$'s are i.i.d. samples from a probability measure on $\cM$, and the $\xb_i$'s are perturbed from the $\vb_i$'s according to the model such that
	\begin{align}
		\xb=\vb+\nbb
  \label{eq.data.gaussian}
	\end{align}
	where $\nbb \in \RR^D$ is a random vector satisfying 
	$$
	\|\rm{Proj}_{T_\vb^{\perp}{\cM}}(\nbb)\|_2\leq q,\quad \EE\left[\|\rm{Proj}_{T_{\vb}\cM}(\nbb)\|^2_2|\vb\right]\leq \sigma^2
	$$ 
	with $\sigma\geq 0$. Here $\rm{Proj}_{T_{\vb}\cM}(\nbb)$ and $\rm{Proj}_{T_\vb^{\perp}{\cM}}(\nbb)$ denote the orthogonal projections of $\nbb$ onto the tangent space $T_{\vb}\cM$ and the normal space $T_\vb^{\perp}{\cM}$ respectively. We denote the distribution of $\xb$ by $\gamma$.  In particular, we have $\xb_i=\vb_i+\nbb_i$, where the $\nbb_i$'s are independent.

\end{setting}
For any noise vector $\nbb$, we can decompose it into the normal component $\rm{Proj}_{T_\vb^{\perp}{\cM}}(\nbb)$ and the tangential component $\rm{Proj}_{T_{\vb}\cM}(\nbb)$:
$$\nbb = \rm{Proj}_{T_\vb^{\perp}{\cM}}(\nbb)+\rm{Proj}_{T_{\vb}\cM}(\nbb).$$
Setting \ref{setting.gaussian} requires the magnitude of the normal component to be bounded by $q$, and for any $\vb\in\cM$, the second moment of the tangential component is bounded by $\sigma^2$. In particular, if we further have $\EE\left[\rm{Proj}_{T_{\vb}\cM}(\nbb)|\vb\right]=0$, Setting \ref{setting.gaussian} implies for any $\vb\in\cM$, the tangential component has variance no larger than $\sigma^2$. 

Under Setting \ref{setting.gaussian}, the squared generalization error (\ref{eq.error}) is not  appropriate any more, since our goal is to recover $\vb$ and $\pi(\xb)$ is not necessarily equal to $\vb$. Instead, we consider the following squared generalization error
\begin{align*}
	\EE_{\cS}\EE_{\xb\sim\gamma} \| \widehat{\scrD}\circ\widehat{\scrE}(\xb) -\vb\|_2^2.
\end{align*}
We have the following upper bound on the squared generalization error
\begin{theorem}\label{thm.gaussian}
	Consider Setting \ref{setting.gaussian}.    Let $\widehat{\scrE},\widehat{\scrD}$ be a global minimizer of (\ref{eq.loss}) with the  network classes $\cF_{\rm NN}^{\scrE}=\cF(D,C_{\cM}(d+1);L_\scrE,p_\scrE,K_\scrE,\kappa_\scrE,R_\scrE)$ and $\cF_{\rm NN}^{\scrD}=\cF(C_{\cM}(d+1), D;L_\scrD,p_\scrD,K_\scrD,\kappa_\scrD,R_\scrD)$ where $C_{\cM}=O((d\log d)(4/\tau)^d)$,
	\begin{gather}
		L_{\scrE}=O(\log^2 n+\log D), \ p_{\scrE}=O(Dn^{\frac{d}{d+2}}), \ K_{\scrE}=O((D\log D)n^{\frac{d}{d+2}}\log^2 n), \nonumber\\
		\kappa_{\scrE}=O(n^{\frac{2}{d+2}}), \ R_{\scrE}=\max\{\tau/4,1\},
		\label{eq.gaussian.E.para}\\
		L_{\scrD}=O(\log^2 n+\log D), \ p_{\scrD}=O(Dn^{\frac{d}{d+2}}), \ K_{\scrD}=O(Dn^{\frac{d}{d+2}} \log^2 n +D\log D), \nonumber\\
		\kappa_{\scrD}=O(n^{\frac{1}{d+2}}), \ R_{\scrD}=B.
		\label{eq.gaussian.D.para}
	\end{gather} 
	We have 
	\begin{align}
		\EE_{\cS}\EE_{\xb\sim \gamma} \| \widehat{\scrD}\circ\widehat{\scrE}(\xb) -\vb\|_2^2\leq C(D^2\log^3D)n^{-\frac{2}{d+2}}\log^4n+C_1 \sigma^2
		\label{eq.gaussian.bound}
	\end{align}
	for some constant $C$ depending on $d,\tau,q,B,C_{\cM}$ and the volume  of $\cM$, and $C_1$ depending on $\tau,q$. The constant hidden in $O$ depends on $d,\tau,q,B,C_{\cM}$ and the volume  of $\cM$.
\end{theorem}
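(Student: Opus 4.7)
The plan is to redo the bias-variance decomposition from the proof of Theorem \ref{thm.multi} with the new target, using the triangle inequality to insert the projection $\pi(\xb)$ and absorbing the tangential-noise contribution into an irreducible $O(\sigma^2)$ term. Since $(\widehat{\scrD},\widehat{\scrE})$ minimizes the empirical risk in \eqref{eq.loss}, for any oracle pair $(\scrD^*,\scrE^*)$ in the network class one has
\begin{align*}
\EE_{\cS}\EE_{\xb\sim\gamma}\|\widehat{\scrD}\circ\widehat{\scrE}(\xb)-\vb\|_2^2 \;\le\; 2\,\EE_{\xb\sim\gamma}\|\scrD^*\circ\scrE^*(\xb)-\pi(\xb)\|_2^2 + 2\,\EE_{\xb\sim\gamma}\|\pi(\xb)-\vb\|_2^2 + \text{(variance)},
\end{align*}
where the variance comes from the standard metric-entropy argument on the network class and the bias split uses $(a+b)^2\le 2a^2+2b^2$. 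I would take $(\scrD^*,\scrE^*)$ to be the neural-network approximation of $\pi$ built from the atlas $\{(U_k,\phi_k)\}$ and the partition of unity $\{\rho_k\}$ exactly as in Section \ref{proof.multi}, with sizes as specified in \eqref{eq.gaussian.E.para}--\eqref{eq.gaussian.D.para}.

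The first bias piece is controlled by Lemma \ref{lem.multi.approx} and matches the rate $(D^2\log^3 D)n^{-2/(d+2)}\log^4 n$ from Theorem \ref{thm.multi}; the only adjustment is that the oracle's input domain must extend beyond $\cM(q)$, which I would handle by restricting to the high-probability event $A=\{\|\mathbf{t}\|_2\le \sqrt{\tau^2-q^2}/2\}$, where $\mathbf{t}:=\mathrm{Proj}_{T(\vb)}(\nbb)$. Markov's inequality gives $\Pr(A^c)\le 4\sigma^2/(\tau^2-q^2)$, and on $A^c$ the decoder output and $\vb$ are $\infty$-norm bounded by $B$, so the contribution from $A^c$ is $O(DB^2\sigma^2/(\tau^2-q^2))$, absorbed into the final $C_1\sigma^2$ term. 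For the second, geometric, bias piece, a first-order expansion of the nearest-point projection near $\vb$ using the reach property yields, on $A$, a bound of the form $\|\pi(\xb)-\vb\|_2\le C_1(\tau,q)\|\mathbf{t}\|_2$, so taking expectations with $\EE[\|\mathbf{t}\|_2^2\mid\vb]\le\sigma^2$ produces the $C_1\sigma^2$ piece. The variance term transfers from the proof of Theorem \ref{thm.multi} essentially unchanged, since the uniform loss bound and the network complexities in \eqref{eq.gaussian.E.para}--\eqref{eq.gaussian.D.para} are identical.

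The main obstacle is the geometric estimate $\|\pi(\xb)-\vb\|_2\le C_1(\tau,q)\|\mathbf{t}\|_2$. The tangentiality of $\mathbf{t}$ at $\vb$ does \emph{not} imply $\pi(\vb+\wb+\mathbf{t})=\vb+\mathbf{t}$: curvature of $\cM$ couples $\mathbf{t}$ with $\wb$ through the second fundamental form, so the linear-in-$\|\mathbf{t}\|$ bound must be established by exploiting the medial-axis characterization of reach (Definition \ref{def.reach}) together with a Lipschitz estimate for $\pi$ on $\cM(\tau/2)$ in the spirit of \citet{niyogi2008finding}. A secondary technicality is checking that the approximation and variance bounds from Theorem \ref{thm.multi}, originally stated on $\cM(q)$, continue to apply on the slightly enlarged neighborhood $\cM(\sqrt{q^2+(\tau^2-q^2)/4})\subset\cM(\tau)$ that contains $A$; this is routine because neither the oracle construction nor the metric-entropy calculation depends on the support beyond a uniform $\ell_\infty$ bound. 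Once these geometric ingredients are assembled, combining the approximation, geometric, and variance pieces yields \eqref{eq.gaussian.bound}.
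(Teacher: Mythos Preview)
Your overall strategy---the bias-variance split, the oracle from Lemma \ref{lem.multi.approx}, the metric-entropy variance bound---matches the paper's proof exactly. The one place you diverge is the geometric estimate $\|\pi(\xb)-\vb\|_2\le C_1(\tau,q)\|\mathbf t\|_2$, which you flag as the main obstacle and propose to attack via a first-order expansion coupling $\mathbf t$ and $\wb$ through the second fundamental form. The paper handles it in one line: since $\wb\in T_\vb^\perp\cM$ with $\|\wb\|_2\le q<\tau$, one has $\pi(\vb+\wb)=\vb$ \emph{exactly}, and therefore
\[
\|\pi(\xb)-\vb\|_2=\|\pi(\vb+\wb+\mathbf t)-\pi(\vb+\wb)\|_2\le L_\pi\|\mathbf t\|_2,
\]
with $L_\pi=(1-q/\tau)^{-1}$ the Lipschitz constant of $\pi$ from Lemma \ref{lem.pi}. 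Squaring and taking expectations gives the $C_1\sigma^2$ term directly with $C_1=4L_\pi^2$. You never need to control $\pi(\vb+\wb+\mathbf t)-(\vb+\mathbf t)$; you only need to rewrite $\vb$ as $\pi(\vb+\wb)$ and invoke Lipschitz continuity between the two nearby inputs $\vb+\wb$ and $\vb+\wb+\mathbf t$.

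Your truncation to the event $A$ and the enlarged-neighborhood discussion are precautions the paper does not take: its proof applies Lemma \ref{lem.multi.approx} and Lemma \ref{lem.pi} as though $\xb\in\cM(q)$, which Setting \ref{setting.gaussian} does not guarantee when $\|\mathbf t\|_2$ is large. So that extra care is not misplaced. But the geometric step itself requires no curvature analysis once the Lipschitz trick above is in hand.
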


Theorem \ref{thm.gaussian} is proved in Section \ref{proof.gaussian}. Theorem \ref{thm.gaussian} is a straightforward extension of Theorem \ref{thm.multi} to Setting  \ref{setting.gaussian}  with general noise. The network architecture in Theorem \ref{thm.gaussian} has a similar size as that in Theorem \ref{thm.multi}. We summarize the network architectures specified in Theorem \ref{thm.single}, \ref{thm.multi} and \ref{thm.gaussian} in Table \ref{tab.architecture}. Compared to the upper bound in Theorem \ref{thm.multi}, Theorem \ref{thm.gaussian} has an additional term $C_1\sigma^2$ which comes from the tangential component of noise. If tangential noise exists, a given point $\xb$ may correspond to multiple (and  probably infinitely many) points on $\cM$. Therefore the generalization error cannot converge to 0 as the sample size increases. 
This fundamental difficulty of high-dimensional noise is also demonstrated by our numerical experiments. 

\begin{table}
	\centering
	\begin{tabular}{c|c|c|c}
		\hline
		& & Theorem \ref{thm.single} & Theorem \ref{thm.multi}, \ref{thm.gaussian}\\
		\hline
		\multirow{ 6}{*}{$\cF^{\scrE}_{\rm NN}$} & dim. of input, output & $D,\ d$ & $D,\ C_{\cM}(d+1)$\\
		\cline{2-4}
		&  $L_{\scrE}$ & $O\left(\log^2 n+\log D\right)$ & $O\left(\log^2 n+\log D\right)$\\
		\cline{2-4}
		&  $p_{\scrE}$ & $O\left(D n^{\frac{d}{d+2}}\right)$  & $O\left(Dn^{\frac{d}{d+2}}\right)$\\
		\cline{2-4}
		&  $K_{\scrE}$ & $O\left(Dn^{\frac{d}{d+2}}\log^2 n+ D\log D\right)$ & $O\left((D\log D)n^{\frac{d}{d+2}}\log^2 n\right)$\\
		\cline{2-4}
		&  $\kappa_{\scrE}$ & $O\left(n^{\frac{2}{d+2}}\right)$ & $O\left(n^{\frac{2}{d+2}}\right)$\\
		\cline{2-4}
		&  $R_{\scrE}$ & $\Lambda$ & $\max\{\tau/4,1\}$\\
		\hline
		\multirow{ 6}{*}{$\cF^{\scrD}_{\rm NN}$} & dim. of input, output & $d,\ D$ & $C_{\cM}(d+1),\ D$\\
		\cline{2-4}
		&  $L_{\scrD}$ & $O\left(\log n\right)$ & $O\left(\log^2 n+\log D\right)$\\
		\cline{2-4}
		&  $p_{\scrD}$ & $O\left(n^{\frac{d}{d+2}}\right)$ & $O\left(Dn^{\frac{d}{d+2}}\right)$\\
		\cline{2-4}
		&  $K_{\scrD}$ & $O\left(n^{\frac{d}{d+2}}\log^2 n\right)$ & $O\left(Dn^{\frac{d}{d+2}} \log^2 n +D\log D\right)$\\
		\cline{2-4}
		&  $\kappa_{\scrD}$ & $O\left(n^{\frac{1}{d+2}}\right)$ & $O\left(n^{\frac{1}{d+2}}\right)$\\
		\cline{2-4}
		&  $R_{\scrD}$ & $B$ & $B$\\
		\hline
	\end{tabular}
	\caption{Comparison of the network architectures in Theorem \ref{thm.single}, \ref{thm.multi} and \ref{thm.gaussian}.}
	\label{tab.architecture}
\end{table}

\section{Numerical experiments}
\label{sec.experiments}

\begin{figure}[t!]
	\centering
	\subfigure[Sphere]{
	\includegraphics[scale=0.67]{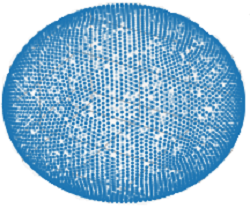}
	} \qquad
	\subfigure[Noisy data]{
    \includegraphics[scale=0.67]{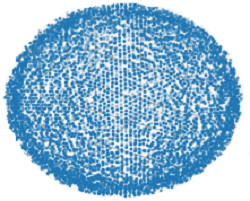}
    }\qquad
    \subfigure[Reconstruction]
    {\includegraphics[scale=0.67]{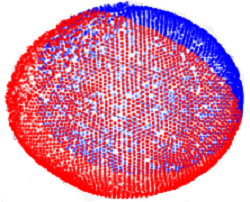}
    } \\ \vspace{10mm}
    \subfigure[Genus-3 pyramid]{
    \includegraphics[scale=0.67]{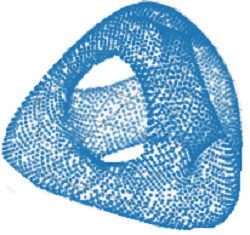}}\qquad
    \subfigure[Noisy data]{
    \includegraphics[scale=0.67]{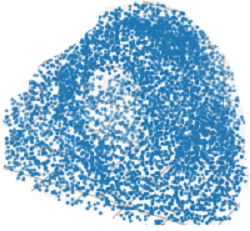}}\qquad
    \subfigure[Reconstruction]{
    \includegraphics[scale=0.67]{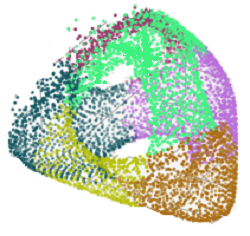}
    }\\ \vspace{10mm}
    \subfigure[Genus-2]{
    \includegraphics[scale=0.76]{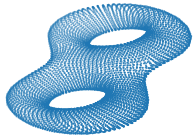}} \quad
    \subfigure[Noisy data]{
    \includegraphics[scale=0.76]{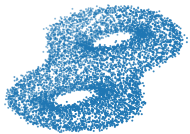}}\quad
    \subfigure[Reconstruction]{
    \includegraphics[scale=0.63]{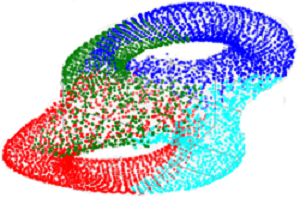}}
	\caption{Reconstructions of three noisy manifolds rescaled to lie inside the $[-50,50]^3$ cube centered at the origin.  } 
	\label{fig:visuals}
\end{figure}

In this section, we conduct a series of experiments on simulated data to numerically verify our theoretically analysis. We consider three surfaces listed in the first column of Figure \ref{fig:visuals}. The noisy data with normal noise are displayed in the middle column of Figure \ref{fig:visuals}. We use the code in \cite{schonsheck2019chart} to implement chart autoencoders. 
It is important to prescribe a reasonable number of charts to appropriately reflect the topology of the manifold. During training, chart autoencoders segment manifolds into largely non-overlapping charts. Further in some cases, chart autoencoders perform automatic chart pruning if too many charts are prescribed. This is done by contracting excess charts to a trivial or nearly trivial patch when there is already a sufficient number of charts present to capture the manifold's topology. The number of charts is prescribed to be $4$ for the sphere and Genus-2 double torus and $8$ for the Genus-3 pyramid. In Figure \ref{fig:visuals}, we visualize the sphere as decomposed by $2$ charts, a Genus-3 pyramid as decomposed by $6$ charts, and a Genus-2 double torus as decomposed by $4$ charts after training.
We use the network architecture such that the encoder is composed of 3 linear layers with ReLU activations, and the decoder has 3 linear layers with ReLU activations and a width (hidden dimension) of $50$.  In training, the batch size is $512$, the learning rate is $3e-6$, and weight decay is $3e-1$.

\subsection{Sample complexity}

\begin{figure}[t!]
    \centering
    \includegraphics[scale=0.6]{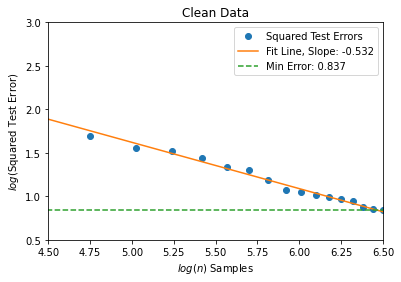}
    \caption{Squared test error versus the training sample size $n$ for clean data on the Genus-3 pyramid with intrinsic dimension $d=2$ and ambient dimension $D=3$. The horizontal line of \lq \lq Min Error\rq\rq\ represents the minimum squared testing error observed among all sample sizes. }
    \label{fig:sample_clean_3l}
\end{figure}

We first investigate the sample complexity of chart autoencoders.
We train chart autoencoders with $n$ training points randomly sampled from the Genus-3 pyramid and evaluate the squared generalization error on held-out test data. The training data contain clean and noisy pairs and the test data are noisy. Let $\{(\xb_j,\vb_j)\}_{j=1}^{n_{\rm test}}$ be the set of test data where $\vb_j$ is the clean counterpart of $\xb_j$. The squared generalization error is approximated by the following squared test error:
$$\frac{1}{n_{\rm test}} \sum_{j=1}^{n_{\rm test}} \| \widehat{\scrD}\circ\widehat{\scrE}(\xb_j) -\vb_j\|_2^2,$$
For each $n$, we perform $5$ runs of experiments and average the squared test error over these $5$ runs of experiments. 

According to our Theorem \ref{thm.multi}, if the training data contain clean and noisy pairs with normal noise, including the noise-free case, we have
\begin{equation}
\text{Squared generalization error} \le C  (D^2 \log^3 D) n^{-\frac{2}{d+2}\log^4 n}.
\label{num:errornormalnoise}
\end{equation}
If the training data contain clean and noisy pairs with high-dimensional noise where the second moment of the tangential component is bounded by $\sigma^2$, Theorem \ref{thm.gaussian} implies that
\begin{equation}
\text{Squared generalization error} \le C_1 (D^2 \log^3 D) n^{-\frac{2}{d+2}\log^4 n}+C_2\sigma^2.
\label{num:errorgaussiannoise}
\end{equation}

We start with the noise-free case where both training and test data are on the Genus-3 pyramid. Figure \ref{fig:sample_clean_3l} shows the log-log plot of the squared test error versus the training sample size $n$.  Our theory in \eqref{num:errornormalnoise} implies that a least square fit of the curve has slope $-\frac{2}{d+2} = -\frac 1 2$ since $d=2$. Numerically we obtain a slope of $-0.532$, which is consistent with our theory. 
Due to the optimization error  in training, we do not observe convergence to $0$ in either the training or the test loss. The \lq\lq Min error\rq\rq\ is the minimum squared test error achieved among all sample sizes.

\begin{figure}[t!]
	\centering
	\subfigure[Normal noise, $D=3$]{
	\includegraphics[scale=0.5]{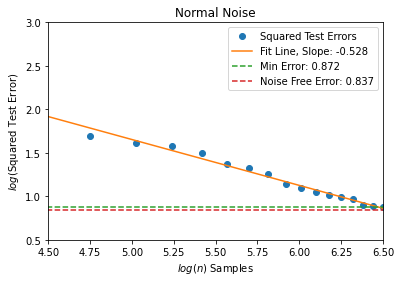} }
	\subfigure[Gaussian noise, $D=3$]{
    \includegraphics[scale=0.5]{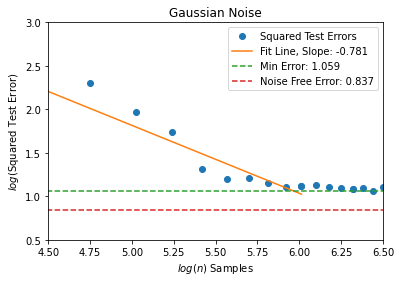}} \\
    \subfigure[Normal noise, $D=5$]{
    \includegraphics[scale=0.5]{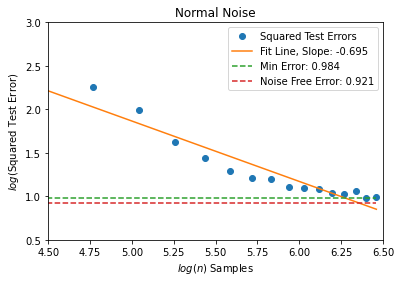} }
    \subfigure[Gaussian noise, $D=5$]{
    \includegraphics[scale=0.5]{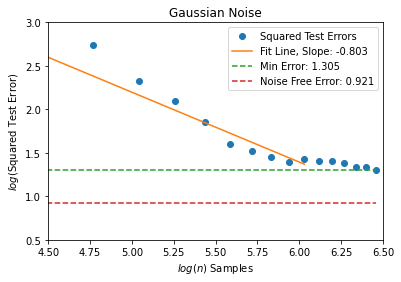}}\\
    \subfigure[Normal noise, $D=10$]{
    \includegraphics[scale=0.5]{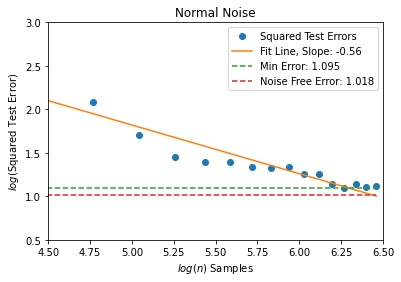} }
    \subfigure[Gaussian noise, $D=10$]{
    \includegraphics[scale=0.5]{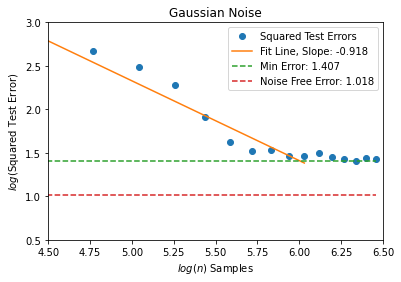}}
	\caption{Squared test error versus the training sample size $n$ on the Genus-3 pyramid with intrinsic dimension $d=2$ and ambient dimension $D \in \{3,5,10\}$. The left column shows the results of normal noise and the right column shows the results of Gaussian noise. 
 The \lq\lq Min error\rq\rq\ is the minimum squared test error achieved among all sample sizes. The \lq\lq Noise Free Error\rq\rq\ is the squared test error achieved when training on the entire clean dataset.}
	\label{fig:samples}
\end{figure}

We next test the noisy case with normal noise and gaussian noise respectively.
Figure \ref{fig:samples} displays the log-log plot of the squared test error versus the training sample size $n$ with normal noise (left column) and gaussian noise (right column). The noise level is measured by the variance of the noise distribution. Specifically,  for the normal noise $\wb$ in Setting \ref{setting}, we set $\EE[\wb|\vb]$ = 0 and refer the noise level as $\EE \|\wb\|^2$. For the gaussian noise $\nbb$, we set $\nbb \sim \mathcal{N}(0,\tilde\sigma^2 I)$, such that $\EE \|\nbb\|^2 =\tilde \sigma^2$, which is referred to be the noise level. In this experiment, we set the noise level to be $1$. 
The Genus-3 surface is embedded in $\RR^{D}$ with $D=3,5,10$ respectively. In Figure \ref{fig:samples}, the \lq\lq Min error\rq\rq\ is the minimum squared test error achieved among all sample sizes. The \lq\lq Noise Free Error\rq\rq\ is the squared test error achieved when training on the entire clean dataset.

 In the case of normal noise, we observe a convergence of the squared test error as $n$ increases. The  \lq\lq Min error\rq\rq\ is close to the \lq\lq Noise Free Error\rq\rq,\  which shows that training on noisy data almost achieves the performance of training on clean data. This demonstrates autoencoders' denoising effect for normal noise. The slope of the line obtained from a linear fit is around $-0.5$, which is consistent with our theory in \eqref{num:errornormalnoise}.

In the case of gaussian noise, the squared test error first converges when $n$ increases but then stagnates at a certain level. The  \lq\lq Min error\rq\rq\ is much larger than the \lq\lq Noise Free Error\rq\rq\, which shows training on noisy data can not achieve similar performance on clean data. This is expected as our theory implies that autoencoders do not have a denoising effect for the tangential component of the noise.

\subsection{Effects of the ambient dimension, the number of charts and noise levels}

We next investigate how the squared generalization error of chart autoencoders depends on the ambient dimension, the number of charts, and noise levels.

Our theory in \eqref{num:errornormalnoise} shows that, when the ambient dimension $D$ varies, the squared generalization error grows at most in $D^2 \log^3 D$. This bound may not be tight on the dependence of $D$. In Figure \ref{fig:effects} (a), we plot the squared test error for chart autoencoders with clean data on the Genus-2 and Genus-3 surfaces.  We observe that, in these simulations, the squared test error almost grows linearly with respect to $D$. 
Note that the upper bounds in our theorems are for the global minimizer of the empirical loss (\ref{eq.loss}). In practice, due to the complicated structure of networks, the training process may easily get stuck at a local minimizer and it is difficult to get the global minimizer. Nevertheless, our numerical results still give an approximate linear relation between the test error and $D$.
We leave it as a future work to investigate the optimal dependence of the squared generalization error on $D$.

\begin{figure}[t!]
	\centering
	\subfigure[Squared test error versus $D$]{
	\includegraphics[scale=0.5]{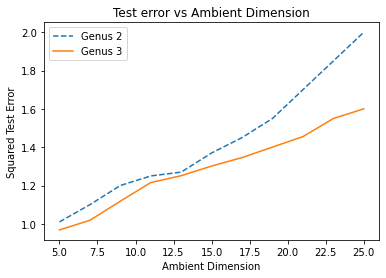}
	}
	\subfigure[Squared test error versus the number of charts]{	\includegraphics[scale=0.5]{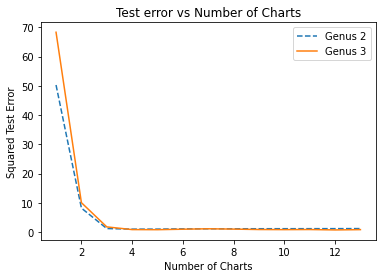}
}
\\
\subfigure[Squared test error versus noise levels]{	\includegraphics[scale=0.5]{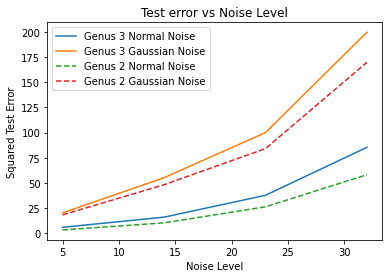}
}
	\caption{Effects of the ambient dimension, the number of charts and noise levels for the Genus-3 and Genus-2 data.}
	\label{fig:effects}
\end{figure}

In Figure \ref{fig:effects} (b), we plot the squared test error for chart autoencoders with clean data on the Genus-2 and Genus-3 surfaces versus  the number of charts.  We observe that, when the number of charts is sufficiently large to preserve the data structure, the squared test error stays almost the same, independently of the number of charts.

In Figure \ref{fig:effects} (c), we plot the squared test error versus various noise levels of both normal and gaussian noise. We observe that the squared test error is much higher in the gaussian case for both manifolds.

\section{Proof of main results}
\label{sec.proofofmain}
For the simplicity of notations, for any given $\cF_{\rm NN}^{\scrE}$ and $\cF_{\rm NN}^{\scrD}$, we define the network class
\begin{align}
	\cF_{\rm NN}^{\scrG}=\{\scrG=\scrD\circ \scrE ~|~ \scrD\in \cF_{\rm NN}^{\scrD},\scrE\in \cF_{\rm NN}^{\scrE}\}
	\label{eq.FG}
\end{align}
and denote $\widehat{\scrG}=\widehat{\scrD}\circ \widehat{\scrE}$, where $\widehat{\scrE},\widehat{\scrD}$ are the global minimizers in (\ref{eq.loss}).
\subsection{Proof of Theorem \ref{thm.single}}
\label{proof.single}
\begin{proof}[Proof of Theorem \ref{thm.single}]
	To simplify the notation, we denote the Lipschitz constant of $\fb$ by $C_f$, i.e.,
	for any $\vb_1,\vb_2\in\cM$,
	\begin{align}
		&\|\fb(\vb_1)-\fb(\vb_2)\|_{2}\leq C_f \|\vb_1-\vb_2\|_{2}\leq C_f d_{\cM}(\vb_1,\vb_2),
	\end{align}
    with $d_{\cM}(\vb_1,\vb_2)$ denoting the geodesic distance on $\cM$ between $\vb_1$ and $\vb_2$, and denote the Lipschitz constant of $\gb$ by $C_g$, i.e., for any $\zb_1,\zb_2\in \fb(\cM)$,
	\begin{align}
		&\|\gb(\zb_1)-\gb(\zb_2)\|_{2}\leq C_g \|\zb_1-\zb_2\|_2.
	\end{align}

Our proof idea can be summarized as: We decompose the generalization error (\ref{eq.error}) into a bias term and a variance term. The bias term will be upper bounded using network approximation theory in Lemma \ref{lem.single.approx}. The variance will be bounded in terms of the covering number of the network class  using metric entropy argument in Lemma \ref{lem.single.T2}. 

We add and subtract two times of the empirical risk to (\ref{eq.error}) to get
\begin{align}
	&\EE_{\cS}\EE_{\xb\sim \gamma} \left[\| \widehat{\scrG}(\xb)-\pi(\xb)\|_2^2 \right]\nonumber\\
	=& \underbrace{2\EE_{\cS}\left[\frac{1}{n}\sum_{i=1}^n \|\widehat{\scrG}(\xb_i) -\pi(\xb_i)\|_{2}^2\right]}_{\rm T_1}  +\underbrace{\EE_{\cS}\EE_{\xb\sim \gamma} \left[ \| \widehat{\scrG}(\xb) - \pi(\xb)\|_2^2\right] -2\EE_{\cS}\left[\frac{1}{n}\sum_{i=1}^n \|\widehat{\scrG}(\xb_i) - \pi(\xb_i)\|_{2}^2\right]}_{\rm T_2}.
	\label{eq.single.error}
\end{align}
The term $\mathrm {T_1}$ captures the bias of the network class $\cF_{\rm NN}^{\scrG}$ and ${\rm T_2}$ captures the variance.
We then derive an upper bound for each term in order.

\noindent $\bullet$ {\bf Bounding ${\rm T_1}$}

We derive an upper bound of ${\rm T_1}$ using the network approximation error. 
We deduce
\begin{align}
	{\rm T_1}=&2\EE_{\cS}\left[\frac{1}{n}\sum_{i=1}^n \|\widehat{\scrG}(\xb_i)- \pi(\xb_i)\|_{2}^2\right] \nonumber\\
	=&2\EE_{\cS}\inf_{\scrG\in \cF_{\rm NN}^{\scrG}}\left[\frac{1}{n}\sum_{i=1}^n \|\scrG(\xb_i)- \vb_i\|_{2}^2\right] \nonumber\\
	\leq & 2\inf_{\scrG\in \cF_{\rm NN}^{\scrG}} \EE_{\cS}\left[\frac{1}{n}\sum_{i=1}^n \|\scrG(\xb_i)- \vb_i\|_{2}^2\right] \nonumber\\
	=&2\inf_{\scrG\in \cF_{\rm NN}^{\scrG}} \EE_{\xb\sim \gamma}\left[\|\scrG(\xb)- \pi(\xb)\|_{2}^2\right].
	\label{eq.single.T1.err.1}
\end{align}

The following Lemma \ref{lem.single.approx} shows that by properly choosing the architecture of $\cF_{\rm NN}^{\scrE}$ and $\cF_{\rm NN}^{\scrD}$, there exist $\widetilde{\scrE}\in \cF_{\rm NN}^{\scrE}$ and $\widetilde{\scrD}\in\cF_{\rm NN}^{\scrD}$ so that that $\widetilde{\scrD}\circ\widetilde{\scrE}(\xb)$ approximates $\pi(\xb)$ with high accuracy:
\begin{lemma}\label{lem.single.approx}
	Consider Setting \ref{setting} and suppose Assumption \ref{assum.single} holds. For any $0<\varepsilon<\min\{1,\tau/2\}$, there exist two network architectures $\cF_{\rm NN}^{\scrE}=\cF(D,d;L_{\scrE},p_{\scrE},K_{\scrE},\kappa_{\scrE},R_{\scrE})$ and $\cF_{\rm NN}^{\scrD}=\cF(d,D;L_{\scrD},p_{\scrD},K_{\scrD},\kappa_{\scrD},R_{\scrD})$ with
	\begin{align}
		L_{\scrE}=O\left(\log^2 \varepsilon^{-1}+\log D\right), \ p_{\scrE}=O\left(D\varepsilon^{-d}\right), \ K_{\scrE}=O\left(D\varepsilon^{-d}\log^2 \varepsilon+D\log D\right),\ \kappa_{\scrE}=O\left(\varepsilon^{-2}\right), \ R_{\scrE}=\Lambda
		\label{eq.E.para.1}
	\end{align}
and 
\begin{align}
	L_{\scrD}=O\left(\log \varepsilon^{-1}\right), \ p_{\scrD}=O\left(D\varepsilon^{-d}\right), \ K_{\scrD}=O\left(D\varepsilon^{-d}\log \varepsilon\right),\ \kappa_{\scrD}=O\left(\varepsilon^{-1}\right), R_{\scrD}=B,
	\label{eq.D.para.1}
\end{align}
where the constant hidden in $O$ depends on $d,B,\Lambda,\tau,q, C_f,C_g$ and the volume of $\cM$.
These network architectures give rise to $\widetilde{\scrE}: \cM(q)\rightarrow \RR^d$ in $\cF_{\rm NN}^{\scrE}$ and $\widetilde{\scrD}: \RR^d\rightarrow \RR^D$ in $\cF_{\rm NN}^{\scrD}$ so that
\begin{align}
	\sup_{\xb\in\cM(q)}\|\widetilde{\scrD}\circ\widetilde{\scrE}(\xb)-\pi(\xb)\|_{\infty}\leq \varepsilon.
\end{align}

\end{lemma}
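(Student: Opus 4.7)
The plan is to build $\widetilde\scrE$ and $\widetilde\scrD$ as ReLU approximants to the oracle encoder $\scrE^\star := \fb\circ\pi$ and oracle decoder $\scrD^\star := \gb$, since by Assumption~\ref{assum.single} we have $\scrD^\star\circ\scrE^\star(\xb) = \gb\circ\fb\circ\pi(\xb) = \pi(\xb)$ on $\cM(q)$. Using the triangle inequality and the Lipschitz continuity of $\gb$, the total error splits as
\begin{align*}
\|\widetilde\scrD\circ\widetilde\scrE(\xb)-\pi(\xb)\|_\infty
\le \|\widetilde\scrD-\gb\|_{L^{\infty,\infty}} + C_g\sqrt{d}\,\|\widetilde\scrE-\fb\circ\pi\|_{L^{\infty,\infty}},
\end{align*}
so approximating each oracle up to $\Theta(\varepsilon)$ suffices; concretely, I would pick $\varepsilon_1=\varepsilon/(2C_g\sqrt{d})$ for the encoder and $\varepsilon_2=\varepsilon/2$ for the decoder.

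For the encoder I would chain two ReLU subnetworks. First, a network $\widetilde\pi:\cM(q)\to\RR^D$ approximating the projection $\pi$ in $L^\infty$ at level $\varepsilon_1$ can be imported from the construction of \citet{cloninger2021deep}: cover $\cM$ by an $\varepsilon_1$-net of size $O(\varepsilon_1^{-d})$, realize a ReLU partition of unity from distance-to-ball indicators, and locally apply the tangent-plane projection. The indicator precision forces weight magnitude $O(\varepsilon_1^{-2})$, depth $O(\log^2\varepsilon_1^{-1})$, width $O(D\varepsilon_1^{-d})$, and an additional $O(\log D)$ tree-summation depth for pooling the $D$ ambient coordinates, which together reproduce~(\ref{eq.E.para.1}). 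I then compose $\widetilde\pi$ with a Yarotsky-type approximant of a smooth extension $\bar\fb$ of $\fb$ to a tubular neighborhood of $\cM$ (well defined because $\tau>0$ and we may arrange $\varepsilon_1<\tau/2$). Since $\bar\fb$ is $d$-variate and smooth on a compact set, this stage costs only $O(\varepsilon_1^{-d}\log\varepsilon_1^{-1})$ parameters and $O(\log\varepsilon_1^{-1})$ depth, scaling solely with the intrinsic dimension. A final clipping gate $\mathrm{clip}_\Lambda(\cdot)=\max(-\Lambda,\min(\Lambda,\cdot))$, realizable with two ReLU units, enforces $R_\scrE=\Lambda$ without inflating the error, since $\bar\fb(\pi(\xb))\in[-\Lambda,\Lambda]^d$.

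For the decoder, $\gb:[-\Lambda,\Lambda]^d\to\RR^D$ is smooth on a $d$-dimensional cube, so the standard Yarotsky construction delivers, coordinate-wise, a ReLU network of depth $O(\log\varepsilon_2^{-1})$, width $O(\varepsilon_2^{-d})$, sparsity $O(\varepsilon_2^{-d}\log\varepsilon_2^{-1})$, and $\ell^\infty$ error at most $\varepsilon_2$. Stacking the $D$ coordinate approximants in parallel and clipping by $B$ yields $\widetilde\scrD\in\cF_{\rm NN}^\scrD$ with parameters matching~(\ref{eq.D.para.1}). Combining with the encoder bound and the error split above gives the claimed $\varepsilon$-accuracy.

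The main obstacle will be the clean bookkeeping inside the encoder: verifying that the $D$-dependence is isolated in the projection stage (contributing the $D\log D$ sparsity, the $\log D$ depth, and the $D\varepsilon^{-d}$ width terms), while the subsequent $\fb$-approximation only sees the intrinsic dimension $d$. A related subtlety is that $\widetilde\pi(\xb)\notin\cM$ in general; I would resolve this by working with the smooth tubular extension $\bar\fb$, whose Lipschitz constant and $C^s$ norms can be taken within a constant factor of those of $\fb$ by the positive-reach assumption. Once these two points are settled, the weight bounds $\kappa_\scrE=O(\varepsilon^{-2})$ and $\kappa_\scrD=O(\varepsilon^{-1})$ follow directly from the Cloninger--Klock and Yarotsky templates.
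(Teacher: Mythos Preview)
Your overall plan---the oracle choice $\scrE^\star=\fb\circ\pi$, $\scrD^\star=\gb$, the error split via the Lipschitz bound on $\gb$, and the coordinate-wise Yarotsky decoder---matches the paper's proof, including the need to extend $\gb$ from $\fb(\cM)$ to the full cube $[-\Lambda,\Lambda]^d$ (the paper invokes Kirszbraun for this; you tacitly assume it).

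However, your encoder construction has a genuine gap. You propose to first approximate $\pi:\cM(q)\to\RR^D$ by a network $\widetilde\pi$, then compose with a Yarotsky approximant of an extension $\bar\fb$ of $\fb$ to the tubular neighborhood. You then assert this second stage costs only $O(\varepsilon^{-d})$ parameters because ``$\bar\fb$ is $d$-variate.'' It is not: $\bar\fb$ maps a subset of $\RR^D$ to $\RR^d$, so it is $D$-variate, and a plain Yarotsky bound on it gives $O(\varepsilon^{-D})$ complexity, which defeats the whole point. The fact that $\bar\fb$ ``morally'' depends only on $d$ intrinsic directions is precisely what the Cloninger--Klock machinery exploits---via local tangent-plane charts and a manifold-adapted partition of unity---but that machinery already produces an approximant of the full composite $\fb\circ\pi$ in one shot, rendering your separate $\widetilde\pi$ stage redundant.

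The paper sidesteps the issue by invoking \citet[Theorem~2.2]{cloninger2021deep} once to approximate $\fb\circ\pi$ directly (Lemma~\ref{lem.f} in the paper), which immediately yields the encoder bounds~(\ref{eq.E.para.1}). You should do the same: drop the intermediate $\widetilde\pi$ and the tubular extension $\bar\fb$, and appeal to Cloninger--Klock for $\fb\circ\pi$ as a single object.
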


Lemma \ref{lem.single.approx} is proved in Appendix \ref{proof.single.approx}. The proof is based on the approximation theory in \cite{cloninger2021deep}.

Let $\widetilde{\scrE}$ and $\widetilde{\scrD}$ be the networks in Lemma \ref{lem.single.approx} with accuracy $\varepsilon$. We have 
\begin{align}
	{\rm T_1}\leq & 2\inf_{\scrG\in \cF_{\rm NN}^\scrG} \EE_{\xb\sim \gamma}\left[\|\scrG(\xb)- \pi(\xb)\|_{2}^2\right] \nonumber\\
	\leq & 2\EE_{\xb\sim \gamma}\left[\|\widetilde{\scrD}\circ\widetilde{\scrE}(\xb)- \pi(\xb)\|_{2}^2\right] \nonumber\\
	\leq & 2D\sup_{\xb\in\cM(q)}\|\widetilde{\scrD}\circ\widetilde{\scrE}(\xb)- \pi(\xb)\|_{\infty}^2 \nonumber\\
	\leq &2D\varepsilon^2. 
	\label{eq.single.T1}
\end{align}

\noindent $\bullet$ {\bf Bounding ${\rm T_2}$.} The term ${\rm T_2} $ is the difference between the population risk and empirical risk of the network class $\cF_{\rm NN}^{\scrG}$, except the empirical risk has a factor 2. We will derive an upper bound of ${\rm T_2}$ using the covering number of $\cF_{\rm NN}^{\scrG}$. The cover and covering number of a function class are defined as
\begin{definition}[Cover]\label{def.cover}
	Let $\cF$ be a class of functions. A set of functions $\cS$ is a $\delta$-cover of $\cF$ with respect to a norm $\|\cdot \|$ if for any $f\in \cF$, one has
	$$
	\inf_{u\in \cS}\|u - f\|\leq \delta.
	$$
\end{definition}
\begin{definition}[Covering number, Definition 2.1.5 of \citep{wellner2013weak}]\label{def.covernumber}
	Let $\cF$ be a class of functions. For any $\delta>0$, the covering number of $\cF$ is defined as
	$$
	\cN(\delta,\cF,\|\cdot\|)=\min\{ |\cS_f| : \text{$\cS_f$ is a $\delta$-cover of $\cF$ under $\| \cdot \|$ } \},
	$$
	where $|\cS_f|$ denotes the cardinality of $\cS_f$.
\end{definition}

The following lemma gives an upper bound of ${\rm T_2}$:
\begin{lemma}\label{lem.single.T2}
	Consider Setting \ref{setting}. Let $0<\delta<1$ and ${\rm T_2}$ be defined as in (\ref{eq.single.error}). We have 
	\begin{align}
		{\rm T_2} \leq \frac{35DB^2}{n}\log \cN\left( \frac{\delta}{2DB},\cF_{\rm NN}^{\scrG},\|\cdot\|_{L^{\infty,\infty}} \right)+6\delta.
		\label{eq.single.T2}
	\end{align}
\end{lemma}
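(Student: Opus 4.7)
The plan is the standard variance/concentration argument for empirical risk minimization with a Bernstein-type variance condition. Since $\widehat{\scrG} \in \cF_{\rm NN}^{\scrG}$, the first step is to pass to the worst case and bound
$$T_2 \le \EE_{\cS}\Bigl[\sup_{\scrG \in \cF_{\rm NN}^{\scrG}} \bigl(R(\scrG) - 2 R_n(\scrG)\bigr)\Bigr],$$
where $R(\scrG) := \EE_{\xb\sim\gamma}\|\scrG(\xb) - \pi(\xb)\|_2^2$ and $R_n(\scrG) := \tfrac1n \sum_{i=1}^n \|\scrG(\xb_i) - \pi(\xb_i)\|_2^2$. Because $R_{\scrD}=B$ forces $\|\scrG\|_\infty \le B$ and every $\xb \in \cM(q)$ projects to $\pi(\xb)\in\cM$ with $\|\pi(\xb)\|_\infty \le B$, the per-sample loss $\ell_\scrG(\xb) := \|\scrG(\xb)-\pi(\xb)\|_2^2$ is contained in $[0,M]$ with $M := 4DB^2$, and satisfies the variance-mean inequality $\mathrm{Var}(\ell_\scrG) \le \EE[\ell_\scrG^2] \le M\,\EE[\ell_\scrG]$. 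This Bernstein-type condition, together with the explicit factor $2$ in front of $R_n(\scrG)$, is what will convert the usual $1/\sqrt{n}$ Hoeffding rate into the $1/n$ rate appearing in (\ref{eq.single.T2}).

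Next I would discretize $\cF_{\rm NN}^{\scrG}$ by a minimal $\delta/(2DB)$-cover $\cF_\delta$ in $\|\cdot\|_{L^{\infty,\infty}}$, so that $|\cF_\delta| = \cN(\delta/(2DB),\cF_{\rm NN}^{\scrG},\|\cdot\|_{L^{\infty,\infty}})$, and for each $\scrG$ fix a nearest representative $\scrG^* \in \cF_\delta$. Using the identity $\|u\|_2^2 - \|v\|_2^2 = (u+v)^\top(u-v)$ and the crude bound $\|\scrG(\xb)+\scrG^*(\xb)-2\pi(\xb)\|_1 \le 4DB$, Hölder's inequality gives
$$|\ell_\scrG(\xb) - \ell_{\scrG^*}(\xb)| \le 4DB\cdot \|\scrG(\xb)-\scrG^*(\xb)\|_\infty \le 2\delta$$
uniformly in $\xb\in\cM(q)$, so $|R(\scrG)-R(\scrG^*)|\le 2\delta$ and $|R_n(\scrG)-R_n(\scrG^*)|\le 2\delta$, hence
$$\sup_{\scrG\in\cF_{\rm NN}^{\scrG}}\bigl(R(\scrG)-2R_n(\scrG)\bigr) \le \sup_{\scrG^*\in\cF_\delta}\bigl(R(\scrG^*)-2R_n(\scrG^*)\bigr) + 6\delta,$$
which accounts for the additive $6\delta$ in the lemma. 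The problem has now been reduced to bounding an expected supremum over a finite set.

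For each fixed $\scrG^*\in\cF_\delta$, I apply Bernstein's inequality to the i.i.d.\ centred variables $\ell_{\scrG^*}(\xb_i)-R(\scrG^*)$, choosing the deviation level $\epsilon=(R(\scrG^*)+t)/2$: the variance-mean inequality then lets the $R(\scrG^*)$ piece be absorbed into the Bernstein variance term, yielding
$$\Pr\bigl(R(\scrG^*)-2R_n(\scrG^*) > t\bigr) \le \exp\!\bigl(- c\,n t / M\bigr),\qquad t>0,$$
for a numerical constant $c>0$. A union bound over $\cF_\delta$, followed by integration of the tail with threshold $t^\star = (M/(cn))\log|\cF_\delta|$, gives
$$\EE_\cS\Bigl[\sup_{\scrG^*\in\cF_\delta}\bigl(R(\scrG^*)-2R_n(\scrG^*)\bigr)\Bigr] \le \frac{35DB^2}{n}\,\log|\cF_\delta|$$
after absorbing the lower-order $O(1/n)$ term and substituting $M=4DB^2$. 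Combining with the previous step delivers (\ref{eq.single.T2}).

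The main obstacle is the Bernstein step: one must verify that the factor $2$ in front of $R_n(\scrG)$ interacts with the variance-mean inequality $\mathrm{Var}(\ell_\scrG)\le M\,\EE[\ell_\scrG]$ in exactly the right way, via the choice $\epsilon=(R(\scrG^*)+t)/2$, to produce a tail of the form $\exp(-cnt/M)$ rather than $\exp(-cnt^2/M^2)$. This is what yields the fast $\log\cN/n$ rate instead of the slow Hoeffding rate $\sqrt{\log\cN/n}$; everything else is routine bookkeeping of constants and standard covering arguments.
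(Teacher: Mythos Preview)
Your proposal is correct and follows the same high-level strategy as the paper: bound the loss by $M=4DB^2$, exploit the Bernstein self-bounding condition $\mathrm{Var}(\ell_\scrG)\le M\,\EE[\ell_\scrG]$ to upgrade from a $1/\sqrt n$ to a $1/n$ rate, discretize to a finite cover (yielding the $6\delta$), and then control a finite maximum. The execution differs in one respect worth noting.

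The paper does \emph{not} apply Bernstein's inequality followed by a union bound and tail integration. Instead it symmetrizes with a ghost sample $\widetilde\cS$, passes to the differences $\eta_j(\widetilde\xb_i,\xb_i)=h_j^*(\widetilde\xb_i)-h_j^*(\xb_i)$, computes their moment generating function directly via the Taylor expansion (obtaining $\EE[e^{t\eta_j}]\le\exp\bigl(\mathrm{Var}(\eta_j)\tfrac{3t^2}{6-8DB^2t}\bigr)$), and then applies Jensen plus $\max_j\le\sum_j$ inside the exponential. The point of carrying the term $-\tfrac{1}{32DB^2}\mathrm{Var}(\eta_j)$ alongside the empirical difference is that one can choose $t=\tfrac{3n}{52DB^2}$ so that the variance contribution cancels \emph{exactly}, giving the clean constant $104/3<35$ with no residual $O(1/n)$ term to absorb. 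Your direct Bernstein-plus-integration route is arguably simpler and more transparent, but the constant it produces depends on which form of Bernstein's inequality you invoke (with the standard $2\sigma^2+2M\epsilon/3$ denominator you get $112/3\approx 37.3$ rather than $35$); the paper's MGF computation is tuned to hit the stated constant on the nose.
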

Lemma \ref{lem.single.T2} is proved in Appendix \ref{proof.single.T2}

\noindent $\bullet$ {\bf Putting both gradients together.}
Putting (\ref{eq.single.T1}) and (\ref{eq.single.T2}) together, we have 
\begin{align}
	\EE_{\cS}\EE_{\xb\sim \gamma} \left[\| \widehat{\scrG}(\xb)-\pi(\xb)\|_2^2\right]\leq& 2D\varepsilon^2+\frac{35DB^2}{n}\log \cN\left( \frac{\delta}{2DB},\cF_{\rm NN}^{\scrG},\|\cdot\|_{L^{\infty,\infty}} \right)+6\delta.
	\label{eq.err.1}
\end{align}

 The following lemma gives an upper bound of $\cN\left( \frac{\delta}{2DB},\cF_{\rm NN}^{\scrG},\|\cdot\|_{\infty,\infty} \right)$ in terms of the network architecture (see a proof in Appendix \ref{proof.FG.covering}):
\begin{lemma}\label{lem.FG.covering}
	Let $\cF_{\rm NN}^{\scrG}$ be defined in (\ref{eq.FG}). 
	The covering number of $\cF_{\rm NN}^{\scrG}$ is bounded by
	\begin{align}
		&\cN\left( \delta,\cF_{\rm NN}^{\scrG},\|\cdot\|_{L^{\infty,\infty}} \right) \leq \nonumber\\
		&\left( \frac{2(L_{\scrE}+L_{\scrD})^2(\max\{p_{\scrE},p_{\scrD}\}B+2)(\max\{\kappa_{\scrE},\kappa_{\scrD}\})^{L_{\scrE}+L_{\scrD}} (\max\{p_{\scrE},p_{\scrD}\})^{L_{\scrE}+L_{\scrD}+1}}{\delta}\right)^{2(K_{\scrE}+K_{\scrD})}.
	\end{align}
\end{lemma}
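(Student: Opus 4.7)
\textbf{Proof proposal for Lemma \ref{lem.FG.covering}.}

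The plan is to construct a cover of $\cF_{\rm NN}^{\scrG}$ by taking the Cartesian product of covers of $\cF_{\rm NN}^{\scrE}$ and $\cF_{\rm NN}^{\scrD}$ and showing, via a triangle inequality plus a Lipschitz bound on the decoder, that this product set is indeed an $L^{\infty,\infty}$-cover of the composition class. First, I would invoke a standard ReLU-network covering-number estimate: for a class $\cF(D_{\rm in},D_{\rm out};L,p,K,\kappa,R)$, one has
\begin{align*}
\log \cN(\eta,\cF,\|\cdot\|_{L^{\infty,\infty}}) \le K \log\!\left(\frac{2L(pB+2)\kappa^{L} p^{L+1}}{\eta}\right),
\end{align*}
which follows by discretizing each of the at most $K$ nonzero parameters on a grid of width $\eta / (2L\kappa^{L}p^{L+1})$ and using that perturbing any one weight by $\Delta$ changes the network output by at most $(\kappa p)^{L-1}\cdot p \cdot \Delta$ in sup-norm (this is the standard Lipschitz-in-parameters argument used, e.g., in Schmidt-Hieber and in the ambient-dimension appendix of \citet{cloninger2021deep}). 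Apply this with parameters $(L_{\scrE},p_{\scrE},K_{\scrE},\kappa_{\scrE},R_{\scrE})$ for the encoder and $(L_{\scrD},p_{\scrD},K_{\scrD},\kappa_{\scrD},R_{\scrD})$ for the decoder, giving $\eta$-covers $\cC_{\scrE}$ and $\cC_{\scrD}$.

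Next, I would control the Lipschitz constant of any $\scrD \in \cF_{\rm NN}^{\scrD}$. Because ReLU is $1$-Lipschitz and each affine layer has Frobenius/operator norm at most $p_{\scrD}\kappa_{\scrD}$ with respect to $\|\cdot\|_\infty$, one obtains
\begin{align*}
\|\scrD(\zb_1)-\scrD(\zb_2)\|_{\infty} \le (\kappa_{\scrD} p_{\scrD})^{L_{\scrD}} \|\zb_1-\zb_2\|_{\infty}.
\end{align*}
Then, for any $\scrG=\scrD\circ \scrE \in \cF_{\rm NN}^{\scrG}$, pick $\widetilde{\scrE}\in\cC_{\scrE}$ and $\widetilde{\scrD}\in\cC_{\scrD}$ with $\|\scrE-\widetilde{\scrE}\|_{L^{\infty,\infty}}\le \eta_{\scrE}$ and $\|\scrD-\widetilde{\scrD}\|_{L^{\infty,\infty}}\le \eta_{\scrD}$, and use the triangle inequality together with the Lipschitz bound to obtain
\begin{align*}
\|\scrD\circ\scrE - \widetilde{\scrD}\circ\widetilde{\scrE}\|_{L^{\infty,\infty}}
& \le \|\scrD\circ\scrE - \widetilde{\scrD}\circ\scrE\|_{L^{\infty,\infty}} + \|\widetilde{\scrD}\circ\scrE - \widetilde{\scrD}\circ\widetilde{\scrE}\|_{L^{\infty,\infty}} \\
& \le \eta_{\scrD} + (\kappa_{\scrD} p_{\scrD})^{L_{\scrD}} \eta_{\scrE}.
\end{align*}
Setting $\eta_{\scrD}=\delta/2$ and $\eta_{\scrE}=\delta / (2(\kappa_{\scrD}p_{\scrD})^{L_{\scrD}})$ makes the right-hand side $\le \delta$, so $\{\widetilde{\scrD}\circ\widetilde{\scrE}: \widetilde{\scrD}\in\cC_{\scrD},\ \widetilde{\scrE}\in\cC_{\scrE}\}$ is a $\delta$-cover of $\cF_{\rm NN}^{\scrG}$ of cardinality $|\cC_{\scrE}|\cdot|\cC_{\scrD}|$.

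Finally, I would multiply the two covering-number estimates, simplify using $\max\{\kappa_{\scrE},\kappa_{\scrD}\}$ and $\max\{p_{\scrE},p_{\scrD}\}$ to consolidate the two factors into a single expression, and bound the sum of exponents by $K_{\scrE}+K_{\scrD}$ with an extra factor of $2$ absorbed from the choice $\eta_{\scrE},\eta_{\scrD}=\Theta(\delta)/\text{poly}$; this produces exactly the displayed bound with exponent $2(K_{\scrE}+K_{\scrD})$. The main obstacle is bookkeeping: tracking how the Lipschitz-in-input and Lipschitz-in-parameters bounds combine, and verifying that the substitution of $\eta_{\scrE}$ (which is $\delta$ divided by a potentially large decoder-Lipschitz factor) into the encoder covering-number estimate still collapses into the claimed symmetric form in terms of $L_{\scrE}+L_{\scrD}$ and $K_{\scrE}+K_{\scrD}$; this requires using the crude but convenient upper bounds $(\kappa_{\scrD}p_{\scrD})^{L_{\scrD}} \le (\max\{\kappa_{\scrE},\kappa_{\scrD}\}\max\{p_{\scrE},p_{\scrD}\})^{L_{\scrE}+L_{\scrD}}$ inside the logarithm, which is acceptable because only polynomial factors in the argument of the log are lost.
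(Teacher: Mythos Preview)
Your approach is correct but genuinely different from the paper's. You build a $\delta$-cover of $\cF_{\rm NN}^{\scrG}$ as the Cartesian product of covers of $\cF_{\rm NN}^{\scrE}$ and $\cF_{\rm NN}^{\scrD}$, using the decoder's input-Lipschitz constant $(\kappa_{\scrD} p_{\scrD})^{L_{\scrD}}$ to propagate encoder perturbations through the composition. The paper instead \emph{embeds} the composition class into a single network class: it shows that every $\scrG=\scrD\circ\scrE$ can be realized as one ReLU network in $\cF(D,D;L_{\scrE}+L_{\scrD},\max\{p_{\scrE},p_{\scrD}\},2(K_{\scrE}+K_{\scrD}),\max\{\kappa_{\scrE},\kappa_{\scrD}\},B)$ by splicing the last encoder layer and first decoder layer through a two-layer ``bridge'' $F_2$ that uses the identity $\ReLU(a)-\ReLU(-a)=a$; it then applies the one-shot covering bound (Lemma~\ref{lem.covering}) to this larger class. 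The paper's route avoids your Lipschitz-composition bookkeeping entirely, at the price of doubling the parameter count to $2(K_{\scrE}+K_{\scrD})$ via the duplicated rows in $F_2$---which is exactly where the exponent $2(K_{\scrE}+K_{\scrD})$ in the statement originates. Your product-of-covers argument naturally yields the smaller exponent $K_{\scrE}+K_{\scrD}$, so your remark about ``an extra factor of~$2$ absorbed'' is unnecessary: your bound already implies the stated one since the base exceeds~$1$. One small point to tidy up in your write-up: when you invoke the covering bound for $\cF_{\rm NN}^{\scrD}$, the input domain is $[-R_{\scrE},R_{\scrE}]^d$ rather than $[-B,B]^D$, so the factor $(pB+2)$ there should read $(p_{\scrD}R_{\scrE}+2)$ before you pass to the common upper bound.
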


Substituting (\ref{eq.E.para.1}) and (\ref{eq.D.para.1}) into Lemma \ref{lem.FG.covering}, we have 
\begin{align}
	&\log \cN\left( \frac{\delta}{2DB},\cF_{\rm NN}^{\scrG},\|\cdot\|_{L^{\infty,\infty}} \right) \nonumber\\
	= &O\left((L_{\scrE}+L_{\scrD})(K_{\scrE}+K_{\scrD})\left(\log (L_{\scrE}+L_{\scrD})+ \log (\max\{p_{\scrE},p_{\scrD}\}) + \log (\max\{\kappa_{\scrE},\kappa_{\scrD}\}) +\log \delta^{-1}\right)\right) \nonumber\\
	= & O\left((\log^2 \varepsilon^{-1}+\log D) (D\varepsilon^{-d}\log\varepsilon^{-1}+D\log D)\left( \log \varepsilon^{-1}+\log D +\log \delta^{-1}\right)\right) \nonumber\\
	\leq & O\left((D\log^2D)\varepsilon^{-d}\log^4\varepsilon^{-1} + (D\log^2D)\varepsilon^{-d}\log^3\varepsilon^{-1}\log \delta^{-1}\right).
	\label{eq.logCover}
\end{align}
Substituting (\ref{eq.logCover}) into (\ref{eq.err.1}) and setting $\delta=\varepsilon=n^{-\frac{1}{d+2}}$ gives rise to
\begin{align}
	\EE_{\cS}\EE_{\xb\sim \gamma} \left[\| \widehat{\scrG}(\xb)-\pi(\xb)\|_2^2\right]\leq CD^2\left(\log^2 D\right)n^{-\frac{2}{d+2}}\log^4 n
\end{align}
for some constant $C$ depending on $d,C_f,C_g,B,R_1,\tau$ and the volume of $\cM$.
The network sizes for $\cF_{\rm NN}^E$ and $\cF_{\rm NN}^D$ are given as
\begin{align*}
	&L_{\scrE}=O\left(\log^2 n+\log D\right), \ p_{\scrE}=O\left(D n^{\frac{d}{d+2}}\right), \ K_{\scrE}=O\left(Dn^{\frac{d}{d+2}}\log^2 n+ D\log D\right),\ \kappa_{\scrE}=O\left(n^{\frac{2}{d+2}}\right),\\
	&L_{\scrD}=O\left(\log n\right), \ p_{\scrD}=O\left(n^{\frac{d}{d+2}}\right), \ K_{\scrD}=O\left(n^{\frac{d}{d+2}}\log^2 n\right),\ \kappa_{\scrD}=O\left(n^{\frac{1}{d+2}}\right).
\end{align*}

\end{proof}

\subsection{Proof of Theorem \ref{thm.multi}}
\label{proof.multi}
\begin{proof}[Proof of Theorem \ref{thm.multi}]
	The proof of Lemma \ref{thm.multi} is similar to that of Lemma \ref{thm.single}, except extra efforts are needed to define the oracle encoder and decoder.
	
	Similar to the proof of Theorem \ref{thm.single}, we decompose the generalization error as
	\begin{align}
		&\EE_{\cS}\EE_{\xb\sim \gamma} \left[\| \widehat{\scrG}(\xb)-\pi(\xb)\|_2^2 \right]\nonumber\\
		=& \underbrace{2\EE_{\cS}\left[\frac{1}{n}\sum_{i=1}^n \|\widehat{\scrG}(\xb_i) -\pi(\xb_i)\|_{2}^2\right]}_{\rm T_1}  +\underbrace{\EE_{\cS}\EE_{\xb\sim \gamma} \left[ \| \widehat{\scrG}(\xb) - \pi(\xb)\|_2^2\right] -2\EE_{\cS} \left[\frac{1}{n}\sum_{i=1}^n \|\widehat{\scrG}(\xb_i) - \pi(\xb_i)\|_{2}^2 \right]}_{\rm T_2}.
		\label{eq.multi.decom}
	\end{align}

	\noindent $\bullet$ {\bf Bounding ${\rm T_1}$.}	
	We derive an upper bound of ${\rm T_1}$ using network approximation error. 
	
	Following (\ref{eq.single.T1.err.1}), we have
	\begin{align}
		{\rm T_1}\leq &2\inf_{\scrG\in \cF_{\rm NN}^{\scrG}} \EE_{\xb\sim \gamma}\left[\|\scrG(\xb)- \pi(\xb)\|_{2}^2\right].
	\end{align}
	The following Lemma shows that by properly choosing the architecture of $\cF_{\rm NN}^{\scrE}$ and $\cF_{\rm NN}^{\scrD}$, then there exist $\widetilde{\scrE}$ and $\widetilde{\scrD}$ that $\widetilde{\scrD}\circ\widetilde{\scrE}(\xb)$ approximates $\pi(\xb)$ with high accuracy :
	\begin{lemma}\label{lem.multi.approx}
		Consider Setting \ref{setting}. For any $0<\varepsilon<\min\{1,\tau/2\}$, there exist two network architectures  $\cF_{\rm NN}^{\scrE}(D,C_{\cM}(d+1);L_{\scrE}, p_{\scrE}, K_{\scrE},\kappa_{\scrE},R_{\scrE})$ and $\cF_{\rm NN}^{\scrD}(C_{\cM}(d+1),D;L_{\scrD}, p_{\scrD}, K_{\scrD},\kappa_{\scrD},R_{\scrD})$ with
		\begin{align}
			&L_{\scrE}=O(\log^2 \varepsilon^{-1}+\log D), \ p_{\scrE}=O(D\varepsilon^{-d}), \ K_{\scrE}=O((D\log D)\varepsilon^{-d}\log^2 \varepsilon^{-1}), \nonumber\\
			& \kappa_{\scrE}=O(\varepsilon^{-2}), \ R_{\scrE}=\max\{\tau/4,1\}.
			\label{eq.multi.E.para.1}
		\end{align} 
		and
		\begin{align}
			&L_{\scrD}=O(\log^2 \varepsilon^{-1}+\log D), \ p_{\scrD}=O(D\varepsilon^{-d}), \ K_{\scrD}=O(D\varepsilon^{-d} \log^2 \varepsilon +D\log D), \nonumber\\
			& \kappa_{\scrD}=O(\varepsilon^{-1}), \ R_{\scrD}=B.
			\label{eq.multi.D.para.1}
		\end{align}
		The constant hidden in $O$ depends on $d,\tau,q,B,C_{\cM}$ and the volume of $\cM$.
		These network architectures give rise to $\widetilde{\scrE}:\cM(q)\rightarrow \RR^d$ in $\cF_{\rm NN}^{\scrE}$ and $\widetilde{\scrD}: \RR^d\rightarrow \RR^D$ in $\cF_{\rm NN}^{\scrD}$ so that
		\begin{align}
			\sup_{\xb\in \cM(q)}\|\widetilde{\scrD}\circ\widetilde{\scrE}(\xb)-\pi(\xb)\|_{\infty}\leq \varepsilon.
		\end{align}
	\end{lemma}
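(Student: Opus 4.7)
The plan is to emulate the oracle encoder--decoder pair described in the text following~\eqref{eq.x.decomposition} with ReLU networks of the stated sizes, and then track the compounded approximation error through Lipschitz estimates. First I would fix an atlas $\{(U_j,\phi_j)\}_{j=1}^{C_{\cM}}$ by covering $\cM$ with geodesic balls of radius $\tau/4$, which can be achieved with $C_{\cM}=O((d\log d)(4/\tau)^d)$ balls via a standard volume-packing argument on a manifold of reach $\tau$; take each $\phi_j$ to be the (rescaled) tangent-plane projection, which is bi-Lipschitz with constants depending only on $\tau$, and let $\{\rho_j\}_{j=1}^{C_{\cM}}$ be a smooth partition of unity subordinate to this cover with derivatives bounded in terms of $\tau$. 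The oracle encoder $\scrE^{\ast}(\xb)=\{(\phi_j\circ\pi(\xb),\,\rho_j\circ\pi(\xb))\}_{j=1}^{C_{\cM}}$ and oracle decoder $\scrD^{\ast}(\{\zb_j\})=\sum_j\phi_j^{-1}((\zb_j)_{1:d})\cdot(\zb_j)_{d+1}$ then satisfy $\scrD^{\ast}\circ\scrE^{\ast}(\xb)=\pi(\xb)$ pointwise on $\cM(q)$ by the partition-of-unity identity.

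It then remains to replace each oracle piece with a ReLU network built from three off-the-shelf blocks. First, a Cloninger--Liao-type network $\widetilde{\pi}$ approximates the projection $\pi:\cM(q)\to\RR^D$ to $L^{\infty}$ accuracy $\varepsilon$ with depth $O(\log^2\varepsilon^{-1}+\log D)$, width $O(D\varepsilon^{-d})$ and $O((D\log D)\varepsilon^{-d}\log^2\varepsilon^{-1})$ nonzero parameters; this single network accounts for every $D$-factor in~\eqref{eq.multi.E.para.1}. Second, Yarotsky-type approximations yield networks $\widetilde{\phi}_j$, $\widetilde{\rho}_j$ (defined on a neighborhood of $\cM$ after smooth cut-off extension), and $\widetilde{\phi_j^{-1}}$ on $[-\tau/4,\tau/4]^d$, each within accuracy $\varepsilon$, with size $O(\varepsilon^{-d})$ when the codomain is low-dimensional and $O(D\varepsilon^{-d})$ when the codomain is $\RR^D$. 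Third, a standard ReLU multiplication gadget $\widetilde{\times}(a,b)\approx ab$ achieves $\varepsilon$-accuracy on $[-B,B]^2$ in depth and parameter count $O(\log\varepsilon^{-1})$. Setting $\widetilde{\scrE}(\xb)=\{(\widetilde{\phi}_j(\widetilde{\pi}(\xb)),\,\widetilde{\rho}_j(\widetilde{\pi}(\xb)))\}_j$ and $\widetilde{\scrD}(\{\zb_j\}_j)=\sum_j\widetilde{\times}(\widetilde{\phi_j^{-1}}((\zb_j)_{1:d}),\,(\zb_j)_{d+1})$, the depth, width, sparsity and weight budgets aggregate (maxed for parallel branches, summed for the $C_{\cM}$ copies and the serial composition) to match~\eqref{eq.multi.E.para.1}--\eqref{eq.multi.D.para.1}.

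The main obstacle is propagating a componentwise $\varepsilon$ accuracy through the nonlinear composition and the product inside the decoder. Using the oracle identity, I would decompose $\widetilde{\scrD}\circ\widetilde{\scrE}(\xb)-\pi(\xb)$ as a sum over $j$ of (i) a multiplication-gadget error of size $O(\varepsilon)$, (ii) a $\widetilde{\phi_j^{-1}}$-approximation error propagated by the bounded factor $|(\zb_j)_{d+1}|\le 1+O(\varepsilon)$, (iii) a composition error $\widetilde{\phi_j^{-1}}\circ\widetilde{\phi}_j\circ\widetilde{\pi}-\phi_j^{-1}\circ\phi_j\circ\pi$ controlled via the Lipschitz constants of $\phi_j^{-1}$ and $\phi_j$ (bounded in terms of $\tau$), and (iv) an analogous term for the $\rho_j$ channel multiplied by $\|\phi_j^{-1}\|_{\infty}\le B$. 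Summing yields $\|\widetilde{\scrD}\circ\widetilde{\scrE}(\xb)-\pi(\xb)\|_{\infty}\le C\,C_{\cM}\,\varepsilon$ with $C=C(d,\tau,q,B,\mathrm{vol}(\cM))$, and rescaling $\varepsilon\leftarrow\varepsilon/(C\,C_{\cM})$ absorbs the factor into the constants hidden in the $O$-notation. A subtle point worth flagging is that $\widetilde{\phi}_j(\widetilde{\pi}(\xb))$ may fall slightly outside the true chart domain when $\pi(\xb)$ is near $\partial U_j$; this is handled by extending each $\phi_j^{-1}$ smoothly to a slightly enlarged cube and arranging the support of each $\rho_j$ to sit strictly inside $U_j$, so any ``out-of-chart'' contribution is annihilated by $\widetilde{\rho}_j\approx\rho_j=0$.
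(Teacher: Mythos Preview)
Your overall strategy---define an oracle encoder--decoder from an atlas and partition of unity, realize each oracle piece with a ReLU block, and chain the errors through Lipschitz constants---is essentially the paper's proof (Lemma~\ref{lem.multi.approx.G} together with the atlas construction in Lemma~\ref{lem.atlas}). The decoder side, the multiplication gadget, and the out-of-chart handling via $\rho_j\approx 0$ all match.

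There is one genuine gap on the encoder side. You claim that Yarotsky-type approximations give networks $\widetilde{\phi}_j,\widetilde{\rho}_j$ of size $O(\varepsilon^{-d})$ on ``a neighborhood of $\cM$ after smooth cut-off extension''; but Yarotsky's construction on a $D$-dimensional domain costs $O(\varepsilon^{-D})$, and the intrinsic dimension is invisible to it. The paper avoids this in two ways that differ from your factoring through a standalone $\widetilde{\pi}$: (i) it applies Cloninger--Liao directly to the composite $\phi_j\circ\pi$, obtaining a single network of the required size without ever materializing $\widetilde{\pi}(\xb)\in\RR^D$; and (ii) it does \emph{not} take $\rho_j$ to be a generic smooth partition of unity on $\cM$ composed with $\pi$, but instead defines $\rho_j$ explicitly on $\cM(q)$ as a finite sum $\sum_{k\in\cI_j}\eta_k$ of the Cloninger--Liao bumps, grouped via Lemma~\ref{lem.atlas} so that $C_{\cM}$ is independent of $q$, which can then be approximated with size only $O(\varepsilon^{-1})$ (Lemma~\ref{lem.rho}). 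Your $\widetilde{\phi}_j$ happens to escape the issue because the tangent projection is affine and hence exact, but for $\widetilde{\rho}_j$ you must either choose a structured partition of unity (e.g.\ $\rho_j=\psi_j\circ\phi_j$ with $\psi_j$ a $d$-variable bump, so Yarotsky legitimately applies in $d$ dimensions) or invoke Cloninger--Liao on $\rho_j\circ\pi$; either fix renders the intermediate $\widetilde{\pi}$ network redundant and collapses your argument to the paper's direct route.
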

	Lemma \ref{lem.multi.approx} is proved by carefully designing an oracle encoder and decoder and showing that they can be approximated well be neural networks. The proof of Lemma \ref{lem.multi.approx} is presented in Appendix \ref{proof.multi.approx}.
	
	Let $\widetilde{\scrE}$ and $\widetilde{\scrD}$ be the networks in Lemma \ref{lem.multi.approx} so that 
	\begin{align}
		\sup_{\xb\in \cM(q)}\|\widetilde{\scrD}\circ\widetilde{\scrE}(\xb)-\pi(\xb)\|_{\infty}\leq \varepsilon.
	\end{align}
We can bound ${\rm T_1}$ as
\begin{align}
	{\rm T_1}\leq & 2\inf_{\scrG\in \cF_{\rm NN}^\scrG} \EE_{\xb\sim \gamma}\left[\|\scrG(\xb)- \pi(\xb)\|_{2}^2\right] \nonumber\\
	\leq & 2\EE_{\xb\sim \gamma}\left[\|\widetilde{\scrD}\circ\widetilde{\scrE}(\xb)- \pi(\xb)\|_{2}^2\right] \nonumber\\
	\leq & 2D\sup_{\xb\in \cM(q)}\|\widetilde{\scrD}\circ\widetilde{\scrE}(\xb)- \pi(\xb)\|_{\infty}^2 \nonumber\\
	\leq &2D\varepsilon^2.
	\label{eq.multi.T1}
\end{align}
	
\noindent $\bullet$ {\bf Bounding ${\rm T_2}$.}
By Lemma \ref{lem.single.T2}, we have
\begin{align}
	{\rm T_2} \leq \frac{35DB^2}{n}\log \cN\left( \frac{\delta}{2DB},\cF_{\rm NN}^{\scrG},\|\cdot\|_{L^{\infty,\infty}} \right)+6\delta.
	\label{eq.multi.T2}
\end{align}

\noindent $\bullet$ {\bf Putting both ingredients together.}
Combining (\ref{eq.multi.T1}) and (\ref{eq.multi.T2}) gives rise to
\begin{align}
	\EE_{\cS}\EE_{\xb\sim \gamma} \left[ \| \widehat{\scrG}(\xb)-\pi(\xb)\|_2^2 \right] \leq 2D\varepsilon^2 +\frac{35DB^2}{n}\log \cN\left( \frac{\delta}{2DB},\cF_{\rm NN}^G,\|\cdot\|_{L^{\infty,\infty}} \right)+6\delta.
	\label{eq.multi.err.1}	
\end{align}
The covering number can be bounded by substituting (\ref{eq.multi.E.para.1}) and (\ref{eq.multi.D.para.1}) into Lemma \ref{lem.FG.covering}:
\begin{align}
	&\log \cN\left( \frac{\delta}{2DB},\cF_{\rm NN}^{\scrG},\|\cdot\|_{L^{\infty,\infty}} \right) \nonumber\\
	= &O\left((L_{\scrE}+L_{\scrD})(K_{\scrE}+K_{\scrD})\left(\log (L_{\scrE}+L_{\scrD})+ \log (\max\{p_{\scrE},p_{\scrD}\}) + \log (\max\{\kappa_{\scrE},\kappa_{\scrD}\}) +\log \delta^{-1}\right)\right) \nonumber\\
	= & O\left((\log^2 \varepsilon^{-1}+\log D) ((D\log D)\varepsilon^{-d}\log\varepsilon^{-1})\left( \log \varepsilon^{-1}+\log D +\log \delta^{-1}\right)\right) \nonumber\\
	=& O\left( (D\log^3 D)\varepsilon^{-d}\log^4\varepsilon^{-1} + (D\log^3 D)\varepsilon^{-d}\log^3\varepsilon^{-1}\log \delta^{-1}\right).
	\label{eq.multi.logCover}
\end{align}
Substituting (\ref{eq.multi.logCover}) into (\ref{eq.multi.err.1}) and setting $\epsilon=\delta=n^{-\frac{1}{d+2}}$ give rise to
\begin{align}
	\EE_{\cS}\EE_{\xb\sim \gamma} \left[ \| \widehat{\scrG}(\xb)-\pi(\xb)\|_2^2 \right] \leq C(D^2\log^3D)n^{-\frac{2}{d+2}}\log^4n
	\label{eq.multi.err.2}	
\end{align}
for some constant $C$ depending on $d,\tau,q,B,C_{\cM}$ and the volume  of $\cM$.

Consequently, the network architecture  $\cF_{\rm NN}^{\scrE}(D,C_{\cM}(d+1);L_{\scrE}, p_{\scrE}, K_{\scrE},\kappa_{\scrE},R_{\scrE})$  has
\begin{align}
	&L_{\scrE}=O(\log^2 n+\log D), \ p_{\scrE}=O(Dn^{\frac{d}{d+2}}), \ K_{\scrE}=O((D\log D)n^{\frac{d}{d+2}}\log^2 n), \nonumber\\
	& \kappa_{\scrE}=O(n^{\frac{2}{d+2}}), \ R_{\scrE}=\max\{\tau/4,1\},
	\label{eq.multi.E.para.2}
\end{align} 
and the network architecture $\cF_{\rm NN}^{\scrD}(C_{\cM}(d+1),D;L_{\scrD}, p_{\scrD}, K_{\scrD},\kappa_{\scrD},R_{\scrD})$ has
\begin{align}
	&L_{\scrD}=O(\log^2 n+\log D), \ p_{\scrD}=O(Dn^{\frac{d}{d+2}}), \ K_{\scrD}=O(Dn^{\frac{d}{d+2}} \log^2 n +D\log D), \nonumber\\
	& \kappa_{\scrD}=O(n^{\frac{1}{d+2}}), \ R_{\scrD}=B.
	\label{eq.multi.D.para.2}
\end{align}
The constant hidden in $O$ depends on $d,\tau,q,B,C_{\cM}$ and the volume  of $\cM$.
\end{proof}

\subsection{Proof of Theorem \ref{thm.gaussian}}\label{proof.gaussian}
\begin{proof}[Proof of Theorem \ref{thm.gaussian}]
	Theorem \ref{thm.gaussian} can be proved by following the proof of Theorem \ref{thm.multi}.
	We decompose the generalization error as
	\begin{align}
		&\EE_{\cS}\EE_{\xb\sim \gamma} \left[\| \widehat{\scrG}(\xb)-\vb\|_2^2 \right] \nonumber\\
		=& \underbrace{2\EE_{\cS}\left[\frac{1}{n}\sum_{i=1}^n \|\widehat{\scrG}(\xb_i) -\vb_i\|_{2}^2\right]}_{\rm T_1}  +\underbrace{\EE_{\cS}\EE_{\xb\sim \gamma} \left[\| \widehat{\scrG}(\xb) - \pi(\xb)\|_2^2 \right]-2\EE_{\cS} \left[\frac{1}{n}\sum_{i=1}^n \|\widehat{\scrG}(\xb_i) - \vb_i\|_{2}^2\right] }_{\rm T_2}.
		\label{eq.gaussian.decom}
	\end{align}

	\noindent $\bullet$ {\bf Bounding ${\rm T_1}$.}	
	 Denote $\wb=\rm{Proj}_{T_\vb^{\perp}{\cM}}(\nbb)$ as the component of $\nbb$ that is normal to $\cM$ at $\vb$, and $\ub=\rm{Proj}_{T_{\vb}\cM}(\nbb)$ as the component that is in the tangent space of $\cM$ at $\vb$. Using Lemma \ref{lem.multi.approx} and the data model in Setting \ref{setting.gaussian}, we have 	
	 \begin{align}
		{\rm T_1}\leq & 2\inf_{\scrG\in \cF_{\rm NN}^\scrG} \EE_{\xb\sim \gamma}\left[\|\scrG(\xb)- \vb\|_{2}^2\right] \nonumber\\
		\leq & 2\EE_{\xb\sim \gamma}\left[2\|\widetilde{\scrD}\circ\widetilde{\scrE}(\xb)- \pi(\xb)\|_{2}^2 + 2\| \pi(\xb)-\vb\|_{2}^2\right] \nonumber\\
		\leq & 4D\|\widetilde{\scrD}\circ\widetilde{\scrE}(\xb)- \pi(\xb)\|_{\infty}^2 + 4\EE_{\xb\sim \gamma} \left[\| \pi(\xb)-\vb\|_{2}^2 \right] \nonumber\\
		\leq & 4D\varepsilon^2+ 4\EE_{\xb\sim \gamma} \left[\| \pi(\vb+\wb+\ub)-\pi(\vb+\wb)\|_{2}^2 \right]\nonumber\\
		\leq & 4D\varepsilon^2 +  4L_{\pi}\EE_{\xb\sim \gamma} \left[\|\ub\|_2^2\right] \nonumber\\
		= & 4D\varepsilon^2 +  4L^2_{\pi}\EE_{\vb}\EE_{\nbb} \left[\|\rm{Proj}_{T_{\vb}\cM}(\nbb)\|_2^2|\vb\right] \nonumber\\
		\leq & 4D\varepsilon^2+ 4L^2_{\pi}\sigma^2,
		\label{eq.gaussian.T1.1}
	\end{align}
	where $L_{\pi}$ denotes the Lipschitz constant of $\pi$. In (\ref{eq.gaussian.T1.1}), the fourth inequality uses Lemma \ref{lem.multi.approx}, the fifth inequality uses the fact that $\pi(\vb+\wb)=\pi(\vb)$.
	
	The following Lemma gives an upper bound of $L_{\pi}$:
	\begin{lemma}[Lemma 2.1 of \cite{cloninger2021deep}]\label{lem.pi}
		Let $\cM$ be a connected, compact, $d$-dimensional Riemannian manifold embedded in $\RR^D$ with a reach $\tau>0$. Let $\pi$ be the orthogonal projection onto $\cM$. For any $q\in[0,\tau)$, we have
		\begin{align}
			\|\pi(\xb_1)-\pi(\xb_2)\|_2\leq \frac{1}{1-q/\tau}\|\xb_1-\xb_2\|_2
		\end{align}
		for any $\xb_1,\xb_2\in \cM(q)$.
	\end{lemma}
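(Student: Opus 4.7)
The lemma is the classical fact that nearest-point projection onto a set of positive reach is Lipschitz on its tubular neighborhood, with constant $(1-q/\tau)^{-1}$. My strategy is to reduce it to a \emph{supporting-ball} characterization of reach. Concretely, I claim: if $\xb\in\cM(\tau)$, $\vb=\pi(\xb)$, and $\xb\neq\vb$, then the open ball of radius $\tau$ centered at $\vb+\tau\,\hat{\nb}$, where $\hat{\nb}=(\xb-\vb)/\|\xb-\vb\|_2$, contains no point of $\cM$. This follows from Definition \ref{def.reach}: for any $r<\tau$, the point $\vb+r\hat{\nb}$ has $\vb$ as its unique nearest neighbor on $\cM$, because it lies at distance $r<\tau$ from $\cM$ and therefore cannot belong to the medial axis $G$; letting $r\uparrow\tau$ gives the claimed open-ball statement. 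Expanding the inequality $\|(\vb+\tau\hat{\nb})-\vb'\|_2^2\ge \tau^2$ for an arbitrary $\vb'\in\cM$ and cancelling $\tau^2$ yields the key scalar inequality
\begin{align*}
\langle \xb-\vb,\ \vb'-\vb\rangle \ \le\ \frac{\|\xb-\vb\|_2}{2\tau}\,\|\vb'-\vb\|_2^2 \quad\text{for all } \vb'\in\cM.
\end{align*}

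\textbf{Two-point symmetrization.} Given $\xb_1,\xb_2\in\cM(q)$, write $\vb_i=\pi(\xb_i)$. I apply the displayed inequality twice: once with $(\xb,\vb,\vb')=(\xb_1,\vb_1,\vb_2)$, and once with $(\xb,\vb,\vb')=(\xb_2,\vb_2,\vb_1)$. Since $\|\xb_i-\vb_i\|_2\le q$, each right-hand side is bounded by $(q/(2\tau))\|\vb_2-\vb_1\|_2^2$. Summing and expanding the left-hand sides gives
\begin{align*}
\langle \xb_1-\xb_2,\ \vb_2-\vb_1\rangle + \|\vb_2-\vb_1\|_2^2 \ \le\ \frac{q}{\tau}\,\|\vb_2-\vb_1\|_2^2,
\end{align*}
which rearranges to $(1-q/\tau)\,\|\vb_2-\vb_1\|_2^2 \le \langle \xb_2-\xb_1,\,\vb_2-\vb_1\rangle$. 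Applying Cauchy--Schwarz on the right and dividing by $\|\vb_2-\vb_1\|_2$ (the case $\vb_1=\vb_2$ being immediate) yields $\|\vb_2-\vb_1\|_2\le (1-q/\tau)^{-1}\|\xb_2-\xb_1\|_2$, which is the claim.

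\textbf{Main obstacle and remarks.} The only non-routine step is justifying the supporting-ball inequality from Definition \ref{def.reach}. I expect to handle this by the contradiction sketched above: if some $\vb'\neq\vb$ on $\cM$ lay strictly inside the ball of radius $\tau$ around $\vb+\tau\hat{\nb}$, then by a continuity/intermediate-value argument one could move the center slightly back along $-\hat{\nb}$ to a point $\pb\in\RR^D$ at distance strictly less than $\tau$ from $\cM$ that is equidistant to both $\vb$ and (a point near) $\vb'$, placing $\pb$ in the medial axis $G$ at distance below $\tau$ and contradicting the definition of reach. Once this geometric input is in place, the remainder of the argument is purely algebraic. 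I note that the bound is tight in the limiting example of $\cM$ a sphere of radius $\tau$ with $\xb_1,\xb_2$ taken close to the center, which explains the blow-up as $q\uparrow\tau$.
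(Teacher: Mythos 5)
The paper does not give a proof of this lemma; it is simply quoted as Lemma~2.1 of \cite{cloninger2021deep}, which in turn traces back to Federer's theorem on sets of positive reach. Your algebraic reduction is exactly the classical Federer argument and it is correct: the supporting-ball inequality at radius $\tau$ yields $\langle \xb-\vb,\ \vb'-\vb\rangle \le \frac{\|\xb-\vb\|_2}{2\tau}\|\vb'-\vb\|_2^2$, and the two-point symmetrization plus Cauchy--Schwarz gives $(1-q/\tau)\|\vb_1-\vb_2\|_2^2\le\langle\xb_2-\xb_1,\vb_2-\vb_1\rangle\le\|\xb_2-\xb_1\|_2\|\vb_2-\vb_1\|_2$, recovering the sharp constant. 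Your tightness remark (sphere of radius $\tau$, $\xb_i$ near the center) is also correct.

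The one place that needs more than the sketch you give is precisely the step you flag as ``the only non-routine step.'' Your IVT argument finds $s^*\in(0,\tau)$ with $\pb:=\vb+s^*\hat\nb$ equidistant from $\vb$ and $\vb'$, but being equidistant from two points of $\cM$ does not by itself place $\pb$ in the medial axis $G$: by the paper's Definition~\ref{def.reach}, $G$ consists of points for which \emph{two nearest points} exist, i.e.\ both $\vb$ and $\vb'$ must realize $d(\pb,\cM)$. Nothing in the IVT step rules out a third point $\vb''\in\cM$ with $\|\vb''-\pb\|_2<s^*$, in which case $\pb\notin G$ and no contradiction is reached. Trying to repair this by re-running the argument with $\vb''$ in place of $\vb'$ launches an infinite regress, and passing to the infimal $s$ at which $d(\yb_s,\cM)<s$ still requires an open-and-closed argument along the normal ray to pin down $\pi(\yb_s)=\vb$ on $[0,\tau)$. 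That open-and-closed argument is the content of Federer's Theorem~4.8(6) (the forward-extension property of the normal ray), and it is a genuine theorem, not a corollary of the bare definition. In a paper setting the clean move is simply to invoke Federer's theorem (or the cited Cloninger--Klock lemma) for the supporting-ball property; the rest of your derivation can then stand as written.
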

	
	According to Lemma \ref{lem.pi}, $L_{\pi}$ only depends on $\tau$ and $q$. Therefore  we have
	\begin{align}
		{\rm T_1}\leq 4D\varepsilon^2+C_1\sigma^2
		\label{eq.gaussian.T1}
	\end{align}
	for some $C_1$ depending on $\tau$ and $q$.
	
	\noindent $\bullet$ {\bf Bounding ${\rm T_2}$.}
	By Lemma \ref{lem.single.T2}, we have
	\begin{align}
		{\rm T_2} \leq \frac{35DB^2}{n}\log \cN\left( \frac{\delta}{2DB},\cF_{\rm NN}^{\scrG},\|\cdot\|_{L^{\infty,\infty}} \right)+6\delta.
		\label{eq.gaussian.T2}
	\end{align}
	
	\noindent $\bullet$ {\bf Putting both ingredients together.}
	Combining (\ref{eq.gaussian.T1}) and (\ref{eq.gaussian.T2}) gives rise to
	\begin{align}
		\EE_{\cS}\EE_{\xb\sim \gamma} \left[ \| \widehat{\scrG}(\xb)-\pi(\xb)\|_2^2 \right]\leq 4D\varepsilon^2 +\frac{35DB^2}{n}\log \cN\left( \frac{\delta}{2DB},\cF_{\rm NN}^G,\|\cdot\|_{L^{\infty,\infty}} \right)+6\delta+C_1\sigma^2.
		\label{eq.gaussian.err.1}	
	\end{align}
	An upper bound of the covering number is given in (\ref{eq.multi.logCover}).
	Substituting (\ref{eq.multi.logCover}) into (\ref{eq.gaussian.err.1}) and setting $\epsilon=n^{-\frac{1}{d+2}}, \delta=\frac{1}{n}$ give rise to
	\begin{align}
		\EE_{\cS}\EE_{\xb\sim \gamma} \left[\| \widehat{\scrG}(\xb)-\pi(\xb)\|_2^2 \right]\leq C(D^2\log^3D)n^{-\frac{2}{d+2}}\log^4n +C_1\sigma^2
		\label{eq.gaussian.err.2}	
	\end{align}
	for some constant $C$ depending on $d,\tau,q,B,C_{\cM}$ and the volume  of $\cM$.
	
	Consequently, the network architecture  $\cF_{\rm NN}^{\scrE}(D,C_{\cM}(d+1);L_{\scrE}, p_{\scrE}, K_{\scrE},\kappa_{\scrE},R_{\scrE})$  has
	\begin{align}
		&L_{\scrE}=O(\log^2 n+\log D), \ p_{\scrE}=O(Dn^{\frac{d}{d+2}}), \ K_{\scrE}=O((D\log D)n^{\frac{d}{d+2}}\log^2 n), \nonumber\\
		& \kappa_{\scrE}=O(n^{\frac{2}{d+2}}), \ R_{\scrE}=\max\{\tau/4,1\},
		\label{eq.gaussian.E.para.2}
	\end{align} 
	and the network architecture $\cF_{\rm NN}^{\scrD}(C_{\cM}(d+1),D;L_{\scrD}, p_{\scrD}, K_{\scrD},\kappa_{\scrD},R_{\scrD})$ has
	\begin{align}
		&L_{\scrD}=O(\log^2 n+\log D), \ p_{\scrD}=O(Dn^{\frac{d}{d+2}}), \ K_{\scrD}=O(Dn^{\frac{d}{d+2}} \log^2 n +D\log D), \nonumber\\
		& \kappa_{\scrD}=O(n^{\frac{1}{d+2}}), \ R_{\scrD}=B.
		\label{eq.gaussian.D.para.2}
	\end{align}
	The constant hidden in $O$ depends on $d,\tau,q,B,C_{\cM}$ and the volume  of $\cM$.

\end{proof}

\section{Conclusion}
\label{sec.conclusion}
This paper studies the generalization error of Chart Auto-Encoders (CAE), when the noisy data are concentrated around a $d$-dimensional manifold $\cM$ embedded in $\RR^D$. We assume that the training data are well controlled such that both the noisy data and their clean counterparts are available. 
When the noise is along the normal directions of $\cM$, we prove that the squared generalization error converges to $0$ at a fast rate in the order of $n^{-\frac{2}{2+d}}\log^4 n$. 
When the noise contains both normal and tangential components, we prove that the squared generalization error converges to a value proportional to the second moment of the tangential noise.
Our results are supported by experimental validation.
Our findings provide evidence that deep neural networks are capable of extracting low-dimensional nonlinear latent features from data, contributing to the understanding of the success of autoencoders.

\bibliographystyle{ims}
\bibliography{ref}

\newpage
\appendix
\section*{Appendix}
\section{Proof of lemmas}
\subsection{Proof of Lemma \ref{lem.single.approx}}
\label{proof.single.approx}
\begin{proof}[Proof of Lemma \ref{lem.single.approx}]
	We show that there exist $\widetilde{\scrE}\in \cF_{\rm NN}^{\scrE}$ and $\widetilde{\scrD}\in \cF_{\rm NN}^{\scrD}$ that approximate $\fb\circ\pi$ and $\gb$ with the given accuracy $\varepsilon$.
	The following lemma shows the existence of a network architecture with which a network  $\widetilde{\scrE}$ approximates $\fb\circ\pi$ with high accuracy.
	\begin{lemma}\label{lem.f}
		Consider Setting \ref{setting} and suppose Assumption \ref{assum.single} holds. For any $0<\varepsilon<\tau/2$, there exists a network architecture $\widetilde{\fb} \in \cF(D,d;L,p,K,\kappa,R)$ so that
		\begin{align*}
			\sup_{\xb\in \cM(q)}\|\widetilde{\fb}(\xb)-\fb\circ\pi(\xb)\|_{\infty}\leq \varepsilon.
		\end{align*}
		Such a network architecture has
		\begin{align*}
			L=O\left(\log^2 \varepsilon^{-1}+\log D\right), \ p=O\left(D\varepsilon^{-d}\right), \ K=O\left(D\varepsilon^{-d}\log^2 \varepsilon+D\log D\right),\ \kappa=O\left(\varepsilon^{-2}\right), \ R=\Lambda,
		\end{align*}
		where the constant hidden in $O$ depends on $d,B,\Lambda,\tau,q, C_f$ and the volume  of $\cM$. 
	\end{lemma}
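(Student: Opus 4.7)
The plan is to specialize the local-to-global neural-network construction of \cite{cloninger2021deep} to the particular target $\fb\circ\pi$. First note that by Lemma \ref{lem.pi}, $\pi$ is $1/(1-q/\tau)$-Lipschitz on $\cM(q)$, and by Assumption \ref{assum.single}, $\fb$ is $C_f$-Lipschitz on $\cM$; hence $\fb\circ\pi : \cM(q) \to [-\Lambda,\Lambda]^d$ is Lipschitz, with a Lipschitz constant depending only on $C_f,q,\tau$. The strategy is to cover $\cM$ by many small ambient balls, approximate $\fb\circ\pi$ locally by a smooth function of a tangent-plane projection, and glue via a partition of unity implemented as a ReLU subnetwork.

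First I would cover $\cM$ by $M = O(\varepsilon^{-d})$ Euclidean balls of radius proportional to $\varepsilon$, centered at points $\pb_1,\dots,\pb_M\in\cM$; the count follows from a volume-covering argument using compactness, dimension $d$, and the positive reach. A partition of unity $\{\rho_i\}_{i=1}^M$ subordinate to a slight enlargement of this cover is built by applying truncated ramp functions to the squared distances $\|\xb-\pb_i\|_2^2$ and then normalizing. The squaring operation is realized to accuracy $\varepsilon$ by the standard ReLU network of \cite{yarotsky2017error} with depth $O(\log\varepsilon^{-1})$ and constant width; summing the $D$ coordinate-squares via a binary tree of depth $O(\log D)$ contributes the $D$ factor in $p$, the $D\log D$ term in $K$, and the additive $\log D$ in $L$.

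On each ball, I replace $\fb\circ\pi(\xb)$ by a smooth function of the tangent-plane coordinate: letting $P_i\in\RR^{d\times D}$ have rows forming an orthonormal basis of $T_{\pb_i}\cM$, the manifold expansion at $\pb_i$ yields $\pi(\xb)-\pb_i = P_i^\top P_i(\xb-\pb_i) + O(\varepsilon^2/\tau)$ on the ball, so one can choose a smooth $\fb_i:\RR^d\to[-\Lambda,\Lambda]^d$ with $\fb_i(P_i(\xb-\pb_i)) = \fb\circ\pi(\xb) + O(\varepsilon)$ there. Approximating each $\fb_i$ on a $d$-dimensional cube by a Yarotsky-type ReLU network of depth $O(\log\varepsilon^{-1})$ and size $O(\varepsilon^{-d})$ pays only a $d$-dimensional cost. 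The final encoder is
\begin{align*}
\widetilde{\fb}(\xb) \;=\; \sum_{i=1}^{M} \rho_i(\xb)\,\widetilde{\fb}_i\!\bigl(P_i(\xb-\pb_i)\bigr),
\end{align*}
where each product is realized by the standard ReLU multiplication gadget of depth $O(\log\varepsilon^{-1})$. The identity $\sum_i \rho_i\equiv 1$ on $\cM(q)$ combined with a triangle-inequality estimate gives $\sup_{\xb\in\cM(q)}\|\widetilde{\fb}(\xb)-\fb\circ\pi(\xb)\|_{\infty}\le \varepsilon$.

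The main obstacle is the bookkeeping required to match the stated architecture bounds. The $M=O(\varepsilon^{-d})$ local subnetworks share the preprocessing that computes ramps of $\|\xb-\pb_i\|_2^2$, but the parameter count $K$ must still be split between a chart-selection stage of cost $O(D\varepsilon^{-d}\log^2\varepsilon^{-1}+D\log D)$ and per-chart Yarotsky blocks of total cost $O(\varepsilon^{-d}\log^2\varepsilon^{-1})$. One must also verify that the weight magnitudes remain $\kappa = O(\varepsilon^{-2})$ (the binding constraint coming from the multiplication and squaring gadgets), that $L=O(\log^2\varepsilon^{-1}+\log D)$ is achievable by composing the depth-$O(\log\varepsilon^{-1})$ ingredients with the additive $O(\log D)$ coordinate-square summation tree, and that the output range bound $R=\Lambda$ can be enforced by a componentwise $\min/\max$ clip built from ReLU. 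Once these size tallies are verified, the stated architecture $(L,p,K,\kappa,R)$ in \eqref{eq.E.para.1} follows immediately.
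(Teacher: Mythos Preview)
Your proposal is essentially the construction underlying \citet[Theorem 2.2]{cloninger2021deep}, which is exactly what the paper invokes: the paper's own proof is a one-line citation to that theorem, applied coordinatewise to the $d$ scalar components of $\fb$ and then stacked. So at the level of strategy you are on the same path, just unpacking the black box.

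One point in your sketch would not give the stated parameter count as written. You cover $\cM$ by $M=O(\varepsilon^{-d})$ balls of radius $O(\varepsilon)$ and then say each local $\fb_i$ is approximated by a Yarotsky network of size $O(\varepsilon^{-d})$; summing over the $M$ balls this yields $O(\varepsilon^{-2d})$ parameters, not the claimed $K=O(D\varepsilon^{-d}\log^2\varepsilon^{-1})$. The fix is that on a ball of radius $O(\varepsilon)$ a Lipschitz target varies by only $O(\varepsilon)$, so a constant (the value at the center) already achieves the local accuracy, costing $O(d)$ parameters per ball rather than $O(\varepsilon^{-d})$. This is how the Cloninger--Klock construction keeps the total at $O(\varepsilon^{-d})$. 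Relatedly, your justification of the depth $L=O(\log^2\varepsilon^{-1})$ is vague: composing a fixed number of depth-$O(\log\varepsilon^{-1})$ gadgets yields only $O(\log\varepsilon^{-1})$; the extra logarithm in \cite{cloninger2021deep} arises from the normalization step $\bar\eta_j/\|\bar\eta\|_1$ in the partition of unity, which requires approximating a reciprocal and is where the careful bookkeeping lives.
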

	Lemma \ref{lem.f} can be proved using \citet[Theorem 2.2]{cloninger2021deep}. One only needs to stack $d$ scalar-valued networks together.
	
	To construct a network to approximate $\gb$, first note that $\gb$ is defined on $\fb(\cM)\subset [-\Lambda,\Lambda]^d$. The following lemma shows that $\gb$ can be extended to $[-\Lambda,\Lambda]^d$ while keeping the same Lipschitz constant:
 \begin{lemma}[Kirszbraun theorem \citep{kirszbraun1934zusammenziehende}]
\label{lem:extension}
    If $E\subset \RR^d$, then any Lipschitz function $\gb: E\rightarrow \RR^D$
 can be extended to the whole $\RR^d$
 keeping the Lipschitz constant of the original function. 
\end{lemma}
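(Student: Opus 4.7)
The plan is to follow the classical route to Kirszbraun's theorem: reduce via Zorn's lemma to the question of extending an $L$-Lipschitz map by a single new point; reduce this, via compactness and Helly's theorem in $\RR^D$, to a finite intersection problem for closed balls; and finally dispatch that finite problem by a convex-optimization argument using the first-order optimality conditions of a max-of-quadratics.

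First I would set up the Zorn step. Let $L$ denote the Lipschitz constant of $\gb$ on $E$, and consider the poset of pairs $(F,\tilde\gb)$ where $E\subseteq F\subseteq\RR^d$ and $\tilde\gb:F\to\RR^D$ is an $L$-Lipschitz extension of $\gb$, ordered by graph inclusion. Chains have upper bounds (take unions of graphs), so a maximal element $(F_*,\gb_*)$ exists. It suffices to show $F_*=\RR^d$. If not, pick $x_0\notin F_*$; we need $y_0\in\RR^D$ with $\|y_0-\gb_*(x)\|_2\le L\|x_0-x\|_2$ for every $x\in F_*$, which is equivalent to nonemptiness of
\begin{equation*}
\bigcap_{x\in F_*}\overline{B}\bigl(\gb_*(x),\,L\|x_0-x\|_2\bigr).
\end{equation*}
Each ball is closed and convex in $\RR^D$, and those for $x$ near $x_0$ are compact, so by the finite intersection property it suffices to prove every finite subfamily has nonempty intersection; Helly's theorem in $\RR^D$ then lets me further restrict to subfamilies of size at most $D+1$ if desired.

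The heart of the argument is thus the following finite claim: given points $x_1,\ldots,x_m\in\RR^d$ and $y_i=\gb_*(x_i)$ with $\|y_i-y_j\|_2\le L\|x_i-x_j\|_2$, and given any $x_0\in\RR^d$, there exists $y_0\in\RR^D$ with $\|y_0-y_i\|_2\le L\|x_0-x_i\|_2$ for every $i$. I would prove this via the function
\begin{equation*}
\Phi(y)=\max_{1\le i\le m}\bigl(\|y-y_i\|_2^2-L^2\|x_0-x_i\|_2^2\bigr),
\end{equation*}
which is continuous, convex, and coercive, hence attains its minimum at some $y_*$; the claim becomes $\Phi(y_*)\le 0$. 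Assuming $\Phi(y_*)>0$ for contradiction, first-order optimality for a max of smooth convex functions produces weights $\lambda_i\ge 0$ with $\sum_i\lambda_i=1$, supported on the active set $I_*=\{i:\|y_*-y_i\|_2^2-L^2\|x_0-x_i\|_2^2=\Phi(y_*)\}$, satisfying $\sum_i\lambda_i(y_*-y_i)=0$. Using the identity $\|\sum_i\lambda_i u_i\|_2^2=\sum_i\lambda_i\|u_i\|_2^2-\tfrac12\sum_{i,j}\lambda_i\lambda_j\|u_i-u_j\|_2^2$ with $u_i=y_*-y_i$ gives $\sum_i\lambda_i\|y_*-y_i\|_2^2=\tfrac12\sum_{i,j}\lambda_i\lambda_j\|y_i-y_j\|_2^2$, and the same identity with $u_i=x_0-x_i$ gives $\sum_i\lambda_i\|x_0-x_i\|_2^2\ge\tfrac12\sum_{i,j}\lambda_i\lambda_j\|x_i-x_j\|_2^2$. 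Combining these with $\|y_i-y_j\|_2^2\le L^2\|x_i-x_j\|_2^2$ yields $\sum_i\lambda_i\|y_*-y_i\|_2^2\le L^2\sum_i\lambda_i\|x_0-x_i\|_2^2$, which by the active-set definition of $\Phi$ forces $\Phi(y_*)\le 0$, contradicting our assumption.

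The main obstacle is precisely this last algebraic step: the two expansion identities must line up with the KKT stationarity condition so that the pairwise Lipschitz bounds translate into a bound on the active quadratics $\Phi_i$. The Zorn and Helly reductions are purely formal, and the function-analytic setup of $\Phi$ is routine, but the choice of squared-norm formulation (rather than $\|y-y_i\|_2-L\|x_0-x_i\|_2$) is what makes the KKT expansion produce exactly the right pairwise terms—any other formulation either loses the key cancellation or fails to exploit the Lipschitz inequality. A shortcut in our application would be to bypass Kirszbraun entirely by extending $\gb$ via a tubular neighborhood retraction onto $\fb(\cM)$ (which is compact), but the general statement above is cleaner and is what the lemma is traditionally proved.
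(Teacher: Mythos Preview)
Your proof is correct and follows the standard modern route to Kirszbraun's theorem: the Zorn reduction to one-point extension, the compactness/Helly reduction to finitely many balls, and the variational argument on $\Phi(y)=\max_i(\|y-y_i\|_2^2-L^2\|x_0-x_i\|_2^2)$ with the pairwise-expansion identity. The algebra checks out: the stationarity condition $\sum_i\lambda_i(y_*-y_i)=0$ combined with the identity $\|\sum_i\lambda_i u_i\|_2^2=\sum_i\lambda_i\|u_i\|_2^2-\tfrac12\sum_{i,j}\lambda_i\lambda_j\|u_i-u_j\|_2^2$ gives exactly $\sum_i\lambda_i\Phi_i(y_*)\le 0$ on the active set, forcing $\Phi(y_*)\le 0$.

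The paper, however, does not prove this lemma at all: it simply states the Kirszbraun theorem with a citation to the original 1934 paper and invokes it as a black box to extend $\gb$ from $\fb(\cM)$ to $[-\Lambda,\Lambda]^d$. So there is no ``paper's proof'' to compare against. Your write-up supplies a self-contained argument where the paper relies on the literature; this is more than what is needed for the paper's purposes but is mathematically complete. Your closing remark about bypassing Kirszbraun via a tubular-neighborhood retraction is also apt for this specific application, since $\fb(\cM)$ is a compact subset of $\RR^d$ and only a Lipschitz extension to a cube is required.
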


By Lemma \ref{lem:extension}, we extend $\gb$ to $[-\Lambda,\Lambda]^d$ so that the extended function is Lipschitz continuous with Lipschitz constant $C_g$. When there is no ambiguity, we still use $\gb$ to denote the extended function.
	The following lemma shows the existence of a network architecture with which a network  $\widetilde{\scrD}$ approximates $\gb$ on $[-\Lambda,\Lambda]^d$ with high accuracy.
	\begin{lemma}\label{lem.g}
		Consider Setting \ref{setting} and suppose Assumption \ref{assum.single} holds. For any $0<\varepsilon<1$, there exists a network architecture $\widetilde{\gb} 
\in\cF(d,D;L,p,K,\kappa,R)$ so that
		\begin{align}
			\sup_{\zb\in [-\Lambda,\Lambda]^d}\|\widetilde{\gb}(\zb)-\gb(\zb)\|_{\infty}\leq \varepsilon.
		\end{align}
		Such a network architecture has
		\begin{align}
			L=O\left(\log \varepsilon^{-1}\right), \ p=O\left(D\varepsilon^{-d}\right), \ K=O\left(D\varepsilon^{-d}\log \varepsilon\right),\ \kappa=O\left(\varepsilon^{-1}\right), R=B,
		\end{align}
		where the constant hidden in $O$ depends on $d,C_g,B,\Lambda$.
	\end{lemma}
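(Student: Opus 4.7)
The plan is to reduce the vector-valued Lipschitz approximation problem to $D$ independent scalar problems and then glue the scalar networks together in parallel. Throughout, we rely on Lemma \ref{lem:extension} to regard $\gb$ as a Lipschitz function with constant $C_g$ on the entire cube $[-\Lambda,\Lambda]^d$, bounded in magnitude by $B$ (after truncation).

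First I would establish the scalar approximation. For each component $g_i : [-\Lambda,\Lambda]^d \to [-B,B]$, $i=1,\dots,D$, I would invoke a standard ReLU approximation theorem for Lipschitz functions on a cube (for example Yarotsky's Theorem 1 for $C^{0,1}$ functions, or the version used in \cite{cloninger2021deep}). The construction partitions $[-\Lambda,\Lambda]^d$ into $N=O(\varepsilon^{-d})$ axis-aligned cells of side length proportional to $\varepsilon/C_g$, builds a ReLU partition of unity using trapezoid-type bumps (each of which requires only $O(1)$ neurons per layer and depth $O(\log\varepsilon^{-1})$ to implement the needed multiplications via the Yarotsky $x\mapsto x^2$ gadget), and then takes a weighted combination of the values of $g_i$ at the cell centers. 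This yields a scalar network $\widetilde g_i$ with depth $O(\log\varepsilon^{-1})$, width $O(\varepsilon^{-d})$, sparsity $O(\varepsilon^{-d}\log\varepsilon^{-1})$, weight magnitudes at most $O(\varepsilon^{-1})$, and $\|\widetilde g_i - g_i\|_{L^\infty([-\Lambda,\Lambda]^d)}\leq \varepsilon$.

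Next I would stack the $D$ scalar networks in parallel. Because they all share the same input $\zb\in[-\Lambda,\Lambda]^d$ and all have the same depth $L=O(\log\varepsilon^{-1})$, the concatenated network $\widetilde{\gb}=(\widetilde g_1,\dots,\widetilde g_D)^\top$ has depth unchanged, width $O(D\varepsilon^{-d})$, and total number of nonzero parameters $O(D\varepsilon^{-d}\log\varepsilon^{-1})$, while preserving the per-layer weight bound $\kappa=O(\varepsilon^{-1})$. To enforce $R=B$ componentwise I would append a clipping layer $y\mapsto \mathrm{ReLU}(y+B)-\mathrm{ReLU}(y-B)-B$ on each of the $D$ outputs; this adds only $O(D)$ nonzero parameters, one extra layer, and keeps $\kappa=O(1)$ in that layer, so it is absorbed into the stated bounds. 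Since clipping is a contraction on $\RR$ and $|g_i(\zb)|\leq B$, it does not worsen the approximation error: $\|\widetilde{\gb}(\zb)-\gb(\zb)\|_\infty = \max_i |\widetilde g_i(\zb)-g_i(\zb)| \leq \varepsilon$ uniformly in $\zb\in[-\Lambda,\Lambda]^d$.

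The main obstacle is the bookkeeping: verifying that each of $L,p,K,\kappa,R$ in the final statement matches the construction after parallel stacking and clipping, and confirming that the weight-magnitude bound $\kappa=O(\varepsilon^{-1})$ is inherited from the scalar Yarotsky-type construction (in which the large weights come from the finest-scale partition-of-unity bumps of width $\varepsilon/C_g$). Apart from this careful accounting, the argument is a direct application of existing ReLU approximation theory for Lipschitz functions on a Euclidean cube, so no new ideas beyond those already used in Lemma \ref{lem.f} are required.
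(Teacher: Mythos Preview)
Your proposal is correct and follows essentially the same approach as the paper: the paper simply states that the lemma follows from \cite[Theorem 1]{yarotsky2017error} applied componentwise and then stacking the $D$ scalar networks together. Your additional details on the clipping layer and the parameter bookkeeping are reasonable elaborations of exactly this argument.
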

	Lemma \ref{lem.g} can be proved using \cite[Theorem 1]{yarotsky2017error}. One only needs to stack $D$ scalar-valued networks together.
	
	For a constant $\varepsilon_1\in (0,\min\{1,\tau/2\})$, we choose $\cF_{\rm NN}^{\scrE}=\cF(D,d;L_{\scrE}, p_{\scrE},K_{\scrE}, \kappa_{\scrE}, R_{\scrE})$ with
	\begin{align}
		L_{\scrE}=O(\log^2 \varepsilon_1^{-1})+\log D, \ p_{\scrE}=O(D\varepsilon_1^{-d}), \ K_{\scrE}=O(D\varepsilon_1^{-d}\log^2 \varepsilon_1+D\log D),\ \kappa_{\scrE}=\varepsilon_1^{-2},\ R_{\scrE}=\Lambda,
	\end{align}
	and 
	$\cF_{\rm NN}^{\scrD}=\cF(d,D;L_{\scrD}, p_{\scrD},K_{\scrD},\kappa_{\scrD},R_{\scrD})$ with
	\begin{align}
		L_{\scrD}=O(\log \varepsilon_1^{-1}), \ p_{\scrD}=O(D\varepsilon_1^{-d}), \ K_{\scrD}=O(D\varepsilon_1^{-d}\log \varepsilon_1),\ \kappa_{\scrD}=\varepsilon_1^{-1},\ R_{\scrD}=B.
	\end{align}
	
	According to Lemma \ref{lem.f}, there exists $\widetilde{\scrE}\in \cF_{\rm NN}^{\scrE}$ such that
	\begin{align}
		\sup_{\xb\in \cM(q)}\|\widetilde{\scrE}(\xb)-\fb\circ\pi(\xb)\|_{\infty}\leq \varepsilon_1.
		\label{eq.single.E.err}
	\end{align}

	According to Lemma \ref{lem.g}, there exists $\widetilde{\scrD}\in \cF_{\rm NN}^{\scrD}$ such that
	\begin{align}
		\sup_{\zb\in [-\Lambda,\Lambda]^d}\|\widetilde{\scrD}(\zb)-\gb(\zb)\|_{\infty}\leq \varepsilon_1.
		\label{eq.single.D.err}
	\end{align}

	Putting (\ref{eq.single.E.err}) and (\ref{eq.single.D.err}) together and setting $\varepsilon_1=\frac{\varepsilon}{1+C_g\sqrt{d}}$, give rise to
	\begin{align}
		\sup_{\xb\in \cM(q)}\|\widetilde{\scrD}\circ \widetilde{\scrE}(\xb)- \pi(\xb)\|_{\infty}  
  =& \sup_{\xb\in \cM(q)}\|\widetilde{\scrD}\circ \widetilde{\scrE}(\xb)- \bg\circ\fb\circ\pi(\xb)\|_{\infty} \nonumber\\
		\leq &\sup_{\xb\in \cM(q)}\left( \|\widetilde{\scrD}\circ \widetilde{\scrE}(\xb)- \bg\circ\widetilde{\scrE}(\xb)\|_{\infty} +  \|\bg\circ\widetilde{\scrE}(\xb)- \bg\circ\fb\circ\pi(\xb)\|_{\infty} \right)\nonumber\\
		\leq & \varepsilon_1+C_g \sup_{\xb\in \cM(q)} \|\widetilde{\scrE}(\xb)- \fb\circ\pi(\xb)\|_{2} \nonumber\\
		\leq& \varepsilon_1+ C_g\sqrt{d}\varepsilon_1 \nonumber\\
		=& \varepsilon. \nonumber
	\end{align}
	The lemma is proved.
\end{proof}

\subsection{Proof of Lemma \ref{lem.single.T2}}
\label{proof.single.T2}
\begin{proof}[Proof of Lemma \ref{lem.single.T2}]

	Denote 
	\begin{align*}
		\widehat{h}(\xb)=\|\widehat{\scrG}(\xb)-\pi(\xb)\|_2^2.
	\end{align*}
	We have $\widehat{h}(\xb)\leq 4DB^2$ for any $\xb\in\cM(q)$ due to the definition of $\widehat{\scrG}$. We bound ${\rm T_2}$ as
	\begin{align}
		{\rm T_2}=&\EE_{\cS} \left[\EE_{\xb}\left[\widehat{h}(\xb)\right]-\frac{2}{n} \sum_{i=1}^n \widehat{h}(\xb_i)\right] \nonumber\\
		=& 2\EE_{\cS} \left[\EE_{\xb}\left[\widehat{h}(\xb)\right]-\frac{1}{n} \sum_{i=1}^n \widehat{h}(\xb_i) -\frac{1}{2} \EE_{\xb} \left[ \widehat{h}(\xb)\right]\right] \nonumber\\
		\leq& 2\EE_{\cS} \left[\EE_{\xb}\left[\widehat{h}(\xb)\right]-\frac{1}{n} \sum_{i=1}^n \widehat{h}(\xb_i) -\frac{1}{8DB^2} \EE_{\xb} \left[\widehat{h}^2(\xb)\right]\right],
	\end{align}
	where in the last inequality we used $\widehat{h}(\xb)\leq 4DB^2$ and
	\begin{align*}
		\frac{1}{4DB^2}\EE_{\xb} \left[ \widehat{h}^2(\xb) \right] \leq \EE_{\xb} \left[\widehat{h}(\xb)\right].
	\end{align*}
	Let $\widetilde{\cS}=\{\widetilde{\xb}_i\}_{i=1}^n$ be a ghost sample set that is independent to $\cS$. Define the set
	\begin{align*}
		\cH=\{h: h(\xb)=\|\scrG(\xb)-\pi(\xb)\|_2^2 \mbox{ for } \scrG\in \cF_{\rm NN}^{\scrG}\}.
	\end{align*}
	We have
	\begin{align}
		{\rm T_2}\leq & 2\EE_{\cS} \sup_{h\in \cH}\left[\EE_{\xb}\left[h(\xb)\right]-\frac{1}{n} \sum_{i=1}^n h(\xb_i) -\frac{1}{8DB^2} \EE_{\xb} \left[h^2(\xb)\right]\right] \nonumber\\
		\leq &2\EE_{\cS,\widetilde{\cS}} \sup_{h\in \cH}\left[\frac{1}{n} \sum_{i=1}^n \left(h(\widetilde{\xb}_i)-h(\xb_i)\right) -\frac{1}{16DB^2} \EE_{\xb,\widetilde{\xb}} \left[h^2(\widetilde{\xb})+h^2(\xb)\right]\right]
		\label{eq.T2.ss}
	\end{align}
	
	Denote the $\delta$--covering number of $\cH$ by $\cN(\delta,\cH,\|\cdot \|_{\infty})$ and let $\cH^*=\{h^*_j\}_{j\in \cN(\delta,\cH,\|\cdot \|_{\infty})}$ be a $\delta$--cover of $\cH$, namely,  for any $h\in \cH$, there exists $h^*\in \cH^*$ so that $\|h-h^*\|_{\infty}\leq \delta$. 
Therefore, we have
	\begin{align}
		h(\widetilde{\xb})-h(\xb)= &h(\widetilde{\xb})-h^*(\widetilde{\xb})+h^*(\widetilde{\xb})-h^*(\xb)+h^*(\xb)-h(\xb) \nonumber\\
		\leq & h^*(\widetilde{\xb})-h^*(\xb) +2\delta.
		\label{eq.T2.h.minus}
	\end{align}
	and
	\begin{align}
		h^2(\widetilde{\xb})+h^2(\xb)=& \left[h^2(\widetilde{\xb})-(h^*)^2(\widetilde{\xb})\right] + \left[ (h^*)^2(\widetilde{\xb})+(h^*)^2(\xb)\right] -\left[(h^*)^2(\xb)-h^2(\xb)\right] \nonumber\\
		= & (h^*)^2(\widetilde{\xb})+(h^*)^2(\xb) + \left[h(\widetilde{\xb})-h^*(\widetilde{\xb})\right] \left[h(\widetilde{\xb})+h^*(\widetilde{\xb})\right] - \left[h^*(\xb)-h(\xb)\right] \left[h^*(\xb)+h(\xb)\right] \nonumber\\
		\geq & (h^*)^2(\widetilde{\xb})+(h^*)^2(\xb) - \left|h(\widetilde{\xb})-h^*(\widetilde{\xb})\right| \left|h(\widetilde{\xb})+h^*(\widetilde{\xb})\right| - \left|h^*(\xb)-h(\xb)\right| \left|h^*(\xb)+h(\xb)\right| \nonumber\\
		\geq & (h^*)^2(\widetilde{\xb})+(h^*)^2(\xb) - 16DB^2\delta
		\label{eq.T2.h.plus}
	\end{align}
	Substituting (\ref{eq.T2.h.minus}) and (\ref{eq.T2.h.plus}) into (\ref{eq.T2.ss}) gives rise to
	\begin{align}
		{\rm T_2}\leq &2\EE_{\cS,\widetilde{\cS}} \sup_{h^*\in \cH^*}\left[\frac{1}{n} \sum_{i=1}^n \left(h^*(\widetilde{\xb}_i)-h^*(\xb_i)\right) -\frac{1}{16DB^2} \EE_{\xb,\widetilde{\xb}} \left[(h^*)^2(\widetilde{\xb})+(h^*)^2(\xb)\right]\right]+6\delta \nonumber\\
		= & 2\EE_{\cS,\widetilde{\cS}} \max_{j}\left[\frac{1}{n} \sum_{i=1}^n \left(h_j^*(\widetilde{\xb}_i)-h_j^*(\xb_i)\right) -\frac{1}{16DB^2} \EE_{\xb,\widetilde{\xb}} \left[(h_j^*)^2(\widetilde{\xb})+(h_j^*)^2(\xb)\right]\right]+6\delta
	\end{align} 
	Denote $\eta_j(\widetilde{\xb}_i,\xb_i)=h^*_j(\widetilde{\xb}_i)-h^*_j(\xb_i)$. We can check that $\EE_{\cS,\widetilde{\cS}}[\eta_j(\widetilde{\xb}_i,\xb_i)]=0$ for any $j=1,...,\cN(\delta,\cH,\|\cdot \|_{\infty})$. We compute the variance of $\eta_j(\widetilde{\xb}_i,\xb_i)$ as
	\begin{align}
		\Var[\eta_j(\widetilde{\xb}_i,\xb_i)]=\EE_{\cS,\widetilde{\cS}}\left[\eta^2_j(\widetilde{\xb}_i,\xb_i)\right]=\EE_{\cS,\widetilde{\cS}}\left[\left(h^*_j(\widetilde{\xb}_i)-h^*_j(\xb_i)\right)^2\right]  \leq 2\EE_{\cS,\widetilde{\cS}}\left[\left(h^*_j\right)^2(\widetilde{\xb}_i)+\left(h^*_j\right)^2(\xb_i)\right]. 
	\end{align}
	We thus have
	\begin{align}
		{\rm T_2}\leq \widetilde{\rm T}_2+6\delta
	\end{align}
	with 
	\begin{align}
		\widetilde{\rm T}_2=2\EE_{\cS,\widetilde{\cS}} \max_{j}\left[\frac{1}{n} \sum_{i=1}^n \left(\eta_j(\widetilde{\xb}_i,\xb_i) -\frac{1}{32DB^2}  \Var[\eta_j(\widetilde{\xb}_i,\xb_i)]\right) \right].
	\end{align}
	We next derive the moment generating function of $\eta_j(\widetilde{\xb}_i,\xb_i)$. For $0<t<\frac{3}{4DB^2}$, we have
	\begin{align}
		\EE_{\cS,\widetilde{\cS}}\left[ \exp\left(t \eta_j(\widetilde{\xb}_i,\xb_i)\right)\right]= &\EE_{\cS,\widetilde{\cS}}\left[1+ t{ \eta_j}(\widetilde{\xb}_i,\xb_i) + \sum_{k=2}^{\infty} \frac{t^k\eta_j^k(\widetilde{\xb}_i,\xb_i)}{k!}\right] \nonumber\\
		\leq & \EE_{\cS,\widetilde{\cS}}\left[1+ t{ \eta_j}(\widetilde{\xb}_i,\xb_i) + \sum_{k=2}^{\infty} \frac{t^k\eta_j^2(\widetilde{\xb}_i,\xb_i)(4DB^2)^{k-2}}{2\times 3^{k-2}}\right] \nonumber\\
		=& \EE_{\cS,\widetilde{\cS}}\left[1+ t{ \eta_j}(\widetilde{\xb}_i,\xb_i) +\frac{t^2\eta_j^2(\widetilde{\xb}_i,\xb_i)}{2} \sum_{k=2}^{\infty} \frac{t^{k-2}(4DB^2)^{k-2}}{3^{k-2}}\right] \nonumber\\
		=&\EE_{\cS,\widetilde{\cS}}\left[1+ t{ \eta_j}(\widetilde{\xb}_i,\xb_i) +\frac{t^2\eta_j^2(\widetilde{\xb}_i,\xb_i)}{2} \frac{1}{1-4DB^2t/3}\right] \nonumber\\
		=&1+t^2\Var[\eta_j(\widetilde{\xb}_j,\xb_j)]\frac{1}{2-8DB^2t/3} \nonumber\\
		\leq & \exp\left( \Var[\eta_j(\widetilde{\xb}_i,\xb_i)]\frac{3t^2}{6-8DB^2t}\right),
		\label{eq.T2.eta.moment}
	\end{align}
	where in the first inequality we used $|\eta_j(\widetilde{\xb},\xb)|\leq 4DB^2$.
	
	 Set $t=\widetilde{t}/n$. For $0<t<\frac{3}{4DB^2}$, we have
	\begin{align}
		\exp\left(\frac{\widetilde{t}\widetilde{\rm T}_2}{2}\right)=& \exp\left(\widetilde{t}\EE_{\cS,\widetilde{\cS}} \max_{j}\left[\frac{1}{n} \sum_{i=1}^n \left(\eta_j(\widetilde{\xb}_i,\xb_i) -\frac{1}{32DB^2}  \Var[\eta_j(\widetilde{\xb}_i,\xb_i)]\right) \right] \right) \nonumber\\
		\leq & \EE_{\cS,\widetilde{\cS}}\left[\exp\left(\widetilde{t} \max_{j}\left[\frac{1}{n} \sum_{i=1}^n \left(\eta_j(\widetilde{\xb}_i,\xb_i) -\frac{1}{32DB^2}  \Var[\eta_j(\widetilde{\xb}_i,\xb_i)]\right) \right] \right)\right] \nonumber\\
		\leq & \EE_{\cS,\widetilde{\cS}}\left[\sum_j\exp\left(\frac{\widetilde{t}}{n} \sum_{i=1}^n \left(\eta_j(\widetilde{\xb}_i,\xb_i) -\frac{1}{32DB^2}  \Var[\eta_j(\widetilde{\xb}_i,\xb_i)]\right)  \right)\right] \nonumber\\
		= & \EE_{\cS,\widetilde{\cS}}\left[\sum_j\exp\left( t \sum_{i=1}^n \left(\eta_j(\widetilde{\xb}_i,\xb_i) -\frac{1}{32DB^2}  \Var[\eta_j(\widetilde{\xb}_i,\xb_i)]\right)  \right)\right] \nonumber\\
		\leq & \sum_j \exp\left( \sum_{i=1}^n \left(\Var[\eta_j(\widetilde{\xb}_i,\xb_i)]\frac{3 t^2}{6-8DB^2t} -\frac{1}{32DB^2}t  \Var[\eta_j(\widetilde{\xb}_i,\xb_i)]\right)  \right) \nonumber\\
		=& \sum_j \exp\left(t \sum_{i=1}^n \left(\Var[\eta_j(\widetilde{\xb}_i,\xb_i)]\left(\frac{3t}{6-8DB^2t} -\frac{1}{32DB^2} \right) \right)  \right),
		\label{eq.tildeT2}
	\end{align}
	where the first inequality follows from Jensen's inequality and the thrid inequality uses (\ref{eq.T2.eta.moment}). Setting
	\begin{align}
		\frac{3t}{6-8DB^2t} -\frac{1}{32DB^2}=0
	\end{align}
	gives $t=\frac{3}{52DB^2}<\frac{3}{4DB^2}$ and $\widetilde{t}=\frac{3n}{52DB^2}$. Substituting the value of $\widetilde{t}$ into (\ref{eq.tildeT2}) gives rise to
	\begin{align}
		\frac{\widetilde{t}\widetilde{\rm T}_2}{2}\leq \log \left(\sum_{j} \exp(0)\right)=\log \cN(\delta,\cH, \|\cdot\|_{\infty}),
	\end{align}
	implying that
	\begin{align}
		\widetilde{\rm T}_2\leq \frac{2}{\widetilde{t}} \log \cN(\delta,\cH, \|\cdot\|_{\infty})= \frac{104DB^2}{3n}\log \cN(\delta,\cH, \|\cdot\|_{\infty}).
	\end{align}
	and
	\begin{align}
		{\rm T_2}\leq \frac{104DB^2}{3n}\log \cN(\delta,\cH, \|\cdot\|_{\infty})+6\delta \leq \frac{35DB^2}{n}\log \cN(\delta,\cH, \|\cdot\|_{\infty})+6\delta.
	\end{align}
	We next derive the relation between the covering numbers of $\cH$ and $\cF_{\rm NN}^{\scrG}$. For any $h,h'\in \cH$, we have
	\begin{align}
		h(\xb)=\|\scrG(\xb)-\pi(\xb)\|_2^2, \mbox{ and } h'=\|\scrG'(\xb)-\pi(\xb)\|_2^2
	\end{align}
	for some $\scrG,\scrG'\in \cF_{\rm NN}^{\scrG}$. We can compute
	\begin{align}
		\|h-h'\|_{\infty}=&\sup_{\xb} \left| \|\scrG(\xb)-\pi(\xb)\|_2^2-\|\scrG'(\xb)-\pi(\xb)\|_2^2\right| \nonumber\\
		=& \sup_{\xb} \left| \left\langle \scrG(\xb)-\scrG'(\xb), \scrG(\xb)+\scrG'(\xb)-2\pi(\xb) \right\rangle \right| \nonumber\\
		\leq & \sup_{\xb} \left\| \scrG(\xb)-\scrG'(\xb)\right\|_2 \left\| \scrG(\xb)+\scrG'(\xb)-2\pi(\xb)\right\|_2 \nonumber\\
		\leq & \sqrt{4D}B\sup_{\xb} \sqrt{D} \left\| \scrG(\xb)-\scrG'(\xb)\right\|_{\infty} \nonumber\\
		=& 2DB\left\| \scrG(\xb)-\scrG'(\xb)\right\|_{L^{\infty,\infty}}.
	\end{align}
	Therefore, we have
	\begin{align}
		\cN(\delta,\cH, \|\cdot\|_{\infty})\leq \cN\left( \frac{\delta}{2DB},\cF_{\rm NN}^{\scrG},\|\cdot\|_{L^{\infty,\infty}} \right)
	\end{align}
	and
	\begin{align}
		{\rm T_2} \leq \frac{35DB^2}{n}\log \cN\left( \frac{\delta}{2DB},\cF_{\rm NN}^{\scrG},\|\cdot\|_{L^{\infty,\infty}} \right)+6\delta.
	\end{align}
\end{proof}

\subsection{Proof of Lemma \ref{lem.FG.covering}}
\label{proof.FG.covering}
\begin{proof}[Proof of Lemma \ref{lem.FG.covering}]
	We first show that there exists a network architecture $\cF(D,D;L,p,K,\kappa,R)$ so that any $\scrG\in \cF_{\rm NN}^{\scrG}$ can be realized by a network with such an architecture. Then the covering number of $\cF_{\rm NN}^{\scrG}$ can be bounded by that of $\cF(D,D;L,p,K,\kappa,R)$. 
	
	For any $\scrG\in \cF_{\rm NN}^{\scrG}$, there exist $\scrE\in \cF_{\rm NN}^{\scrE}$ and $\scrD\in \cF_{\rm NN}^{\scrD}$ so that $\scrG=\scrD\circ \scrE$. Denote the set of weights and biases of $\scrE$ by $\{(W^{\scrE}_{k},\bbb^{\scrE}_k)\}_{k=1}^{L_{\scrE}}$ and the set of weights and biases of $\scrD$ by $\{(W^{\scrD}_{k},\bbb^{\scrD}_k)\}_{k=1}^{L_{\scrD}}$. We construct $F$ as 
	\begin{align}
		F(\xb)=F_3\circ F_2\circ F_1(\xb)
	\end{align}
	where 
	\begin{align}
		&F_1(\xb)=\ReLU\left( W^{\scrE}_{L-1}\cdots \ReLU(W^{\scrE}_1\xb+\bbb_1)+ \cdots +\bbb^{\scrE}_{L-1}\right)
	\end{align}
	consists of the first $L-1$ layers of $\scrE$,
	\begin{align}
		&F_3(\tb)=W_{L_{\scrD}}^{\scrD}\cdot \ReLU\left( W^{\scrD}_{L_{\scrD}-1}\cdots \ReLU(W^{\scrD}_2\tb+\bbb_2)+ \cdots +\bbb^{\scrE}_{L-1}\right)+ \bbb^{\scrD}_{L_{\scrD}}
	\end{align}
	consists of the $2$ to $L_{\scrD}$ layers of $\scrD$.
	
	Note that we have 
	\begin{align}
		G(\xb)=F_3\circ\ReLU(W_1^{\scrD}\cdot (W_{L_{\scrE}}^{\scrE}\cdot F_1(\xb)+\bbb^{\scrE}_{L_{\scrE}})+\bbb^{\scrD}_{1})
	\end{align}
	
	We will design $F_2$ to realize the connection between $F_1$ and $F_3$ in $\scrG$ while keeping similar order of the number of parameters. We construct $F_2$as
	\begin{align}
		F_2(\rb)=\ReLU\left(\begin{bmatrix}
			W_1^{\scrD} & -W_1^{\scrD}
		\end{bmatrix} \cdot
		\ReLU\left( \begin{bmatrix}
			W^{\scrE}_{L_{\scrE}}\\ -W^{\scrE}_{L_{\scrE}}
		\end{bmatrix} \cdot \rb + 
		\begin{bmatrix}
			\bbb^{\scrE}_{L_{\scrE}} \\ -\bbb^{\scrE}_{L_{\scrE}}
		\end{bmatrix}\right)+ \bbb_1^{\scrD}\right).
	\end{align}
	Here $F_2$ is a two-layer network, with width of $O(\max\{p_{\scrD},p_{\scrE}\})$, number of nonzero parameters of $O(\max\{K_{\scrD},K_{\scrE}\})$, and all parameters are bounded by $\max\{\kappa_{\scrE},\kappa_{\scrD}\}$. Furthermore, we have 
	\begin{align}
		&F_2\circ F_1(\xb)=\ReLU(W_1^{\scrD}\cdot (W_{L_{\scrE}}^{\scrE}\cdot F_1(\xb)+\bbb^{\scrE}_{L_{\scrE}})+\bbb^{\scrD}_{1}),\\
		&F_3\circ F_2 \circ F_1(\xb)=G(\xb).
	\end{align}
	We next quantify the network size:
	\begin{itemize}
		\item $F_1$ has depth $L_{\scrE}-1$, width $dp_{\scrE}$, number of weight parameters no more than $K_{\scrE}$, and all parameters are bounded by $\kappa_{\scrE}$.
		\item $F_2$ has depth $2$, width $\max\{p_{\scrE},p_{\scrD}\}$, number of weight parameters is bounded by two times the number of parameters in $(W^{\scrE}_{L_{\scrE}},\bbb^{\scrE}_{L_{\scrE}})$ and $(W^{\scrD}_1,\bbb^{\scrD}_1)$, and all parameters are bounded by $\max\{\kappa_{\scrE},\kappa_{\scrD}\}$.
		\item $F_3$ has depth $L_{\scrD}-1$, width $Dp_{\scrD}$, number of weight parameters no more than $K_{\scrD}$, and all parameters are bounded by $\kappa_{\scrD}$.
	\end{itemize}
	In summary, $F\in \cF_{\rm NN}^F=\cF_{\rm NN}(D,D;L,p,K,\kappa,R)$ with
	\begin{align}
		L=L_{\scrE}+L_{\scrD}, p=\max\{p_{\scrE},p_{\scrD}\}, \ K=2(K_{\scrE}+K_{\scrD}), \ \kappa=\max\{\kappa_{\scrE},\kappa_{\scrD}\}, \ R=B.
	\end{align}
	Therefore $\cF_{\rm NN}^{\scrG}\subset \cF_{\rm NN}^F$ and 
	\begin{align}
		\cN\left( \delta,\cF_{\rm NN}^{\scrG},\|\cdot\|_{L^{\infty,\infty}} \right) \leq \cN\left( \delta,\cF_{\rm NN}^F,\|\cdot\|_{L^{\infty,\infty}} \right).
		\label{eq.cF.cover.compare}
	\end{align}
	
	We next derive an upper bound for $\cN\left( \delta,\cF_{\rm NN}^F,\|\cdot\|_{L^{\infty,\infty}} \right)$. We will use the following lemma:
	\begin{lemma}\label{lem.covering}
		Let $\cF_{\rm NN}(d_1,d_2;L,p,K,\kappa,R)$ be a class of network: $[-B,B]^{d_1}\rightarrow [-R,R]^{d_2}$. For any $\delta>0$, the $\delta$--covering number of $\cF_{\rm NN}(d_1,d_2;L,p,K,\kappa,R)$ is bounded by
		\begin{align}
			\cN\left( \delta,\cF_{\rm NN}(d_1,d_2;L,p,K,\kappa,R),\|\cdot\|_{L^{\infty,\infty}} \right) \leq \left( \frac{2L^2(pB+2)\kappa^L p^{L+1}}{\delta}\right)^{K}.
		\end{align}	
	\end{lemma}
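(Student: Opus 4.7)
The plan is to construct an explicit $\delta$-cover by discretizing the parameter space of the network class and then controlling how perturbations of the weights and biases propagate through the network to its output. The argument decomposes into two ingredients: (i) a layer-by-layer sensitivity estimate showing that two networks whose parameters agree to within $h$ (entrywise) produce outputs that agree to within an appropriate multiple of $h$; and (ii) an enumeration of a uniform grid on the admissible parameter set, whose cardinality is then optimized.

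First, I would prove the sensitivity estimate. Fix $x\in[-B,B]^{d_1}$ and let $f,\widetilde f\in \cF_{\rm NN}(d_1,d_2;L,p,K,\kappa,R)$ have parameters $\{(W_\ell,\bbb_\ell)\}_{\ell=1}^L$ and $\{(\widetilde W_\ell,\widetilde\bbb_\ell)\}_{\ell=1}^L$, with layer outputs $x_\ell$ and $\widetilde x_\ell$. Set $h:=\max_\ell(\|W_\ell-\widetilde W_\ell\|_{\infty,\infty}\vee \|\bbb_\ell-\widetilde\bbb_\ell\|_\infty)$. Since $\ReLU$ is $1$-Lipschitz and each row of $W_\ell$ has at most $p$ nonzero entries bounded by $\kappa$, telescoping gives the recursion
\begin{align*}
\|x_\ell-\widetilde x_\ell\|_\infty\le p\kappa\,\|x_{\ell-1}-\widetilde x_{\ell-1}\|_\infty+p\,h\,\|\widetilde x_{\ell-1}\|_\infty+h.
\end{align*}
An a priori estimate, obtained from $\|x_0\|_\infty\le B$ and iterating $\|\widetilde x_\ell\|_\infty\le p\kappa\|\widetilde x_{\ell-1}\|_\infty+\kappa$, yields $\|\widetilde x_\ell\|_\infty\le (p\kappa)^\ell(B+1)$. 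Solving the first recursion with this bound and taking the supremum over $x$ produces
\begin{align*}
\|f-\widetilde f\|_{L^{\infty,\infty}}\;\le\;L^2(pB+2)\,\kappa^{L-1}\,p^{L+1}\,h,
\end{align*}
after absorbing constants coarsely. This is the key Lipschitz-in-parameters bound.

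Next I would discretize. Choosing
\begin{align*}
h\;=\;\frac{\delta}{L^2(pB+2)\,\kappa^{L-1}\,p^{L+1}},
\end{align*}
the sensitivity estimate guarantees that any two networks whose parameters differ by at most $h$ entrywise are within $\delta$ in $\|\cdot\|_{L^{\infty,\infty}}$. For each of the at most $K$ nonzero parameters, replace its value by the nearest point of the uniform grid $\{-\kappa,-\kappa+h,\ldots,\kappa\}\subset[-\kappa,\kappa]$, which has cardinality at most $2\kappa/h$. Including $0$ in the grid lets a single enumeration handle the choice of sparsity pattern (an inactive position simply snaps to $0$), so the resulting number of admissible parameter vectors is at most
\begin{align*}
\Big(\tfrac{2\kappa}{h}\Big)^{K}\;=\;\Big(\tfrac{2L^2(pB+2)\,\kappa^L\,p^{L+1}}{\delta}\Big)^{K},
\end{align*}
which is the claimed covering bound.

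The main obstacle is the sensitivity analysis of Step 1: one must simultaneously account for two error sources at every layer — the inherited error amplified by the current weight matrix, and the new error produced by perturbing the current weights acting on the propagated activations — while keeping the constants explicit. Once the double induction (first for the a priori bound on $\|\widetilde x_\ell\|_\infty$, then for $\|x_\ell-\widetilde x_\ell\|_\infty$) is carried out and constants are collected, the discretization step is essentially mechanical and yields exactly the stated bound.
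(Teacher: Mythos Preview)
Your overall strategy---a parameter-sensitivity (Lipschitz-in-weights) estimate followed by a grid discretization of the admissible parameters---is exactly the standard route and is what the paper invokes by citing \cite[Lemma~6]{chen2019nonparametric}. However, the counting step contains a genuine gap, and it is compensated by an inflated Lipschitz constant, so that the two errors cancel and you land on the correct final expression for the wrong reason.

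Concretely: the recursion you wrote, together with the a~priori bound $\|\widetilde x_\ell\|_\infty\le (p\kappa)^\ell(B+1)$, unrolls to
\[
\|f-\widetilde f\|_{L^{\infty,\infty}}\;\le\;L\,(pB+2)\,(p\kappa)^{L-1}\,h,
\]
i.e.\ one power of $L$ and $p^{L-1}$, not $L^2$ and $p^{L+1}$. On the other hand, your claim that ``including $0$ in the grid lets a single enumeration handle the choice of sparsity pattern'' does not deliver an exponent $K$: if you enumerate values at \emph{all} parameter positions (with $0$ allowed), the exponent becomes the total number of positions ($\lesssim L p^2$), not $K$; if instead you snap only the $K$ nonzero entries, you must \emph{separately} count the choices of which $K$ of the $\lesssim Lp^2$ positions are active, contributing a factor at most $(Lp^2)^K$. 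The correct count is therefore of the form $(Lp^2)^K\cdot(2\kappa/h)^K$ with $h=\delta/\big(L(pB+2)(p\kappa)^{L-1}\big)$, which combines to the stated bound $\big(2L^2(pB+2)\kappa^Lp^{L+1}/\delta\big)^K$. In your write-up the missing $(Lp^2)^K$ from the sparsity enumeration is exactly the extra $Lp^2$ you inserted into the Lipschitz constant. The fix is mechanical: tighten the sensitivity bound to the value the recursion actually gives, and multiply by the number of sparsity patterns.
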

	Lemma \ref{lem.covering} can be proved by following the proof of \cite[Lemma 6]{chen2019nonparametric}.
	
	By (\ref{eq.cF.cover.compare}) and Lemma \ref{lem.covering}, we have
	\begin{align}
		&\cN\left( \delta,\cF_{\rm NN}^{\scrG},\|\cdot\|_{L^{\infty,\infty}} \right) \leq \nonumber\\
		&\left( \frac{2(L_{\scrE}+L_{\scrD})^2(\max\{p_{\scrE},p_{\scrD}\}B+2)(\max\{\kappa_{\scrE},\kappa_{\scrD}\})^{L_{\scrE}+L_{\scrD}} (\max\{p_{\scrE},p_{\scrD}\})^{L_{\scrE}+L_{\scrD}+1}}{\delta}\right)^{2(K_{\scrE}+K_{\scrD})}.
	\end{align}

\end{proof}

\subsection{Proof of Lemma \ref{lem.multi.approx}}
\label{proof.multi.approx}
\begin{proof}[Proof of Lemma \ref{lem.multi.approx}]
	We first show that there exist an encoder $\scrE:\cM(q)\rightarrow \RR^{C_{\cM}(d+1)}$ and a decoder $\scrD:\RR^{C_{\cM}(d+1)} \rightarrow \cM$ satisfying
	\begin{align}
		\scrD\circ\scrE(\xb)=\pi(\xb)
  \label{eq.ec}
	\end{align} 
 for any $\xb\in \cM(q)$.
	 We call $\scrE(\xb)$ and $\scrD$ as the oracle encoder and decoder. Then we show that there exists networks $\widetilde{\scrE}$ and $\widetilde{\scrD}$ approximating $\scrE$ and $\scrD$ so that $\widetilde{\scrD}\circ\widetilde{\scrE}$ approximates $\scrD\circ\scrE$ with high accuracy.
	
	\noindent $\bullet$ {\bf Constructing $\scrE$ and $\scrD$.} 
	The construction of $\scrE$ and $\scrD$ relies on a proper construction of an atlas of $\cM$ and a partition of unity of $\cM(q)$. We construct $\scrE$ and $\scrD$ using the following three steps.

 {\bf Step 1}. In the first step, we use the results from  \cite{cloninger2021deep} to construct a partition of unity of $\cM(q)$. 

  Define the local reach \citep{boissonnat2010manifold} of $\cM$ at $\vb\in \cM$ as
\begin{align}
	\tau_{\cM}(
 \vb)= \inf_{\xb\in G} \|\xb-\vb\|_2.
\end{align}
where $G= \left\{\xb\in \RR^D: \exists \mbox{ distinct } \pb,\qb\in \cM \mbox{ such that } d(\xb,\cM)=\|\xb-\pb\|_2=\|\xb-\qb\|_2\right\}$ is the medial axis of $\cM$.
We have 
$$
\tau_{\cM}=\inf_{\vb\in\cM} \tau_{\cM}(\vb).
$$
Let $\{\vb_j'\}_{j=1}^c$ be a $\delta$--separated set of $\cM$ for some integer $c>0$. Define $p=\frac{1}{2}(1+q/\tau)$, $h=\frac{6}{1-q/(p\tau)}$. If $\delta$ satisfies
\begin{align}
    \delta<C(1-q/\tau)^2\tau
\end{align}
for some absolute constant $C$,
\citet[Proposition 6.3]{cloninger2021deep} constructs a partition of unity of $\cM(q)$, denoted by $\{\eta_j\}_{j=1}^c$, defined as
 \begin{align}
    \bar{\eta}_j(\xb)=\max\left\{ \left( 1-\left( \frac{\|\xb-\vb_j'\|_2}{p\tau_{\cM}(\vb_j')} \right) ^2 -\left( \frac{\|A_{\vb_j'}^{\top}(\xb-\vb_j')\|_2}{h\delta}\right)^2\right), 0 \right\}, \quad \eta_j(\xb)=\frac{\bar{\eta}(\xb)}{\|\bar{\eta}(\xb)\|_1},
    \label{eqetaj}
\end{align}
 where $\bar{\eta}=[\bar{\eta}_1,...,\bar{\eta}_{c}]^{\top}$, 
  and $A_{\vb_j'}$ denotes the $D\times d$ matrix containing columnwise orthonormal basis for the tangent space $\cM$ at $\vb_j'$.  In (\ref{eqetaj}), $j$ is the index of each component of this partition of unity, and the construction of $\{\eta_j\}_{j=1}^c$ only depends on the $\delta$--separated set $\{\vb_j'\}_{j=1}^c$ and properties of $\cM$. With this construction, the cardinality of $\{\eta_j\}_{j=1}^c$ (and thus the corresponding atlas of $\cM$) depends on $q$, which goes to infinity as $q$ approaches $\tau$. 
  
  {\bf Step 2.} In the second step, we apply a grouping technique to $\{\eta_j\}_{j=1}^c$ to construct a partition of unity of $\cM(q)$ and an atlas of $\cM$  so that their cardinality only depends on $\cM$ itself.

 We use the following lemma:
	\begin{lemma}\label{lem.atlas}
		For any $\cM$ in Setting \ref{setting}, there exists two atlases $\{\widetilde{V}_j,\phi_j\}_{j=1}^{C_{\cM}}$ and $\{\bar{V}_j,\phi_j\}_{j=1}^{C_{\cM}}$ with $C_{\cM}=O((d\log d) (4/\tau)^d)$ so that for each $j=1,...,C_{\cM}$, it holds
		\begin{itemize}
			\item[(i)] $\bar{V}_j\subset \widetilde{V}_j$.
			\item[(ii)] 
			$
			\inf\limits_{\bar{\vb}\in \bar{V}_j,\widetilde{\vb}\in \partial \widetilde{V}_j} d_{\cM}(\bar{\vb},\widetilde{\vb})>\tau/8,
			$
			where $d_{\cM}(\cdot,\cdot)$ denotes the geodesic distance on $\cM$.
			\item[(iii)] 
   We have 
			\begin{align*}
				&\|\phi_j(\vb_1)-\phi_j(\vb_2)\|_{\infty}\leq \|\vb_1-\vb_2\|_2\leq d_{\cM}(\vb_1,\vb_2),\\
				&\|\phi_j(\vb_1)\|_{\infty}\leq \tau/4,
			\end{align*}
   for any $\vb_1,\vb_2\in \widetilde{V}_j$.

			\item[(iv)] For any $j=1,...,C_{\cM}$ and $\zb_1,\zb_2\in \phi_j(\widetilde{V}_j)$, we have
			\begin{align*}
				&\|\phi_j^{-1}(\zb_1)-\phi_j^{-1}(\zb_2)\|_{\infty}\leq 2 \|\zb_1-\zb_2\|_2.
			\end{align*}
		\end{itemize}
	\end{lemma}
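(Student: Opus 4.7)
The plan is to build the two atlases from a geodesic net on $\cM$, using tangent-space projections as the charts, and then to verify (i)--(iv) by invoking standard reach-based estimates. Concretely, I will pick a maximal set of points $\{\vb_j\}_{j=1}^{C_\cM}\subset\cM$ that are $\tau/16$--separated in geodesic distance, and define
\begin{gather*}
\bar V_j \;=\; \{\vb\in\cM:\ d_\cM(\vb,\vb_j)<\tau/8\},\qquad
\widetilde V_j \;=\; \{\vb\in\cM:\ d_\cM(\vb,\vb_j)<\tau/4\},\\
\phi_j(\vb)\;=\;A_{\vb_j}^{\top}(\vb-\vb_j),
\end{gather*}
where $A_{\vb_j}\in\RR^{D\times d}$ is an orthonormal basis of the tangent space $T(\vb_j)$. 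Maximality of the net forces the balls $\bar V_j$ to cover $\cM$, so $\{(\widetilde V_j,\phi_j)\}$ is a candidate atlas once we check that each $\phi_j$ is a homeomorphism onto its image.

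The easy items are (i), (ii), and the first half of (iii). Item (i) is immediate from $\tau/8<\tau/4$. Item (ii) follows from the geodesic triangle inequality: any $\bar\vb\in\bar V_j$ and $\widetilde\vb\in\partial\widetilde V_j$ satisfy $d_\cM(\bar\vb,\widetilde\vb)\ge d_\cM(\vb_j,\widetilde\vb)-d_\cM(\vb_j,\bar\vb)\ge \tau/4-\tau/8=\tau/8$. For (iii), since $\phi_j$ is the composition of a translation and an orthogonal projection, $\|\phi_j(\vb_1)-\phi_j(\vb_2)\|_\infty \le \|\phi_j(\vb_1)-\phi_j(\vb_2)\|_2\le \|\vb_1-\vb_2\|_2\le d_\cM(\vb_1,\vb_2)$, and $\|\phi_j(\vb)\|_\infty\le\|\vb-\vb_j\|_2\le d_\cM(\vb,\vb_j)\le \tau/4$ on $\widetilde V_j$.

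The main obstacle is item (iv), the 2-Lipschitz bound for $\phi_j^{-1}$, and it is where the reach enters quantitatively. I plan to invoke the standard tangent-projection distortion inequality for manifolds with positive reach (e.g.\ in the Niyogi--Smale--Weinberger spirit): if $\vb_1,\vb_2\in\cM$ and the Euclidean chord $\|\vb_1-\vb_2\|_2$ is sufficiently small relative to $\tau$, then the sine of the angle between $\vb_1-\vb_2$ and $T(\vb_j)$ is controlled by $\|\vb_1-\vb_2\|_2/\tau$, yielding $\|A_{\vb_j}^\top(\vb_1-\vb_2)\|_2\ge (1-c\,r/\tau)\|\vb_1-\vb_2\|_2$ for some absolute $c$ when both $\vb_1,\vb_2$ lie within Euclidean distance $r$ of $\vb_j$. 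Because $\widetilde V_j$ has geodesic radius $\tau/4$, and Euclidean distance is dominated by geodesic distance, every pair in $\widetilde V_j$ satisfies $\|\vb_i-\vb_j\|_2\le \tau/4$. Choosing the constants so that the factor $1-c\,r/\tau\ge 1/2$ on this range then gives $\|\vb_1-\vb_2\|_\infty\le\|\vb_1-\vb_2\|_2\le 2\|\phi_j(\vb_1)-\phi_j(\vb_2)\|_2$, which is (iv). The same inequality shows injectivity of $\phi_j$ on $\widetilde V_j$, hence $\phi_j$ is a homeomorphism onto its image and the pair is a legitimate chart.

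Finally, to bound $C_\cM$ I will use the standard packing estimate for compact $d$-dimensional submanifolds of bounded reach: a $\tau/16$-separated set on $\cM$ has cardinality at most $O((d\log d)(4/\tau)^d)$, by comparing volumes of geodesic balls to a Euclidean reference using the reach lower bound on ball volume (e.g.\ the estimate $\mathrm{vol}(B_\cM(\vb,r))\gtrsim r^d/d^{d/2}$ for $r\le\tau/2$), combined with the total volume of $\cM$. Putting everything together yields the advertised atlas cardinality and completes the proof of Lemma~\ref{lem.atlas}.
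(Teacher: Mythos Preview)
Your proposal is correct and follows essentially the same approach as the paper: cover $\cM$ by concentric balls of radii $\tau/8$ and $\tau/4$ around a net of centers, take $\phi_j$ to be the orthogonal projection onto $T(\vb_j)$, and use a reach-based angle bound for (iv) together with a packing/volume estimate for $C_\cM$. The only cosmetic differences are that the paper uses Euclidean rather than geodesic balls and, for (iv), cites the specific tangent-space angle bound $\sin\tfrac{\angle(T_{\vb_1}\cM,T_{\vb_2}\cM)}{2}\le \|\vb_1-\vb_2\|_2/(2\tau)$ from Boissonnat et al.\ in place of your ``NSW-spirit'' distortion inequality, arriving at $\cos(\angle(\vb_1-\vb_2,T_{\vb_j}\cM))\ge \sqrt{15}/4>1/2$ directly.
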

Lemma \ref{lem.atlas} is proved in Appendix \ref{proof.atlas}.

	  We next construct an atlas $\{V_j,\phi_j\}_{j=1}^{C_\cM}$ of $\cM$ using Lemma \ref{lem.atlas}, as well as
  a partition of unity $\{\rho_j\}_{j=1}^{C_{\cM}}$ defined on $\cM(q)$ so that $\{\rho_j|_{\cM}\}_{j=1}^{C_{\cM}}$ is a partition of unity subordinate to $\{V_j,\phi_j\}_{j=1}^{C_\cM}$.

Let $U_j$ be the support of $\eta_j$ defined in \eqref{eqetaj}. Then $\{U_j\}_{j=1}^c$ forms a cover of $\cM(q)$. According to \citet[Proposition 6.3]{cloninger2021deep}, we have
	\begin{align}
		(U_j\cap \cM)\subset B_{\cM,r}(\vb_j') \quad \mbox{ with } \quad r=c_1\delta
		\label{eq.U.Mrad}
	\end{align}
	for some constant $c_1$ depending on $q$ and $\tau_{\cM}$, where $B_{\cM,r}(\vb)$ denotes the geodesic ball on $\cM$ centered at $\vb$ with radius $r$.
	
	For $j=1,...,C_{\cM}$, we sequentially construct
 \begin{align*}
		\cI_j=\{k: (U_k\cap \cM)\cap \bar{V}_j \neq \emptyset \text{ and } k\notin \cI_{j'} \mbox{ for all } j'<j\}, \quad V_j=\bigcup\limits_{k\in \cI_j} (U_k\cap \cM).
	\end{align*}
 Such a construction ensures that $k$ only belongs to one $I_j$.
	According to (\ref{eq.U.Mrad}) and Lemma \ref{lem.atlas}(ii), as long as $2c_1\delta<\tau/8$, we have $V_j\subset \widetilde{V}_j$. Therefore $\phi_j$ is well defined on $V_j$. As a result, $\{V_j,\phi_j\}_{j=1}^{C_{\cM}}$ is an atlas of $\cM$. The relation among $\bar{V}_j,\widetilde{V}_j,V_j$ and $V_k$'s is illustrated in Figure \ref{fig:atlas}.
 \begin{figure}[t!]
     \centering
     \includegraphics[width=0.6\textwidth]{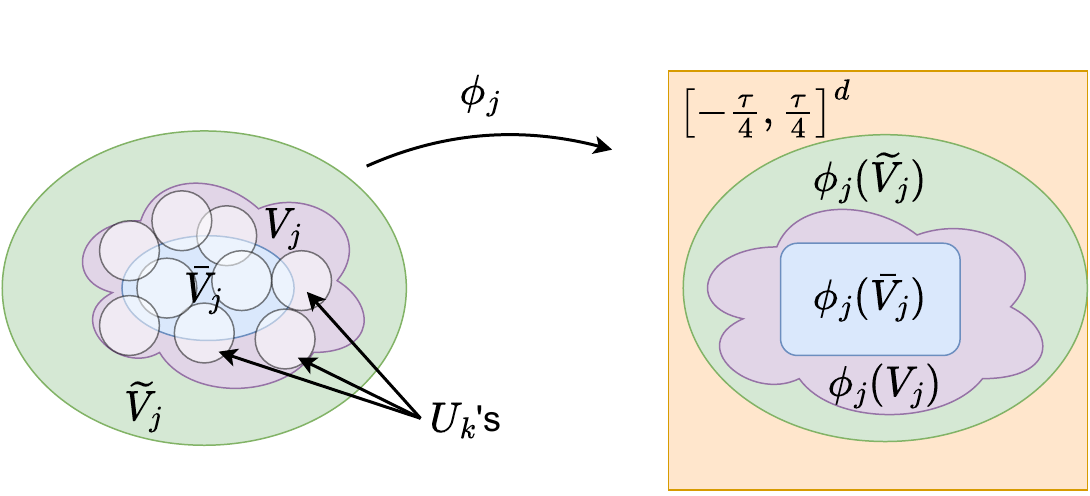}
     \caption{Illustration of the relation among $\bar{V}_j,\widetilde{V}_j,V_j$ and $U_k$'s. In the left figure, $\bar{V}_j\subset \widetilde{V}_j$ and $\partial\bar{V}_j$ is away from $\partial \widetilde{V}_j$ according to Lemma \ref{lem.atlas}. For all $U_k$'s (white regions) that intersect $\bar{V}_j$ (blue region), they are inside $\widetilde{V}_j$ (green region). The set $V_j$ (purple region) is the union of all such $U_k$'s that are not included in $V_{j'}$ for $j'<j$. After transformation $\phi_j$, the relation among these sets are illustrated in the right figure.}
     \label{fig:atlas}
 \end{figure}
	
	Define 
	\begin{align}
		\rho_j(\xb)=\sum_{k\in \cI_j} \eta_k(\xb).
		\label{eq.rho}
	\end{align}
	Then $\{\rho_j\}_{j=1}^{C_{\cM}}$ is a partition of unity of $\cM(q)$ and $\{\rho_j|_{\cM}\}_{j=1}^{C_{\cM}}$ is a partition of unity of $\cM$ subordinate to $\{V_j\}_{j=1}^{C_{\cM}}$.  
	
	We define the oracle encoder $\scrE: \cM(q)\rightarrow \RR^{C_{\cM}(d+1)}$ as
	\begin{align}
		\scrE(\xb)=\begin{bmatrix}
			\fb_1(\xb) & \cdots & \fb_{C_{\cM}}(\xb)
		\end{bmatrix}^{\top}, \quad 
		\mbox{ with } \quad \fb_j(\xb)=\begin{bmatrix}
			(\phi_j(\pi(\xb)))^{\top} & \rho_j(\xb)
		\end{bmatrix}\in \RR^{d+1},
		\label{eq.ecoder.multi}
	\end{align}
	and the corresponding decoder $\scrD: \RR^{C_{\cM}(d+1)}\rightarrow \cM$ as
	\begin{align*}
		\scrD(\zb)=\sum_{j=1}^{C_{\cM}} \phi_j^{-1}((\zb_j)_{1:d})\times (\zb_j)_{d+1},
	\end{align*}
	where $(\zb_j)_{1:d}=\begin{bmatrix}
		(\zb_j)_1 & \cdots & (\zb_j)_d 
	\end{bmatrix}.$
	For any $\xb\in \cM(q)$, we can verify that
	\begin{align*}
		\scrD\circ\scrE(\xb)=&\sum_{j=1}^{C_{\cM}} \phi_j^{-1}\circ\phi_j(\pi(\xb))\times \rho_j(\xb) \nonumber\\
		=&\sum_{\xb\in \supp(\rho_j)} \pi(\xb)\times \rho_j(\xb) \nonumber\\
		=& \pi(\xb).
	\end{align*}
	
	\noindent $\bullet$ {\bf Constructing $\widetilde{\scrE}$ and $\widetilde{\scrD}$.}
	
	The following lemma show that there exist network $\widetilde{\scrE}$ approximating $\scrE$ and network $\widetilde{\scrD}$ approximating $\scrD$ so that $\widetilde{\scrG}=\widetilde{\scrD}\circ\widetilde{\scrE}$ approximate $\scrG$ with high accuracy.
	
	\begin{lemma}\label{lem.multi.approx.G}
		Consider Setting \ref{setting}. For any $0<\varepsilon<\tau/2$, there exists a network architecture $\cF_{\rm NN}^{\scrE}=\cF(D,C_{\cM}(d+1);L_{\scrE}, p_{\scrE}, K_{\scrE},\kappa_{\scrE},R_{\scrE})$ giving rise to a network $\widetilde{\scrE}$, and a network architecture $\cF_{\rm NN}^{\scrD}=\cF(C_{\cM}(d+1),D;L_{\scrD}, p_{\scrD}, K_{\scrD},\kappa_{\scrD},R_{\scrD})$ giving rise to a network $\widetilde{\scrD}$, so that 
  \begin{align*}
			\sup_{\xb\in \cM(q)}\|\widetilde{\scrG}(\xb)-\scrD\circ\scrE(\xb)\|_{\infty}\leq \varepsilon
		\end{align*}
		with $\widetilde{\scrG}=\widetilde{\scrD}\circ\widetilde{\scrE}$.
		The network architectures have
		\begin{align*}
			&L_{\scrE}=O(\log^2 \varepsilon^{-1}+\log D), \ p_{\scrE}=O(D\varepsilon^{-d}), \ K_{\scrE}=O((D\log D)\varepsilon^{-d}\log^2 \varepsilon^{-1}), \nonumber\\
			& \kappa_{\scrE}=O(\varepsilon^{-2}), \ R_{\scrE}=\max\{\tau/4,1\}.
		\end{align*} 
		and
		\begin{align*}
			&L_{\scrD}=O(\log^2 \varepsilon^{-1}+\log D), \ p_{\scrD}=O(D\varepsilon^{-d}), \ K_{\scrD}=O(D\varepsilon^{-d} \log^2 \varepsilon +D\log D), \nonumber\\
			& \kappa_{\scrD}=O(\varepsilon^{-1}), \ R_{\scrD}=B.
		\end{align*}
	\end{lemma}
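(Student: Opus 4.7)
The plan is to construct $\widetilde{\scrE}$ and $\widetilde{\scrD}$ by approximating each of the pieces in the oracle encoder/decoder defined just before the lemma, and then to propagate the errors through the composition using the Lipschitz bounds recorded in Lemma \ref{lem.atlas} and Lemma \ref{lem.pi}.

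First, I would approximate the oracle encoder $\scrE$. Recall $\scrE(\xb)$ stacks the blocks $\fb_j(\xb)=[\phi_j(\pi(\xb))^\top,\ \rho_j(\xb)]^\top$ for $j=1,\dots,C_{\cM}$. For each coordinate of $\phi_j\circ\pi$, I would invoke the projection approximation theorem (the same tool used in Lemma \ref{lem.f}, drawn from \cite{cloninger2021deep}): because $\phi_j$ is $1$-Lipschitz on $\widetilde V_j$ and extends trivially outside, each scalar target $\xb\mapsto(\phi_j\circ\pi)(\xb)$ is approximable to accuracy $\varepsilon_1$ by a ReLU subnetwork of depth $O(\log^2\varepsilon_1^{-1}+\log D)$, width $O(D\varepsilon_1^{-d})$, sparsity $O((D\log D)\varepsilon_1^{-d}\log^2\varepsilon_1^{-1})$ and weight bound $O(\varepsilon_1^{-2})$. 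Stacking $C_{\cM}\cdot d$ of these in parallel keeps the same orders since $C_{\cM}$ depends only on $\cM$. For the partition-of-unity coordinate $\rho_j=\sum_{k\in\cI_j}\eta_k$, note from \eqref{eqetaj} that each $\bar\eta_k$ is a truncated quadratic in $\xb-\vb_k'$ and $A_{\vb_k'}^\top(\xb-\vb_k')$, and $\eta_k=\bar\eta_k/\sum_l\bar\eta_l$. Using the standard ReLU construction for $x\mapsto x^2$ and the multiplication identity $xy=\tfrac14((x+y)^2-(x-y)^2)$, each $\bar\eta_k$ can be approximated to $\varepsilon_1$ accuracy with network cost of the same order as above; the normalization amounts to a single scalar division on a uniformly positive denominator (since $\{\eta_k\}$ is a bona fide partition of unity on $\cM(q)$), which a small ReLU subnetwork can realize to accuracy $\varepsilon_1$ with only $O(\log \varepsilon_1^{-1})$ depth. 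Truncating outputs to $[-\tau/4,\tau/4]$ gives the bound $R_{\scrE}=\tau/4$.

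Next, I would approximate the decoder $\scrD(\zb)=\sum_{j=1}^{C_{\cM}}\phi_j^{-1}((\zb_j)_{1:d})\cdot(\zb_j)_{d+1}$. Using the Kirszbraun extension (Lemma \ref{lem:extension}) I extend each $\phi_j^{-1}$ to all of $[-\tau/4,\tau/4]^d\to\RR^D$ while keeping it $2$-Lipschitz (Lemma \ref{lem.atlas}(iv)); then Yarotsky's approximation (Lemma \ref{lem.g} stacked coordinate-wise) yields a ReLU subnetwork $\widetilde{\phi_j^{-1}}$ of depth $O(\log\varepsilon_1^{-1})$, width $O(D\varepsilon_1^{-d})$, sparsity $O(D\varepsilon_1^{-d}\log\varepsilon_1^{-1})$ matching $\phi_j^{-1}$ within $\varepsilon_1$. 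The scalar--vector multiplication $\widetilde{\phi_j^{-1}}(\cdot)\cdot(\zb_j)_{d+1}$ is realized, coordinate by coordinate, by the ReLU multiplication gadget, with logarithmic depth and $O(D)$ width per chart. Summing over the $C_{\cM}$ charts is a single linear layer. A concurrent parallel pathway routing the input $\zb$ through the network requires the $D\log D$ additive term in $K_{\scrD}$ and the $\log D$ term in $L_{\scrD}$, matching the announced sizes.

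Finally, I would assemble the error bound. Writing $\widetilde{\scrG}=\widetilde{\scrD}\circ\widetilde{\scrE}$ and $\scrG=\scrD\circ\scrE$, I split the error as
\begin{align*}
\|\widetilde{\scrG}(\xb)-\scrG(\xb)\|_\infty
\le \|\widetilde{\scrD}(\widetilde{\scrE}(\xb))-\widetilde{\scrD}(\scrE(\xb))\|_\infty
+\|\widetilde{\scrD}(\scrE(\xb))-\scrD(\scrE(\xb))\|_\infty.
\end{align*}
The second term is bounded directly by the chart-wise approximation errors of $\phi_j^{-1}$ and of the multiplication gadget, multiplied by $C_{\cM}$. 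The first term is controlled by the Lipschitz constant of $\widetilde{\scrD}$ acting on the encoder error, where in turn the encoder error is a sum of the errors in approximating $\phi_j\circ\pi$ and $\rho_j$. Choosing $\varepsilon_1=c\varepsilon$ for a small constant $c$ that absorbs $C_{\cM}$, the Lipschitz constant of $\scrD$, the Lipschitz constant of $\pi$ from Lemma \ref{lem.pi}, and the $\sqrt{d}$ factors, gives the claimed $L^\infty$ bound $\varepsilon$. The main obstacle in the plan is the last step: making sure that the error amplification from composition with $\widetilde{\scrD}$ and from the product $\widetilde{\phi_j^{-1}}\cdot\widetilde{\rho_j}$ stays linear in $\varepsilon_1$ despite both factors being network approximations, which is why one must exploit the facts that $\|\phi_j^{-1}\|_\infty\le B$, $|\rho_j|\le 1$, and that the denominator in $\rho_j$ is uniformly bounded below on $\cM(q)$. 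Everything else is a matter of counting parameters and collecting the orders listed in the statement.
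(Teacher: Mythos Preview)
Your overall plan matches the paper's: approximate each $\phi_j\circ\pi$ via the Cloninger--Klock projection result (as in Lemma \ref{lem.f}), approximate each $\rho_j$, extend and approximate each $\phi_j^{-1}$ via Kirszbraun plus Yarotsky (Lemma \ref{lem.g}), and glue with a ReLU multiplication gadget. Two remarks.

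First, for $\rho_j$ the paper does not build the approximation from squares, polarization and an explicit reciprocal as you sketch; it simply invokes \cite[Lemma 15]{cloninger2021deep}, which already delivers a network approximating the full normalized vector $\eta=(\eta_1,\dots,\eta_c)$ to $\ell_1$-accuracy $\varepsilon$, and then sums the entries with indices in $\cI_j$ (this is Lemma \ref{lem.rho}). Your route would also work but is unnecessary.

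Second, your error split routes through $\widetilde{\scrD}(\scrE(\xb))$, so the first term $\|\widetilde{\scrD}(\widetilde{\scrE}(\xb))-\widetilde{\scrD}(\scrE(\xb))\|_\infty$ requires a Lipschitz bound on the \emph{network} $\widetilde{\scrD}$. You correctly flag this as the main obstacle, but the resolution you propose---using $\|\phi_j^{-1}\|_\infty\le B$ and $|\rho_j|\le 1$---concerns the oracle, not $\widetilde{\scrD}$. The a priori Lipschitz constant of a ReLU network in $\cF(\cdot;L_{\scrD},p_{\scrD},\cdot,\kappa_{\scrD},\cdot)$ can be as large as $(p_{\scrD}\kappa_{\scrD})^{L_{\scrD}}$, which here is super-polynomial in $\varepsilon^{-1}$ and would swamp the encoder error. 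The paper sidesteps this by splitting the other way, through $\scrD(\widetilde{\scrE}(\xb))$:
\[
\|\widetilde{\scrD}\circ\widetilde{\scrE}(\xb)-\scrD\circ\scrE(\xb)\|_\infty
\le \|\widetilde{\scrD}\circ\widetilde{\scrE}(\xb)-\scrD\circ\widetilde{\scrE}(\xb)\|_\infty
+\|\scrD\circ\widetilde{\scrE}(\xb)-\scrD\circ\scrE(\xb)\|_\infty.
\]
Now the first term is just the uniform error of $\widetilde{\scrD}$ on the latent box $([-\tau/4,\tau/4]^d\times[0,1])^{C_{\cM}}$ (into which $\widetilde{\scrE}$ maps by the output clipping $R_{\scrE}=\tau/4$), and the second term needs only the Lipschitz and boundedness of the \emph{oracle} $\scrD$, which follow directly from Lemma \ref{lem.atlas}(iv) and $|\rho_j|\le 1$. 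With this swap of the intermediate point your argument goes through cleanly.
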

	Lemma \ref{lem.multi.approx.G} is proved in Appendix \ref{proof.multi.approx.G}.
	
	Let $\widetilde{\scrE}$ and $\widetilde{\scrD}$ be defined in Lemma \ref{lem.multi.approx.G} so that
	\begin{align*}
		\sup_{\xb\in \cM(q)}\|\widetilde{\scrG}(\xb)-\scrD\circ\scrE(\xb)\|_{\infty}\leq \varepsilon
	\end{align*}
	For any $\xb\in \cM(q)$, we have 
	\begin{align*}
		\|\widetilde{\scrG}(\xb)-\pi(\xb)\|_{\infty}= \|\widetilde{\scrG}(\xb)-\scrD\circ\scrE(\xb)\|_{\infty}\leq \varepsilon.
	\end{align*}
\end{proof}

\subsection{Proof of Lemma \ref{lem.atlas}}
\label{proof.atlas}
\begin{proof}[Proof of Lemma \ref{lem.atlas}]
	We construct $\{\bar{V}_j,\phi_j\}_{j=1}^{C_{\cM}}$ and $\{\widetilde{V}_j,\phi_j\}_{j=1}^{C_{\cM}}$ by covering $\cM$ using Euclidean balls. We first use Euclidean balls with radius $r_1=\tau/8$ to cover $\cM$. Since $\cM$ is compact, the number of balls is finite. Denote the number of balls by $C_{\cM}$ and the centers by $\{\cbb_j\}_{j=1}^{C_{\cM}}$. We define 
	\begin{align*}
		\bar{V}_j=B_{r_1}(\cbb_j)\cap \cM.
	\end{align*}
Then $\{\bar{V}_j\}_{j=1}^{C_{\cM}}$ is a cover of $\cM$. 
The following lemma shows that $C_{\cM}$ is a constant depending on $d$ and $\tau$.
\begin{lemma}\label{lem.Mcover}
    Let $\cM$ be a $d$-dimensional compact Riemannian manifold embedded in $\RR^D$. Assume $\cM$ has reach $\tau>0$. Let $\{V_j\}_{j=1}^{C_{\cM}}$ be a minimum cover of $\cM$ with $V_j=\cM\cap B_r(\cbb_j)$ for a set of centers $\{\cbb_j\}_{j=1}^{C_{\cM}}$. For any $r<\tau/2$, we have
    \begin{align}
        C_{\cM}\leq \frac{|\cM|}{\cos^d(\arcsin \frac{r}{2\tau}) |B_r^d|},
    \end{align}
    where $|\cM|$ denotes the volume of $\cM$, and $|B_r^d|$ denotes the volume of the $d$-dimensional Euclidean ball with radius $r$.  
\end{lemma}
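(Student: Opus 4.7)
The proof will combine a standard packing–covering reduction with a reach-based volume estimate on geodesic-sized pieces of $\cM$.

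Step 1 (Reduction to a packing). I would pick a maximal $r$-packing $\{p_i\}_{i=1}^{N}\subset\cM$, i.e.\ a maximal subset satisfying $\|p_i-p_j\|_2\geq r$ for $i\neq j$, which exists by compactness. By maximality, for every $q\in\cM$ there is some $p_i$ with $\|q-p_i\|_2<r$, so $\{B_r(p_i)\cap\cM\}_{i=1}^{N}$ is itself a valid cover of $\cM$ by radius-$r$ Euclidean balls, hence $C_{\cM}\leq N$. Moreover, since the $p_i$ are pairwise $r$-separated, the Euclidean balls $\{B_{r/2}(p_i)\}_{i=1}^{N}$ are pairwise disjoint in $\RR^D$, so the sets $\{B_{r/2}(p_i)\cap\cM\}_{i=1}^{N}$ are pairwise disjoint subsets of $\cM$.

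Step 2 (Volume lower bound via reach). For each $i$, I would lower bound $|B_{r/2}(p_i)\cap\cM|$ through the orthogonal projection $\pi_{p_i}:\RR^D\to T_{p_i}\cM$. Since $r/2<\tau/4<\tau$, a standard reach-based estimate controls how tangent spaces rotate: for every $q\in B_{r/2}(p_i)\cap\cM$ the tangent spaces $T_q\cM$ and $T_{p_i}\cM$ make an angle $\theta_q$ with $\sin\theta_q\leq\|q-p_i\|_2/\tau\leq r/(2\tau)$, so $\pi_{p_i}$ restricted to $\cM$ has Jacobian at least $\cos(\arcsin(r/(2\tau)))$ throughout $B_{r/2}(p_i)\cap\cM$. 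A companion lifting argument (inverse projection from $T_{p_i}\cM$ back to $\cM$, available because $r<\tau/2$) shows that the image $\pi_{p_i}(B_{r/2}(p_i)\cap\cM)$ contains a $d$-dimensional Euclidean ball of radius $r/2$ in $T_{p_i}\cM$ centered at $p_i$. Combining these two ingredients with the change-of-variables formula gives
\begin{align*}
|B_{r/2}(p_i)\cap\cM|\ \geq\ \cos^d\!\bigl(\arcsin(r/(2\tau))\bigr)\,|B_{r/2}^d|.
\end{align*}

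Step 3 (Summation and conclusion). Disjointness from Step 1 yields $\sum_{i=1}^{N}|B_{r/2}(p_i)\cap\cM|\leq|\cM|$, which combined with the lower bound in Step 2 gives $N\leq|\cM|/\bigl(\cos^d(\arcsin(r/(2\tau)))\,|B_{r/2}^d|\bigr)$. Using $C_{\cM}\leq N$ and the identity $|B_{r/2}^d|=2^{-d}|B_r^d|$ produces a bound of the stated form (up to the factor relating $|B_r^d|$ to $|B_{r/2}^d|$, which can be absorbed by choosing the packing radius proportional to $r$ rather than exactly $r$, or by invoking a slightly refined volume estimate directly at radius $r$).

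The main obstacle is the volume lower bound in Step 2: one needs not only that the tangent-space projection is locally non-singular (from the angle bound) but also that its image contains a full $d$-ball of the expected radius. This surjectivity requires a quantitative inverse-projection (or implicit function) argument controlled by the reach, and is precisely where the hypothesis $r<\tau/2$ enters — without it, the projection could miss a large portion of the tangent ball and the $\cos^d(\arcsin(r/(2\tau)))$ factor would not be achievable. The remaining work is bookkeeping to match the exact constant in the denominator, either by selecting the packing radius optimally or by verifying the volume comparison directly on $B_r(p_i)\cap\cM$.
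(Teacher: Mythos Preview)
Your approach is essentially the same as the paper's: reduce the covering number to a packing number and then bound the packing number via a reach-based volume comparison on $\cM$. The paper simply cites \citet[Lemmas~5.2 and~5.3]{niyogi2008finding} for these two steps, whereas you unroll their content (maximal $r$-packing $\Rightarrow$ cover, disjoint half-balls, tangent-projection Jacobian $\geq \cos(\arcsin(r/(2\tau)))$, image containing a $d$-ball). Your honest remark about the $|B_r^d|$ versus $|B_{r/2}^d|$ bookkeeping is well placed: the paper's own proof in fact lands on $C_{\cM}\leq P_{\cM}(r/2)\leq |\cM|/(\cos^d(\arcsin\frac{r}{4\tau})\,|B_{r/2}^d|)$, which differs from the stated constant by exactly the factor you flag, so the discrepancy is not a defect of your argument.
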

Lemma \ref{lem.Mcover} is proved in Appendix \ref{proof.Mcover}. 
By Lemma \ref{lem.Mcover}, $C_{\cM}$ is proportional to $|\cM|/r^d$.

Let $\bar{P}_j$ be the orthogonal projection from $\bar{V}_j$ to the tangent plane of $\cM$ at $\cbb_j$. By \citet[Lemma 4.2]{chen2019nonparametric}, $\bar{V}_j$ is diffeomorphic to a subset of $\RR^d$ and $\bar{P}_j$ is a diffeomorphism.

	With the same set of centers, we use Euclidean balls with radius $r_2=\tau/4$ to cover $\cM$. Define 
	\begin{align*}
		\widetilde{V}_j=B_{r_2}(\cbb_j)\cap \cM
	\end{align*}
	and $\widetilde{P}_j$ be the orthogonal projection from $\widetilde{V}_j$ to the tangent plane of $\cM$ at $\cbb_j$. As $\widetilde{V}_j$ is bounded and $C_{\cM}$ is finite, there exists a constant $\Lambda$ depending on $\tau$ so that $\|\widetilde{P}_j(\vb)\|_{\infty}\leq \Lambda$. Setting $\widetilde{P}_j$ so that $\widetilde{P}_j(\cbb_j)=\mathbf{0}$, we have $\Lambda\leq \tau/4$. Again by \citet[Lemma 4.2]{chen2019nonparametric}, $\widetilde{P}_j$ is a diffeomorphism between $\widetilde{V}_j$ and $\RR^d$.
	We set $\phi_j=\widetilde{P}_j$. Since $\bar{V}_i\subset \widetilde{V}_i$, $\phi_i$ is well defined on $\bar{V}_i$. We have
	\begin{align*}
		\inf\limits_{\bar{\vb}\in \bar{V}_i,\widetilde{\vb}\in \partial \widetilde{V}_i} d_{\cM}(\bar{\vb},\widetilde{\vb})\geq \inf\limits_{\bar{\vb}\in \bar{V}_i,\widetilde{\vb}\in \partial \widetilde{V}_i} \|\bar{\vb}-\widetilde{\vb}\|_2\geq r_2-r_1=\tau/8.
	\end{align*}
	Since $\phi_j$'s are diffeomorphisms, there exist constants $C_f$ and $C_g$ so that item (iii) and (iv) hold.

	We next prove the Lipschitz property of $\phi_j$ and $\phi_j^{-1}$. For $\phi_j$ and any $\vb_1,\vb_2\in \widetilde{V}_j$, we have
	\begin{align*}
		\|\phi_j(\vb_1)-\phi_j(\vb_2)\|_{\infty}= \|\widetilde{P}_j(\vb_1-\vb_2)\|_{\infty} \leq \|\widetilde{P}_j(\vb_1-\vb_2)\|_{2}\leq \|\vb_1-\vb_2\|_2\leq d_{\cM}(\vb_1,\vb_2).
	\end{align*}
	We then focus on $\phi^{-1}$. Denote the tangent space of $\cM$ at $\vb\in\cM$ by $T_{\vb}\cM$, and the principal angle between two tangent spaces $T_{\vb_1}\cM,T_{\vb_2}\cM$ by $\angle(T_{\vb_1}\cM,T_{\vb_2}\cM)$. We will use the following lemma
	\begin{lemma}[Corollary 3 in \cite{boissonnat2019reach}]\label{lem.geometry}
		Let $\cM$ be a $d$-dimensional manifold embedded in $\RR^D$. Denote the reach of $\cM$ by $\tau$. We have
		\begin{align*}
			\sin \frac{\angle(T_{\vb_1}\cM,T_{\vb_2}\cM)}{2}\leq \frac{\|\vb_1-\vb_2\|_2}{2\tau}
		\end{align*}
	for any $\vb_1,\vb_2\in \cM$.
	\end{lemma}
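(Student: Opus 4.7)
The plan is to derive the tangent-space angle bound directly from the defining tangent-ball property of reach: at any $\vb\in\cM$ and unit normal $\nb\in T_\vb^{\perp}\cM$, the open balls $B(\vb\pm\tau\nb,\tau)$ are disjoint from $\cM$. The stated inequality should be extracted as a quantitative consequence, with a circle of radius $\tau$ being the extremal configuration where equality holds.

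First I would establish the pointwise flatness estimate: for any $\vb_1,\vb_2\in\cM$ and any unit normal $\nb$ at $\vb_1$,
\begin{align*}
|\langle \vb_2-\vb_1,\nb\rangle|\le\frac{\|\vb_2-\vb_1\|_2^2}{2\tau}.
\end{align*}
This is immediate by expanding $\|\vb_2-(\vb_1\pm\tau\nb)\|_2^2\ge\tau^2$, using that $\vb_2\in\cM$ must lie outside both tangent balls at $\vb_1$. Equivalently, the projection of $\vb_2-\vb_1$ onto $T_{\vb_1}^{\perp}\cM$ has norm at most $\|\vb_2-\vb_1\|_2^2/(2\tau)$, which in the infinitesimal limit bounds the second fundamental form of $\cM$ by $1/\tau$.

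To convert this pointwise estimate into the global tangent-space angle bound, I would parallel-transport a unit tangent frame along a minimizing geodesic $\gamma:[0,\ell]\to\cM$ from $\vb_1$ to $\vb_2$. The ambient derivative of any such transported vector lies in the normal bundle and has magnitude bounded by the norm of the second fundamental form, so integrating gives a total rotation of at most $\ell/\tau$ for the whole tangent frame. Combined with the chord/geodesic comparison $\ell\le 2\tau\arcsin(\|\vb_1-\vb_2\|_2/(2\tau))$ --- itself a standard consequence of reach, derived by viewing the geodesic as an ambient curve of curvature at most $1/\tau$ --- and the half-angle identity I obtain
\begin{align*}
\sin(\theta/2)\le\sin(\ell/(2\tau))\le\frac{\|\vb_1-\vb_2\|_2}{2\tau},
\end{align*}
which is the claim.

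The main obstacle is the geodesic-length bound: the naive estimate $\ell\approx\|\vb_1-\vb_2\|_2$ is only valid for small chord lengths and must be controlled uniformly using reach. A more direct alternative, avoiding any use of geodesic length, is to pick unit $\nb_2\in T_{\vb_2}^{\perp}\cM$ achieving the principal angle with $T_{\vb_1}^{\perp}\cM$ and to apply the flatness estimate symmetrically at both endpoints; after decomposing $\nb_2$ into components parallel and perpendicular to $T_{\vb_1}\cM$, the desired inequality follows from careful trigonometric bookkeeping, again with the circle of radius $\tau$ as the extremal case. Either way, the crux of the argument is that no submanifold of reach $\ge\tau$ can bend its tangent spaces faster than the extremal circle does.
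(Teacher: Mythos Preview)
The paper does not prove this lemma; it is quoted as Corollary~3 of \cite{boissonnat2019reach} and invoked without argument in Appendix~\ref{proof.atlas}. There is therefore no proof in the paper to compare against.

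On your sketch: the geodesic route is morally right but the chord--geodesic step is not as innocent as you present it. The Schur-type bound ``a curve of ambient curvature $\le 1/\tau$ and length $\ell$ has chord at least $2\tau\sin(\ell/(2\tau))$'' holds only for $\ell\le\pi\tau$; to invert it you must already know the minimizing geodesic has length at most $\pi\tau$, which is itself a nontrivial theorem about positive-reach manifolds (in fact one of the main results of \cite{boissonnat2019reach}, established there in tandem with the tangent-angle bound rather than as an input to it). Your alternative---applying the flatness estimate symmetrically at both endpoints and doing the trigonometry directly---is the self-contained route. One caution: in dimension $d>1$ the naive step ``$\theta\le\alpha_1+\alpha_2$ via the chord direction'' fails (two $d$-planes can share a line and still be orthogonal), so the bookkeeping must use the full normal-space projections at both endpoints, not merely the chord's angle with each tangent space; this is where the argument in \cite{boissonnat2019reach} does real work.
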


We have
\begin{align}
	\|\phi_j^{-1}(\zb_1)-\phi_j^{-1}(\zb_2)\|_{\infty}\leq \|\phi_j^{-1}(\zb_1)-\phi_j^{-1}(\zb_2)\|_{2} =\|\vb_1-\vb_2\|_2=\frac{\|\zb_1-\zb_2\|_2}{|\cos(\angle(\vb_1-\vb_2,T_{\cbb_j}\cM))|},
	\label{eq.vbzb}
\end{align}
where the last equality is due to $\phi_j$ being an orthogonal projection and some geometric derivation, see Figure \ref{fig.angle}  for an illustration.
\begin{figure}[t!]
	\centering
	\includegraphics[width=0.6\textwidth]{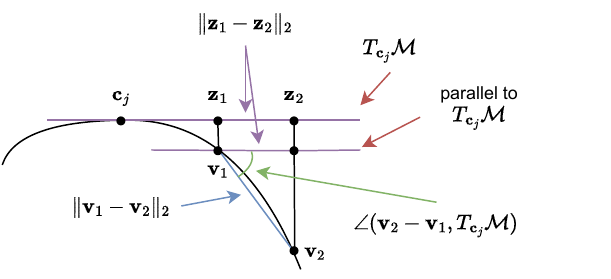}
	\caption{ Illustration of the last equality in (\ref{eq.vbzb}).}
	\label{fig.angle} 
\end{figure}

The denominator in (\ref{eq.vbzb}) can be lower bounded as
\begin{align}
	|\cos(\angle(\vb_1-\vb_2,T_{\cbb_j}\cM))|=&\sqrt{1-\sin^2(\angle(\vb_1-\vb_2,T_{\cbb_j}\cM))} \nonumber\\
	\geq&  \sqrt{1-\max_{\vb\in \widetilde{V}_j} \sin^2(\angle(T_{\vb}\cM,T_{\cbb_j}\cM))} \nonumber\\
	\geq & \sqrt{1-\max_{\vb\in \widetilde{V}_j} 4\sin^2\left(\frac{\angle(T_{\vb}\cM,T_{\cbb_j}\cM)}{2}\right)} \nonumber\\
	\geq & \sqrt{1-\max_{\vb\in \widetilde{V}_j} 4\left(\frac{\|\vb-\cbb_j\|_2}{2\tau}\right)^2} \nonumber\\
	= & \sqrt{1-4\left(\frac{\tau}{4\cdot 2\tau}\right)^2} \nonumber\\
	= & \frac{\sqrt{15}}{4} \nonumber\\
	\geq & 1/2,
	\label{eq.cos}
\end{align}
where in the third inequality Lemma \ref{lem.geometry} is used.

Substituting (\ref{eq.cos}) into (\ref{eq.vbzb}) gives rise to
\begin{align*}
	\|\phi_j^{-1}(\zb_1)-\phi_j^{-1}(\zb_2)\|_{\infty}\leq 2\|\zb_1-\zb_2\|_2.
\end{align*}

\end{proof}

\subsection{Proof of Lemma \ref{lem.multi.approx.G}}
\label{proof.multi.approx.G}
\begin{proof}[Proof of Lemma \ref{lem.multi.approx.G}]
	We will construct networks $\widetilde{\scrE}$ and $\widetilde{\scrD}$ to approximate $\scrE$ and $\scrD$, respectively. 
	
	\noindent $\bullet$ {\bf Construction of $\widetilde{\scrE}$.}
	
	The encoder $\scrE$ is a collection of $\fb_j$'s, which consist of $\phi_j,\pi$ and $\rho_j$. We show that these functions can be approximated well be networks. 
	By Lemma \ref{lem.atlas}(iii), $\phi_j$ is Lipschitz continuous with Lipschitz constant $1$. According to Lemma \ref{lem.f}, for any $0<\varepsilon<1$, there exists a network architecture $\cF(D,d;L_1,p_1,K_1,\kappa_1,R_1)$ that gives rise to a network $\widetilde{\phi}_j$ satisfying
	\begin{align}
		\sup_{\xb\in V_j}\|\widetilde{\phi}_j(\xb)-\phi_j\circ\pi(\xb)\|_{\infty}\leq \varepsilon_1.
		\label{eq.multi.phi}
	\end{align}
	Such an architecture has
	\begin{align*}
		L_1=O\left(\log^2 \varepsilon_1^{-1}+\log D\right), \ p_1=O\left(D\varepsilon_1^{-d}\right), \ K_1=O\left(D\varepsilon_1^{-d}\log^2 \varepsilon_1+D\log D\right),\ \kappa_1=O\left(\varepsilon_1^{-2}\right), \ R_1=\tau/4.
	\end{align*}

	The following lemma shows that $\rho_j$ can be approximated by a network with arbitrary accuracy (see a proof in Appendix \ref{proof.rho}):
	\begin{lemma}\label{lem.rho}
		Consider Setting \ref{setting}. For any $0<\varepsilon<1$ and $j=1,...,C_{\cM}$, there exists a network architecture $\cF(D,1;L,p,K,\kappa,1)$ giving rise to a network $\widetilde{\rho}_j$ so that
		\begin{align*}
			\sup_{\xb\in \cM(q)}|\widetilde{\rho}_j(\xb)-\rho_j(\xb)|\leq \varepsilon.
		\end{align*}
		Such a network architecture has
		\begin{align*}
			L=O(\log^2 \varepsilon^{-1} + \log D), \ p=O(D\varepsilon^{-1}), \ K=O((D\log D)\varepsilon^{-1}\log^2 \varepsilon^{-1}), \kappa=O(\varepsilon^{-2}).
		\end{align*}
	\end{lemma}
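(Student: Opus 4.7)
The plan is to construct $\widetilde\rho_j$ by assembling ReLU sub-networks that separately approximate each $\bar\eta_k$, their partial sums, and finally the reciprocal of $S(\xb):=\|\bar\eta(\xb)\|_1$. The first thing I would do is to establish a uniform lower bound $S(\xb)\ge s_0>0$ on $\cM(q)$, where $s_0$ depends only on $q,\tau$ and the covering parameters of the Cloninger--Klock partition of unity. This follows from the construction of the $\bar\eta_k$'s in \citet[Proposition 6.3]{cloninger2021deep}: at every $\xb\in\cM(q)$ the number of active indices $\{k:\bar\eta_k(\xb)>0\}$ is bounded by a constant $M=M(d,\tau,q)$, and at least one $\bar\eta_k(\xb)$ is bounded below by a positive geometric constant since the $\vb_k'$'s are $\delta$-separated. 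This lower bound is what keeps the final division well-conditioned.

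Next I would build each $\widetilde{\bar\eta}_k$. Because $\bar\eta_k(\xb)=\ReLU\bigl(1-\alpha_k\|\xb-\vb_k'\|_2^2-\beta_k\|A_{\vb_k'}^{\top}(\xb-\vb_k')\|_2^2\bigr)$ is a ReLU applied to a quadratic polynomial, I would use Yarotsky's standard ReLU approximation of $t\mapsto t^2$ on a bounded interval with depth $O(\log\varepsilon_1^{-1})$ and $O(\log\varepsilon_1^{-1})$ non-zero weights. Computing the $D$ coordinate squarings in parallel and summing them through an $O(\log D)$-depth binary addition tree produces a ReLU network $\widetilde{\bar\eta}_k$ of depth $O(\log\varepsilon_1^{-1}+\log D)$, width $O(D)$, and $O(D\log\varepsilon_1^{-1})$ non-zero weights with $\|\widetilde{\bar\eta}_k-\bar\eta_k\|_{L^\infty(\cM(q))}\le\varepsilon_1$. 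Since the total number of active indices is bounded by a constant depending on $\cM$, stacking these in parallel yields $\widetilde N_j$ approximating $N_j:=\sum_{k\in\cI_j}\bar\eta_k$ and $\widetilde S$ approximating $S$ at the same asymptotic cost.

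For the final step I would approximate the division. Construct a ReLU network $\widetilde\psi:[s_0/2,C]\to\RR$ with $|\widetilde\psi(t)-1/t|\le\varepsilon_2$ using the Yarotsky-style piecewise polynomial construction refined by the squaring trick; since $1/t$ is smooth on the interval $[s_0/2,C]$, this uses $O(\log^2\varepsilon_2^{-1})$ depth, $O(\varepsilon_2^{-1})$ width, and $O(\varepsilon_2^{-1}\log^2\varepsilon_2^{-1})$ non-zero weights, which is the dominant contribution. Compose $\widetilde\psi$ with a ReLU multiplication block (polarization identity plus squaring) to define $\widetilde\rho_j:=\widetilde{\mathrm{mult}}\bigl(\widetilde N_j,\widetilde\psi(\widetilde S)\bigr)$. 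The algebraic identity
\[
\frac{\widetilde N_j}{\widetilde S}-\frac{N_j}{S}=\frac{\widetilde N_j-N_j}{S}+\frac{N_j(S-\widetilde S)}{S\widetilde S}
\]
together with the lower bound $S,\widetilde S\ge s_0/2$ (ensured by taking $\varepsilon_1\le s_0/4$) propagates the errors as $|\widetilde\rho_j-\rho_j|\le C_{s_0}(\varepsilon_1+\varepsilon_2)$. Choosing $\varepsilon_1=\varepsilon_2=\varepsilon/C_{s_0}$ and summing resources over all the sub-networks yields total depth $O(\log^2\varepsilon^{-1}+\log D)$, width $O(D\varepsilon^{-1})$, and $O((D\log D)\varepsilon^{-1}\log^2\varepsilon^{-1})$ non-zero weights.

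The main obstacle is the third step: one must secure the uniform lower bound $s_0>0$ on $S$ cleanly from the Cloninger--Klock construction, and then carry it through the reciprocal network whose Lipschitz constant scales like $1/s_0^2$. Everything else is routine assembly of ReLU primitives (squaring, addition trees, reciprocal, multiplication) together with bookkeeping of the size parameters to match the advertised bounds.
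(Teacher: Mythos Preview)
Your construction is correct in spirit, but it takes a much longer and more intricate route than the paper. The paper's proof is essentially two lines: it invokes \citet[Lemma~15]{cloninger2021deep}, which already provides a ReLU network $\widetilde{\eta}$ approximating the \emph{normalized} vector $\eta=(\eta_1,\dots,\eta_c)$ in $\ell_1$ uniformly on $\cM(q)$ with exactly the size bounds stated in the lemma; since $\rho_j=\sum_{k\in\cI_j}\eta_k$, one then simply appends a single linear layer $\widetilde\rho_j(\xb)=\wb^{\top}\ReLU(\widetilde\eta(\xb))$ with $\wb$ the indicator of $\cI_j$, and the $\ell_1$ bound on $\widetilde\eta-\eta$ immediately gives the scalar bound on $\widetilde\rho_j-\rho_j$. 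Your proposal instead reconstructs from scratch what Lemma~15 of Cloninger--Klock does internally: approximate each unnormalized $\bar\eta_k$, sum to get $N_j$ and $S$, approximate $1/S$, and multiply. This is a valid and more self-contained argument, and it makes the role of the lower bound $S\ge s_0$ explicit, but it duplicates work already packaged in the cited reference and requires you to track the parameter bound $\kappa=O(\varepsilon^{-2})$ through the reciprocal and multiplication sub-networks, which you have not addressed. One small caution: when you write ``the total number of active indices is bounded by a constant,'' be careful to distinguish the local overlap bound (how many $\bar\eta_k$ are nonzero at a given $\xb$) from the global count $c$ of all $\bar\eta_k$'s that must be evaluated to form $S$; the latter is what enters the network width, and it is a fixed constant in $q,\tau,\delta$ that gets absorbed in the $O(\cdot)$, not the overlap constant $M$.
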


	By Lemma \ref{lem.rho}, there exists a network $\widetilde{\rho}_j\in \cF(D,1;L_2,p_2,K_2,\kappa_2,R_2)$ so that
	\begin{align}
  \sup_{\xb\in \cM(q)}|\widetilde{\rho}_j(\xb)-\rho_j(\xb)| \leq \varepsilon_1.
		\label{eq.multi.rho}
	\end{align}
	Such a network architecture has
	\begin{align*}
		L_2=O(\log^2 \varepsilon_1^{-1} + \log D), \ p_2=O(D\varepsilon_1^{-1}), \ K_2=O((D\log D)\varepsilon_1^{-1}\log^2 \varepsilon_2^{-1}), \kappa_2=O(\varepsilon_1^{-2}), R_2=1.
	\end{align*}
	
	Define 
	\begin{align}
		\widetilde{\fb}_j(\xb)=\begin{bmatrix}
			(\widetilde{\phi}_j(\xb))^{\top} & \widetilde{\rho}_j(\xb) 
		\end{bmatrix}^{\top}
		\quad \mbox{ and } \quad \widetilde{\scrE}(\xb) = \begin{bmatrix}
			\widetilde{\fb}_1(\xb) & \cdots & \widetilde{\fb}_{C_{\cM}}(\xb)
		\end{bmatrix}^{\top}.
		\label{eq.ftilde}
	\end{align}
	We have $\widetilde{\scrE}\in \cF^{\scrE}(D,C_{\cM}(d+1);L_{\scrE}, p_{\scrE}, K_{\scrE}, \kappa_{\scrE}, R_{\scrE})$ with
	\begin{align*}
		&L_{\scrE}=O(\log^2 \varepsilon_1^{-1}+\log D), \ p_{\scrE}=O(D\varepsilon_1^{-1}), \ K_{\scrE}=O((D\log D)\varepsilon_1^{-d}\log^2 \varepsilon_1^{-1}), \nonumber\\
		& \kappa_{\scrE}=O(\varepsilon_1^{-2}), \ R_{\scrE}=\max\{\tau/4,1\}.
	\end{align*}
	The constant hidden in $O$ depends on $d,\tau,q,B,C_{\cM}$ and the volume  of $\cM$.

	\noindent $\bullet$ {\bf Construction of $\widetilde{\scrD}$.}
	
	By Lemma \ref{lem.atlas}(iv), $\phi_j^{-1}$ is Lipschitz continuous with Lipschitz constant $2$. Although $\phi_j^{-1}$ is only defined on $\phi_j(\widetilde{V}_j)\subset [-\tau/4,\tau/4]^d$, we can extend it to $[-\tau/4,\tau/4]^d$ by Lemma \ref{lem:extension}.   
Such an extension preserves the property Lemma \ref{lem.atlas}(iv).

For any $0<\varepsilon_2<1$, by Lemma \ref{lem.g}, there exists a network $\bar{g}_j\in \cF(d,D;L_3,p_3,K_3,\kappa_3,R_3)$ so that
	\begin{align*}
		\sup_{\zb\in [-\tau/4,\tau/4]^d}\|\bar{g}_j(\zb)-\phi^{-1}_j(\zb)\|_{\infty}\leq \varepsilon_2.
	\end{align*} 
	Such a network architecture has 
	\begin{align*}
		L_3=O(\log^2 \varepsilon_2^{-1})+\log D, \ p_3=O(D\varepsilon_2^{-d}), \ K_3=O(D\varepsilon_2^{-d}\log^2 \varepsilon_2+D\log D),\ \kappa_3=\varepsilon_2^{-1},\ R_3=B.
	\end{align*}

	Note that the input of $\bar{g}_j$ is in $\RR^d$ while the input to the decoder is in $\RR^{C_{\cM}(d+1)}$. We will append $\bar{g}_j$ by a linear transformation that extract the corresponding elements (i.e., the first $d$ element in the output of $\widetilde{\fb}_j$) from the input of the decoder. Define the matrix $A_j\in \RR^{d\times(C_{\cM}(d+1))}$ with
	\begin{align*}
		(A_j)_{i,k}=\begin{cases}
			1 & \mbox{ if }  k=(j-1)(d+1)+i,\\
			0 & \mbox{ otherwise}.
		\end{cases}
	\end{align*}
	We define the network 
	\begin{align*}
		\widetilde{g}_j(\zb)=\bar{g}_j(A_j\zb)
	\end{align*}
	for any $\zb\in \RR^{C_{\cM}(d+1)}$.
	
	The following lemma shows that the multiplication operator $\times$ can be well approximated by networks:
	\begin{lemma}[Proposition 3 in \cite{yarotsky2017error}]\label{lem.multiplication0}
		For any $B>0$ and $0<\varepsilon<1$, there exists a network $\widetilde{\times}$ so that for any $|x_1|\leq B$ and $|x_2|\leq B$, we have
		\begin{align*}
			|\widetilde{\times}(x_1,x_2)-x_1\times x_2|<\varepsilon,\ \widetilde{\times}(x_1,0)=\widetilde{\times}(0,x_2)=0.
		\end{align*}
		Such a network has $O(\log \varepsilon^{-1})$ layers and parameters. The width is bounded by 6 and all parameters are bounded by $B^2$.
	\end{lemma}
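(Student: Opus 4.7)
The plan is to follow Yarotsky's polarization approach: reduce multiplication to squaring via the identity $xy = \tfrac{1}{2}\bigl((x+y)^2 - x^2 - y^2\bigr)$, and build a ReLU approximation of the square function using a telescoping sawtooth construction. Concretely, I would first establish that on $[0,1]$ the square function admits an arbitrarily accurate ReLU approximation of logarithmic depth, then stitch three copies together to get multiplication, and finally rescale to handle inputs in $[-B,B]$.

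The squaring approximation is the technical core. Define the tent map $g(x) = 2\,\mathrm{ReLU}(x) - 4\,\mathrm{ReLU}(x-\tfrac{1}{2}) + 2\,\mathrm{ReLU}(x-1)$ so that $g$ equals the piecewise-linear sawtooth on $[0,1]$ with two pieces, and let $g_k = g \circ g \circ \cdots \circ g$ ($k$-fold composition), which is the sawtooth with $2^k$ teeth. The partial sum $S_m(x) = x - \sum_{k=1}^{m} g_k(x)/4^k$ is the piecewise-linear interpolant of $x^2$ at the dyadic points $j/2^m$, and a direct check gives $\sup_{x\in[0,1]}|S_m(x) - x^2| \le 2^{-2m-2}$. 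I would implement $S_m$ as a ReLU network of depth $O(m)$ and constant width by carrying two channels through the layers: one channel propagates the running sum, while the other recursively applies $g$ using the ReLU formula above. Since $g(0)=0$, we have $S_m(0)=0$ exactly, which will be crucial for the boundary condition.

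To obtain the multiplication network, I would first rescale: given $|x_1|,|x_2|\le B$, set $u = (x_1+x_2)/(2B)$, $v_1 = x_1/(2B)$, $v_2 = x_2/(2B)$, all of which lie in some bounded interval (say $[-1,1]$, extended by reflection $x^2 = |x|^2$ via $\mathrm{ReLU}(x)+\mathrm{ReLU}(-x) = |x|$). Then
\begin{equation*}
x_1 x_2 = 2B^2\bigl(u^2 - v_1^2 - v_2^2\bigr) \cdot (\text{sign correction})
\end{equation*}
can be approximated by $\widetilde{\times}(x_1,x_2) = 2B^2\bigl(S_m(u) - S_m(v_1) - S_m(v_2)\bigr)$ with error at most $6B^2 \cdot 2^{-2m-2}$. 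Choosing $m = O(\log \varepsilon^{-1})$ makes this at most $\varepsilon$. For the identity $\widetilde{\times}(x,0) = 0$: when $x_2 = 0$ we have $u = v_1$ and $v_2 = 0$, so $S_m(u) - S_m(v_1) - S_m(v_2) = 0 - 0 = 0$ exactly (using $S_m(0)=0$), and symmetrically for $x_1=0$.

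The complexity claim then follows by inspection. The depth is $O(m) = O(\log\varepsilon^{-1})$; the width stays bounded (I can arrange $6$ neurons per layer to carry three parallel squaring channels together with the sum accumulator); the nonzero parameters total $O(\log\varepsilon^{-1})$; and the largest weight is the final $2B^2$ rescaling factor, which dominates the $O(1)$ inner weights of the sawtooth construction. The main obstacle I anticipate is bookkeeping: writing the two-channel parallel implementation of $S_m$ so that the same layers simultaneously advance $g_k$ and accumulate $g_k/4^k$ while preserving width $6$ overall across the three square approximations, and verifying that the exact cancellation at the axes survives the rescaling. Everything else is routine.
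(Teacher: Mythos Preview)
The paper does not prove this lemma at all: it is simply quoted as Proposition~3 of \cite{yarotsky2017error} and used as a black box. Your proposal is essentially a reconstruction of Yarotsky's original argument (sawtooth approximation of $x\mapsto x^2$ combined with the polarization identity), which is the correct approach and would yield a valid proof.

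One point worth flagging: your width-$6$ claim is the delicate part. Running three independent squaring subnetworks in parallel, each carrying both a sawtooth channel and an accumulator channel, already consumes six neurons per layer, leaving no room to pass the raw inputs forward or to form $|x_1+x_2|/(2B)$, $|x_1|/(2B)$, $|x_2|/(2B)$ in the first place. Yarotsky's own implementation achieves constant width by a slightly different layout (and the precise constant in his paper is not exactly $6$); the lemma as stated here asserts width $\le 6$, so if you want to match that bound you will need to be careful---for instance, by computing the three squares sequentially rather than in parallel, or by sharing the accumulator across the three terms. You already identify this bookkeeping as the main obstacle, which is accurate; just be aware that the naive parallel arrangement does not quite fit.
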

	Based on Lemma \ref{lem.multiplication0}, it is easy to derive the following lemma:
	\begin{lemma}\label{lem.multiplication}
		For any $B>0$ and $0<\varepsilon<1$, there exists a network $\widetilde{\times}$ so that for any $\xb\in \RR^D$ with $\|\xb\|_{\infty}\leq B$ and $|y|\leq B$, we have
		\begin{align*}
			\|\widetilde{\times}(\xb,y)-y\xb\|_{\infty}<\varepsilon
		\end{align*}
		and $(\widetilde{\times}(\xb,y))_i=0$ if either $x_i=0$ or $y=0$, where $x_i$ denotes the $i$-th element of $\xb$.
		Such a network has $O(\log \varepsilon^{-1})$ layers, $O(D\log \varepsilon^{-1})$ parameters. The width is bounded by $6D$ and all parameters are bounded by $B^2$.
	\end{lemma}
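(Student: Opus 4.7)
The plan is to construct $\widetilde{\times}$ by running $D$ copies of the scalar multiplication network from Lemma \ref{lem.multiplication0} in parallel, one for each coordinate of $\xb$, all sharing the scalar input $y$. Concretely, let $\widetilde{\times}_{\rm sc}$ denote the scalar network guaranteed by Lemma \ref{lem.multiplication0} with accuracy $\varepsilon$ on inputs bounded by $B$; it has $L_0 = O(\log \varepsilon^{-1})$ layers, $O(\log \varepsilon^{-1})$ nonzero parameters, width bounded by $6$, and all weights and biases bounded by $B^2$. For each $i = 1,\dots,D$, form a subnetwork $\widetilde{\times}_{\rm sc}^{(i)}$ that reads the pair $(x_i, y)$ from the joint input $(\xb, y) \in \RR^{D+1}$ via a sparse linear layer (this is absorbed into the first weight matrix of $\widetilde{\times}_{\rm sc}$), and outputs the approximation of $x_i y$.

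Next, I would assemble these $D$ subnetworks into a single ReLU network of depth $L_0$ by placing their weight matrices block-diagonally at each layer (together with the selector rows that extract $(x_i,y)$ from the shared input). Since all blocks have identical depth $L_0$ and are synchronized layer-by-layer, no extra identity padding is needed. Let the output be $\widetilde{\times}(\xb, y) \in \RR^D$ with $(\widetilde{\times}(\xb,y))_i = \widetilde{\times}_{\rm sc}(x_i, y)$. Counting resources: the total depth is $L_0 = O(\log \varepsilon^{-1})$; the width is at most $6D$ (six neurons per block); the total number of nonzero parameters is $D \cdot O(\log \varepsilon^{-1}) = O(D \log \varepsilon^{-1})$; and every parameter inherits the bound $B^2$ from Lemma \ref{lem.multiplication0}.

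Finally, I would verify the two required properties. For the approximation bound, for any $\xb$ with $\|\xb\|_\infty \leq B$ and $|y| \leq B$,
\begin{align*}
\|\widetilde{\times}(\xb, y) - y\xb\|_\infty
= \max_{i} \bigl| \widetilde{\times}_{\rm sc}(x_i, y) - x_i y \bigr|
< \varepsilon,
\end{align*}
using the per-coordinate guarantee of Lemma \ref{lem.multiplication0}. For the zero-preservation property, if $x_i = 0$ then $\widetilde{\times}_{\rm sc}(x_i, y) = \widetilde{\times}_{\rm sc}(0, y) = 0$, so $(\widetilde{\times}(\xb,y))_i = 0$; similarly if $y = 0$ then every output coordinate vanishes. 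Since the construction is a routine parallelization, there is no real obstacle; the only minor point to be careful about is ensuring the shared scalar input $y$ is duplicated to $D$ blocks cleanly, which is handled by the selector rows in the first layer and does not affect the stated bounds.
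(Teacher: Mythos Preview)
Your proposal is correct and follows exactly the approach the paper takes: the paper simply states that Lemma \ref{lem.multiplication} is proved by stacking $D$ copies of the scalar network from Lemma \ref{lem.multiplication0} in parallel and omits the details. Your write-up supplies precisely those details (block-diagonal assembly, selector rows for the shared $y$, coordinate-wise error and zero-preservation), so there is nothing to change.
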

	Lemma \ref{lem.multiplication} can be proved by stacking $D$ networks defined in Lemma \ref{lem.multiplication0}. The proof is omitted here.
	
	Let $\widetilde{\times}$ be the network defined in Lemma \ref{lem.multiplication} with accuracy $\varepsilon_2$.
	We construct the decoder $\widetilde{\scrD}$ as
	\begin{align*}
		\widetilde{\scrD}(\zb)=\sum_{j=1}^{C_{\cM}} \widetilde{\times}\left(\widetilde{g}_j(\zb),\bar{A}_j\zb\right),
	\end{align*}
	where $\bar{A}_j\in \RR^{1\times C_{\cM}(d+1)}$ is a weight matrix defined by
	\begin{align*}
		(\bar{A}_j)_{1,k}=\begin{cases}
			1 & \mbox{ if } k=j(d+1),\\
			0 & \mbox{ otherwise}.
		\end{cases}
	\end{align*}
 Let $\Omega_{\zb}=([-\tau/4,\tau/4]\times([0,1]))^{C_{\cM}}$.
	Setting $\varepsilon_2=\frac{\varepsilon_3}{2C_{\cM}}$, we deduce that 
	\begin{align}
		\sup_{\zb\in \Omega_{\zb}}\|\widetilde{\scrD}(\zb)-\scrD(\zb)\|_{\infty}\leq &\sup_{\zb\in \Omega_{\zb}}\sum_{j=1}^{C_{\cM}} \left\|\widetilde{\times}\left(\widetilde{g}_j(\zb),\bar{A}_j\zb\right) - \phi_j^{-1}((\zb_j)_{1:d})\times (\zb_j)_{d+1}\right\|_{\infty} \nonumber\\
		=&\sup_{\zb\in \Omega_{\zb}}\sum_{j=1}^{C_{\cM}} \left\|\widetilde{\times}\left(\widetilde{g}_j(\zb),\bar{A}_j\zb\right) - \phi_j^{-1}(A_j\zb)\times (\bar{A}_j\zb)\right\|_{\infty} \nonumber\\
		\leq &\sup_{\zb\in \Omega_{\zb}}\sum_{j=1}^{C_{\cM}} \Big( \left\|\widetilde{\times}\left(\widetilde{g}_j(\zb),\bar{A}_j\zb\right) - \widetilde{g}_j(\zb)\times (\bar{A}_j\zb)\right\|_{\infty}  \nonumber \\
		& +  \left\|\widetilde{g}_j(\zb)\times  (\bar{A}_j\zb)- \phi_j^{-1}(A_j\zb)\times (\bar{A}_j\zb)\right\|_{\infty}\Big) \nonumber\\
		\leq & \sum_{j=1}^{C_{\cM}} (\varepsilon_2+\varepsilon_2) \nonumber\\
		= & 2C_{\cM}\varepsilon_2 \nonumber\\
		=&\varepsilon_3.
		\label{eq.D.err}
	\end{align}
	
	\noindent $\bullet$ {\bf Error estimation of $\widetilde{\scrG}$.}
	We have 
	\begin{align}
		\sup_{\xb\in \cM(q)}\|\widetilde{\scrG}(\xb)-G(\xb)\|_{\infty}= & \sup_{\xb\in \cM(q)}\|\widetilde{\scrD}\circ\widetilde{\scrE}(\xb)-\scrD\circ\scrE(\xb)\|_{\infty} \nonumber\\
		\leq &\sup_{\xb\in \cM(q)}\|\widetilde{\scrD}\circ\widetilde{\scrE}(\xb)-\scrD\circ\widetilde{\scrE}(\xb)\|_{\infty} + \sup_{\xb\in \cM(q)}\|\scrD\circ\widetilde{\scrE}(\xb)-\scrD\circ\scrE(\xb)\|_{\infty} \nonumber\\
		\leq & \varepsilon_3+ \sup_{\xb\in \cM(q)}\|\scrD\circ\widetilde{\scrE}(\xb)-\scrD\circ\scrE(\xb)\|_{\infty}.
		\label{eq.multi.G.err.1}
	\end{align}
	We next derive an upper bound of the second term in (\ref{eq.multi.G.err.1}). Recall the definition of $\fb_j$ and $\widetilde{\fb}_j$ in (\ref{eq.ecoder.multi}) and (\ref{eq.ftilde}), respectively.
	Plugin the expression of $\scrD$ into the second term in (\ref{eq.multi.G.err.1}), we have
	\begin{align}
		\sup_{\xb\in \cM(q)}\|\scrD\circ\widetilde{\scrE}(\xb)-\scrD\circ\scrE(\xb)\|_{\infty} = &\sup_{\xb\in \cM(q)}\left\|\sum_{j=1}^{C_{\cM}} \phi_j^{-1}((\widetilde{\fb}_j)_{1:d})\times (\widetilde{\fb}_j)_{d+1} -\sum_{j=1}^{C_{\cM}} \phi_j^{-1}((\fb_j)_{1:d})\times (\fb_j)_{d+1}\right\|_{\infty} \nonumber\\
		\leq & \sup_{\xb\in \cM(q)}\sum_{j=1}^{C_{\cM}} \left\| \phi_j^{-1}((\widetilde{\fb}_j)_{1:d})\times (\widetilde{\fb}_j)_{d+1}- \phi_j^{-1}((\fb_j)_{1:d})\times (\fb_j)_{d+1} \right\|_{\infty} \nonumber\\
		\leq & \sup_{\xb\in \cM(q)}\sum_{j=1}^{C_{\cM}} \bigg(\left\| \phi_j^{-1}((\widetilde{\fb}_j)_{1:d})\times (\widetilde{\fb}_j)_{d+1}- \phi_j^{-1}((\widetilde{\fb}_j)_{1:d})\times (\fb_j)_{d+1} \right\|_{\infty} \nonumber\\
		&\hspace{1cm} + \left\| \phi_j^{-1}((\widetilde{\fb}_j)_{1:d})\times (\widetilde{\fb}_j)_{d+1}- \phi_j^{-1}((\fb_j)_{1:d})\times (\fb_j)_{d+1} \right\|_{\infty} \bigg) \nonumber\\
		\leq & \sup_{\xb\in \cM(q)}\sum_{j=1}^{C_{\cM}} \bigg(\|\phi_j^{-1}((\widetilde{\fb}_j)_{1:d})\|_{\infty}\|(\widetilde{\fb}_j)_{d+1}-(\fb_j)_{d+1}\|_{\infty} \nonumber\\
		&\hspace{1cm}  + 2\|(\fb_j)_{d+1}\|_{\infty}\|(\widetilde{\fb}_j)_{1:d}-(\fb_j)_{1:d}\|_2
		\bigg) \nonumber\\
		\leq &  \sup_{\xb\in \cM(q)}\sum_{j=1}^{C_{\cM}} \left(\frac{\tau}{4}\|(\widetilde{\fb}_j)_{d+1}-(\fb_j)_{d+1}\|_{\infty} + 2\|(\widetilde{\fb}_j)_{1:d}-(\fb_j)_{1:d}\|_2\right) \nonumber\\
		\leq & \sup_{\xb\in \cM(q)}\sum_{j=1}^{C_{\cM}} \left(\frac{\tau}{4}\|\widetilde{\rho}_j(\xb)-\rho_j(\xb)\|_{\infty} + 2\|\widetilde{\phi}_j(\xb)-\phi_j\circ\pi(\xb)\|_2\right) \nonumber\\
		\leq & \sum_{j=1}^{C_{\cM}} \left(\frac{\tau}{4}\varepsilon_1+2\sqrt{d}\varepsilon_1\right) \nonumber\\
		\leq & C_{\cM}\left(\frac{\tau}{4}+2\sqrt{d}\right)\varepsilon_1.
		\label{eq.multi.G.err.term2}
	\end{align}
	In the above, we used $\phi_j^{-1}$ is Lipschitz continuous in the third inequality, $\phi_j^{-1}$ is bounded by $\frac{\tau}{4}$  and $|\rho_j|\leq 1$ in the fourth inequality, the definition of $\widetilde{\fb}_j$, $\fb_j$ ((\ref{eq.ftilde}) and (\ref{eq.ecoder.multi}), respectively) in the fifth inequality, and (\ref{eq.multi.phi}) and (\ref{eq.multi.rho}) in the sixth inequality.
	
	Substituting (\ref{eq.multi.G.err.term2}) into (\ref{eq.multi.G.err.1}) and setting $\varepsilon_1=\frac{\varepsilon}{2C_{\cM}\left(\frac{\tau}{4}+2\sqrt{d}\right)}$ and $\varepsilon_3=\varepsilon/2$ give rise to 
	\begin{align*}
		\sup_{\xb\in\cM(q)}\|\widetilde{\scrG}(\xb)-G(\xb)\|_{\infty}\leq \frac{\varepsilon}{2}+  \frac{\varepsilon}{2}=\varepsilon.
	\end{align*}
	
	\noindent $\bullet$ {\bf Network architectures.}
	For $\widetilde{\scrE}$, we have $\widetilde{\scrE}\in \cF^{\scrE}(D,C_{\cM}(d+1);L_{\scrE}, p_{\scrE}, K_{\scrE}, \kappa_{\scrE}, R_{\scrE})$ with
	\begin{align*}
		&L_{\scrE}=O(\log^2 \varepsilon^{-1}+\log D), \ p_{\scrE}=O(D\varepsilon^{-d}), \ K_{\scrE}=O((D\log D)\varepsilon^{-d}\log^2 \varepsilon^{-1}), \nonumber\\
		& \kappa_{\scrE}=O(\varepsilon^{-2}), \ R_{\scrE}=\tau/4.
	\end{align*}
	The constant hidden in $O$ depends on $d,\tau,q,B,C_{\cM}$ and the volume of $\cM$.
	
	For $\widetilde{\scrD}$, it consists of $\widetilde{\times}$ and $\widetilde{g}_j$:
	\begin{itemize}
		\item $\widetilde{\times}$: It has depth of $O(\log \varepsilon^{-1})$, width bounded by $6D$ and $O(D\log \varepsilon^{-1})$ parameters. All parameters are bounded by $B^2$.
		\item $\widetilde{g}_j$: It has depth $O(\log^2 \varepsilon^{-1}) +\log D$, width of $O(D\varepsilon^{-d})$ and $O(D\varepsilon^{-d} \log^2 \varepsilon +D\log D)$. All weight parameters are bounded by $\varepsilon^{-2}$.
	\end{itemize}
	Therefore, we have $\widetilde{\scrD}\in \cF^{\scrD}(C_{\cM}(d+1),D;L_{\scrD}, p_{\scrD}, K_{\scrD}, \kappa_{\scrD}, R_{\scrD})$ with
	\begin{align*}
		&L_{\scrD}=O(\log^2 \varepsilon^{-1}+\log D), \ p_{\scrD}=O(D\varepsilon^{-d}), \ K_{\scrD}=O(D\varepsilon^{-d} \log^2 \varepsilon +D\log D), \nonumber\\
		& \kappa_{\scrD}=O(\varepsilon^{-1}), \ R_{\scrD}=B.
	\end{align*}
	The constant hidden in $O$ depends on $d,\tau,q,B,C_{\cM}$ and the volume of $\cM$.
\end{proof}

\subsection{Proof of Lemma \ref{lem.Mcover}}
\label{proof.Mcover}
\begin{proof}[Proof of Lemma \ref{lem.Mcover}]
    Denote the packing number of $\cM$ using sets in the form of $\cM\cap B_r(\cbb)$ for $\cbb\in\cM$ by $P_{\cM}(r)$. By \citet[Lemma 5.3]{niyogi2008finding}, we have
    \begin{align}
        P_{\cM}(r)\leq \frac{|\cM|}{\cos^d(\arcsin \frac{r}{2\tau}) |B_r^d|}.
    \end{align}
    According to \citet[Lemma 5.2]{niyogi2008finding}, we get
    \begin{align}
        C_{\cM}\leq P_{\cM}(r/2)\leq \frac{|\cM|}{\cos^d(\arcsin \frac{r}{4\tau}) |B_r^d|}=O(r^{-d}).
    \end{align}
\end{proof}

\subsection{Proof of Lemma \ref{lem.rho}}
\label{proof.rho}
\begin{proof}[Proof of Lemma \ref{lem.rho}]
	Define $\eta(\xb)=\begin{bmatrix}
		\eta_1(\xb) & \cdots & \eta_{c}(\xb)
	\end{bmatrix}^{\top}$. By \citet[Lemma 15]{cloninger2021deep}, for any $0<\varepsilon<1$, there exists a network architecture $\cF(D,c;L_1,p_1,K_1,\kappa_1,1)$ giving rise to a network $\widetilde{\eta}$ so that
	\begin{align*}
		\sup_{\xb\in \cM(q)} \|\widetilde{\eta}(\xb)-\eta(\xb)\|_{1}\leq \varepsilon.
	\end{align*}
	Such a network has
	\begin{align*}
		L_1=O(\log^2 \varepsilon^{-1} + \log D), \ p_1=O(D\varepsilon^{-1}), \ K_1=O((D\log D)\varepsilon^{-1}\log^2 \varepsilon^{-1}), \kappa_1=O(\varepsilon^{-2}).
	\end{align*}
	The constant hidden in $O$ depends on $c,d,\tau,q,B,C_{\cM}$ and the volume  of $\cM$.
	
	Note that $\rho_j$ is the sum of the elements in the output of $\eta$ that have index in $\cI_j$. We construct $\widetilde{\rho}_j$ by appending $\widetilde{\eta}$ by a layer summing up the corresponding elements. Let $\wb\in \RR^c$ so that $\wb_k=1$ if $k\in \cI_j$, and $\wb_k=0$ otherwise. The definition of $\wb$ is an indicator for the $k$'s in $I_j$. For different $j$'s, we have different $I_j$'s and thus different $\wb$'s. We construct $\widetilde{\rho}_j$ as
	\begin{align*}
		\widetilde{\rho}_j(\xb)=\wb\cdot \ReLU(\widetilde{\eta}(\xb)).
	\end{align*}
	We have $\widetilde{\rho}_j\in \cF(1,L,p,K,\kappa,1)$ for $\cF(1,L,p,K,\kappa,1)$ defined in Lemma \ref{lem.rho} and
	\begin{align*}
		\sup_{\xb\in \cM(q)}|\widetilde{\rho}_j-\rho_j| =& \sup_{\xb\in \cM(q)} \sum_{k\in \cI_j} |\widetilde{\eta}_k-\eta_k| \nonumber\\
		\leq &\sup_{\xb\in \cM(q)} \sum_{k=1}^c |\widetilde{\eta}_k-\eta_k| \nonumber\\
		\leq & \varepsilon.
	\end{align*}

\end{proof}

\end{document}